\newcommand{\mathbbm}[1]{\mathds{#1}}
\newcolumntype{P}[1]{>{\centering\arraybackslash}p{#1}}
\newaliascnt{appsec}{section}
\crefname{appsec}{appendix}{appendices}
\Crefname{appsec}{Appendix}{Appendices}
\DeclareMathOperator*{\argmin}{arg\,min}
\DeclarePairedDelimiter{\inprod}{\langle}{\rangle}
\newcommand{\e}{\varepsilon}
\newcommand{\OMDalg}{\texttt{LoOT-Free OMD}}
\newcommand{\FTRLalg}{\texttt{LoOT-Free FTRL}}
\newcommand{\simplex}{\mathcal{P}}
\newcommand{\trunc}{\mathtt{trunc}}
\renewcommand{\tilde}{\widetilde}
\newcommand{\half}{\tfrac{1}{2}}
\newcommand{\id}{\mathbbm{1}}
\renewcommand{\ln}{\log}
\DeclareMathOperator{\E}{\mathbb E}
\newcommand{\custombar}[3]{%
    \mathrlap{\hspace{#2}\overline{\scalebox{#1}[1]{\phantom{\ensuremath{#3}}}}}\ensuremath{#3}
}
\newcommand{\bbE}{\mathbb{E}}
\newcommand{\bbP}{\mathbb{P}}
\newcommand{\bbR}{\mathbb{R}}
\newcommand{\bbRnnK}{\mathbb{R}_{\geq 0}^K}
\newcommand{\bbRpK}{\mathbb{R}_{>0}^K}
\newcommand{\bbV}{\mathbb{V}}
\newcommand{\bfy}{\mathbf{z}}
\newcommand{\bfyt}{\bfy_t}
\newcommand{\bfyti}{\bfy_{t, i}}
\newcommand{\bfytis}{\bfy_{t, i^\star}}
\newcommand{\bfytj}{\bfy_{t, j}}
\newcommand{\bfytk}{\bfy_{t, k}}
\newcommand{\barbfyt}{\custombar{.8}{.5pt}{\bfy}_t}
\newcommand{\calF}{\mathcal{F}}
\newcommand{\calL}{\mathcal{L}}
\newcommand{\calO}{\mathcal{O}}
\newcommand{\calP}{\mathcal{P}}
\newcommand{\sprt}[1]{\left( #1 \right)}
\newcommand{\sbrk}[1]{\left[ #1 \right]}
\newcommand{\scbrk}[1]{\left\{ #1 \right\}}
\DeclareMathOperator{\interior}{int}
\DeclareMathOperator{\Diag}{Diag}
\DeclareMathOperator{\Frechet}{Fr\acute{e}chet}
\newcommand{\inp}[1]{\left\langle #1 \right\rangle}
\newcommand{{\transpose}}{^\mathsf{\scriptscriptstyle T}}
\newcommand{\given}{\!\mid\!}
\newcommand{\abs}[1]{\left| #1 \right|}
\newcommand{\norm}[1]{\left\| #1 \right\|}
\newcommand{\sumK}{\sum_{i=1}^K}
\newcommand{\sumjK}{\sum_{j=1}^K}
\newcommand{\sumT}{\sum_{t=1}^T}
\newcommand{\sumt}{\sum_{s=1}^t}
\DeclareRobustCommand\onedot{\futurelet\@let@token\@onedot}
\def\@onedot{\ifx\@let@token.\else.\null\fi\xspace}
\def\eg{\textit{e.g}\onedot}
\def\ie{\textit{i.e}\onedot}
\theoremstyle{plain}
\newtheorem{theorem}{Theorem}[section]
\newtheorem{proposition}[theorem]{Proposition}
\newtheorem{lemma}[theorem]{Lemma}
\newtheorem{corollary}[theorem]{Corollary}
\theoremstyle{definition}
\newtheorem{assumption}[theorem]{Assumption}
\theoremstyle{remark}
\crefname{theorem}{theorem}{theorems}
\Crefname{theorem}{Theorem}{Theorems}
\crefname{lemma}{lemma}{lemmas}
\Crefname{lemma}{Lemma}{Lemmas}
\crefname{corollary}{corollary}{corollaries}
\Crefname{corollary}{Corollary}{Corollaries}
\crefname{assumption}{assumption}{assumptions}
\Crefname{assumption}{Assumption}{Assumptions}
\crefname{definition}{definition}{definitions}
\Crefname{definition}{Definition}{Definitions}
\crefname{proposition}{proposition}{propositions}
\Crefname{proposition}{Proposition}{Propositions}
\DeclareRobustCommand{\VAN}[3]{#2} 
\title{When Lower-Order Terms Dominate: Adaptive Expert Algorithms for Heavy-Tailed Losses}
\newcommand*\samethanks[1][\value{footnote}]{\footnotemark[#1]}
\author{%
  Antoine Moulin\thanks{Equal contribution.}\\
  Universitat Pompeu Fabra\\
  \texttt{antoine.moulin@upf.edu}\\
  \And
  Emmanuel Esposito\samethanks\\
  Università degli Studi di Milano\\
  \texttt{emmanuel@emmanuelesposito.it}\\
  \And
  Dirk van der Hoeven\\
  Leiden University\\
  \texttt{dirk@dirkvanderhoeven.com}\\
}
\begin{document}

\maketitle

\begin{abstract}
    We consider the problem setting of prediction with expert advice with possibly heavy-tailed losses, i.e.\ the only assumption on the losses is an upper bound on their second moments, denoted by $\theta$. We develop adaptive algorithms that do not require any prior knowledge about the range or the second moment of the losses. Existing adaptive algorithms have what is typically considered a lower-order term in their regret guarantees. We show that this lower-order term, which is often the maximum of the losses, can actually dominate the regret bound in our setting. Specifically, we show that even with small constant $\theta$, this lower-order term can scale as $\sqrt{KT}$, where $K$ is the number of experts and $T$ is the time horizon. We propose adaptive algorithms with improved regret bounds that avoid the dependence on such a lower-order term and guarantee $\mathcal{O}(\sqrt{\theta T\log(K)})$ regret in the worst case, and $\mathcal{O}(\theta \log(KT)/\Delta_{\min})$ regret when the losses are sampled i.i.d.\ from some fixed distribution, where $\Delta_{\min}$ is the difference between the mean losses of the second best expert and the best expert. Additionally, when the loss function is the squared loss, our algorithm also guarantees improved regret bounds over prior results.
\end{abstract}

\section{Introduction}

We study the problem of prediction with expert advice \citep{vovk1990aggregating, littlestone1994weighted}, a sequential decision-making setting over $T$ rounds where each round $t$ goes as follows. The learner selects a probability distribution $p_t \in \simplex = \bigl\{p \in \bbR^K\colon p(i) \ge 0, \langle p_t,\mathbbm{1}\rangle = 1\bigr\}$ over $K$ experts, suffers some loss $f_t(p_t) \in \bbR$, and subsequently observes the loss function $f_t\colon \simplex \to \bbR$. The goal in this problem, also known more briefly as the experts setting, is to control the (pseudo-)regret
\begin{align*}
    R_T = \max_{i \in \sbrk{K}} R_T \sprt{e_i} = \max_{i \in \sbrk{K}} \bbE \sbrk{\sumT \sprt{f_t \sprt{p_t} - f_t \sprt{e_i}}}\,,
\end{align*}
where the expectation is with respect to the randomness in $(f_t)_{t \in [T]}$, and $e_i \in \simplex$ is the $i$-th standard basis vector. The pseudo-regret measures the expected difference between the learner's cumulative loss and that of the best expert in hindsight. Throughout the paper, we assume that either the losses or the \textit{outcomes} are heavy tailed, and provide a concrete answer to the concluding remarks of \citet{mhammedi2019lipschitz} that highlight the challenge of dealing with infrequent large values in the full-information setting. Such conditions arise in various practical settings involving noisy data, outliers, or mechanisms like those in local differential privacy \citep{van2019user}. Further motivations for our setting can be found in financial markets \citep{bradley2003financial}, electricity forecasting \citep{li2019use, devaine2013forecasting}, and predicting views on articles or videos (see, \eg, \citet{cha2007tube}).

A primary goal of this paper is to develop algorithms that achieve sub-linear regret without prior knowledge of a bound on the second moment of the losses, outcomes, or range of the losses $\max_{t \in \sbrk{T}, p \in \simplex} \abs{f_t \sprt{p}}$. We will refer to such algorithms as \textit{loss-range adaptive algorithms}. Crucially, unlike the related multi-armed bandit problem, where adaptivity to an unknown second moment is generally impossible without further assumptions \citep{genalti2024adaptive}, we demonstrate this is achievable in the experts setting. Our second goal is to aim for algorithms that exhibit strong guarantees both in worst-case scenarios and in more benign stochastic environments, often referred to as \emph{best-of-both-worlds} guarantees.

While several loss-range adaptive algorithms exist for linear losses, $f_t \sprt{p} = \inp{p, \ell_t} = \sumK p(i) \ell_t(i)$ for some $\ell_t \in \bbR^K$, they are not designed to handle heavy-tailed losses \citep{blackwell1956analog,cesa2007improved,derooij2014follow,orabona2015scale,mhammedi2019lipschitz}. Standard algorithms achieve $\calO \sprt{M \sqrt{T \log K}}$ regret for losses bounded as $ \max_{t, i} \abs{\ell_t(i)} \leq M$. However, if the losses are drawn from some distribution supported on $\bbR$ and such that $\max_{t, i} \bbE\bigl[\ell_t(i)^2\bigr] \le \theta$ for some $\theta > 0$, the regret guarantees of existing algorithms can degrade to $\calO\bigl(\sqrt{KT}\bigr)$, which is exponentially worse in $K$. The issue lies in what prior work consider a lower-order term, namely $\max_{t, i} \abs{\ell_t(i)}$. Indeed, while this term is innocuous for bounded losses, we demonstrate that for heavy-tailed losses with second moment $\theta = \calO(1)$, it can be as large as $\Omega\bigl(\sqrt{KT}\bigr)$, thereby becoming the dominant factor in the regret bound.

This paper introduces adaptive algorithms that overcome this limitation. Specifically:
\begin{enumerate}[leftmargin=20pt, topsep=0pt, itemsep=1pt]
    \item For linear losses, under \Cref{asp:finite-second-moment}, \Cref{alg:FTRL} achieves $\calO(\sqrt{\theta T \log(K)})$ worst-case regret, effectively removing the detrimental $\sqrt{K}$ dependency.

    \item Under \Cref{asp:finite-second-moment}, \Cref{alg:omd-alg} achieves $\calO(\sqrt{\theta T \log(KT)})$ worst-case regret, while also providing a $\calO(\theta \log(KT)/\Delta_{\min})$ regret in self-bounded environments (\Cref{asp:self-bounding-env}), where $\Delta_{\min}$ is the gap in expected loss between the two best experts. This contrasts with prior algorithms whose guarantees would still be hampered by the $\calO\bigl(\sqrt{KT}\bigr)$ term.

    \item For the squared loss, $f_t \sprt{p} = \sprt{\inp{p, \bfyt} - y_t}^2$, where expert predictions $\bfyti$ are bounded by $Y$ and the second moment of the outcomes $y_t$ is bounded by $\sigma$, \Cref{alg:squared-alg} achieves a regret of $\calO \sprt{\sprt{Y^2 + \sigma} \log \sprt{K}}$, avoiding a $\calO \sprt{T}$ regret that can arise for heavy-tailed outcomes.
\end{enumerate}

The remainder of the paper is structured as follows: Section~\ref{sec:prelim} presents the problem settings and our main results. Section~\ref{sec:related work} discusses related work in more detail. In Section~\ref{sec:lowerbounds}, we establish why terms previously considered ``lower-order'' become dominant under heavy-tailed losses for existing algorithms. Section~\ref{sec:algorithms} presents our algorithms and sketches their regret analyses, highlighting the techniques used to circumvent the challenges posed by the lack of control over the range of losses. We conclude with a discussion of future work in Section~\ref{sec:conclusion}.

\section{Preliminaries and Results}\label{sec:prelim}

We consider two settings. The first, sometimes referred to as the ``Hedge setting'' \citep{littlestone1994weighted}, can be seen as a special case of the experts setting \citep{vovk1990aggregating} with linear losses. In the second, we consider quadratic losses. Throughout the paper, we assume $K, T \ge 2$ without loss of generality.

\subsection{Hedge Setting}

In this section, each round $t = 1, \ldots, T$,  goes as follows: the learner issues $p_t \in \simplex$, suffers loss $\inp{p_t, \ell_t}$, and then observes $\ell_t \in \bbR^K$.
Let $\bbE_t \sbrk{\cdot} = \bbE \sbrk{\cdot \given \calF_t}$, where $\calF_t$ is the sigma-field generated by the history $(\ell_1, \dots, \ell_{t-1})$ up to round $t$. We will use the following assumption in the Hedge setting.
\begin{assumption}[finite second moments]
    \label{asp:finite-second-moment}
    There exists a scalar $\theta \in \bbR$ such that $\bbE_t\bigl[\ell_t \sprt{i}^2\bigr] \leq \theta$ for any $t \in \sbrk{T}$ and any $i \in \sbrk{K}$.
\end{assumption}
This assumption is more general than the standard bounded losses assumption: if $|\ell_t(i)| \leq M$ for all $i \in \sbrk{K}$, $t \in \sbrk{T}$, then \Cref{asp:finite-second-moment} is satisfied with $\theta = M^2$. We \textit{do not assume} to know $\theta$, which is the central algorithmic challenge in the Hedge setting. One of our main results is the following. 
\begin{theorem}\label{th:introadv}
    Consider the Hedge setting and suppose \Cref{asp:finite-second-moment} holds. Then, there exists an algorithm that, without prior information on the losses, guarantees that $R_T = \calO\bigl(\sqrt{\theta T \log(K)}\bigr)$.
\end{theorem}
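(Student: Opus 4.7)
The plan is to combine loss truncation with an adaptive online mirror descent (OMD) algorithm over the simplex, regularized by the negative entropy. Since the losses are unbounded but have bounded conditional second moments, I would introduce a data-dependent threshold $\tau_t$, measurable with respect to $\calF_t$, and play OMD on the truncated losses $\tilde{\ell}_t(i) = \sign(\ell_t(i)) \cdot \min\{|\ell_t(i)|, \tau_t\}$. A natural choice is to let $\tau_t$ grow with observed maxima $\max_{s<t} \|\ell_s\|_\infty$ through a doubling scheme, or to scale it proportionally to $\sqrt{t}$ times a running variance estimate. The learning rate $\eta_t$ would be tuned adaptively to the observable second-order quantity $\sum_{s<t} \inp{p_s, \tilde{\ell}_s^2}$, as in standard second-order adaptive Hedge.

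The proof would then decompose the regret against any expert $e_i$ as
\[
    \sumT \inp{p_t - e_i, \ell_t} = \underbrace{\sumT \inp{p_t - e_i, \tilde{\ell}_t}}_{\text{OMD regret on truncated losses}} \;+\; \underbrace{\sumT \inp{p_t - e_i, \ell_t - \tilde{\ell}_t}}_{\text{truncation bias}}\,.
\]
For the first term, a standard adaptive OMD analysis on bounded losses $\|\tilde{\ell}_t\|_\infty \le \tau_t$ gives a main contribution of the form $\calO\bigl(\sqrt{\log(K) \sumT \inp{p_t, \tilde{\ell}_t^2}}\bigr)$, up to a lower-order term involving $\max_t \tau_t \cdot \log(K)$. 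Taking expectations and using the tower rule with $\bbE_t\bigl[\tilde{\ell}_t(i)^2\bigr] \le \bbE_t\bigl[\ell_t(i)^2\bigr] \le \theta$, together with Jensen's inequality, yields $\calO\bigl(\sqrt{\theta T \log(K)}\bigr)$ for the main stochastic term. For the truncation bias, the Markov-style bound $\bbE_t\bigl[|\ell_t(i)| \cdot \mathbbm{1}\{|\ell_t(i)| > \tau_t\}\bigr] \le \bbE_t\bigl[\ell_t(i)^2\bigr] / \tau_t \le \theta / \tau_t$ controls each round's contribution.

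The main obstacle is choosing $\tau_t$ without knowing $\theta$: the threshold has to simultaneously keep the lower-order $\max_t \tau_t \cdot \log(K)$ term and the total truncation bias $\sum_t \theta / \tau_t$ both at rate $\calO\bigl(\sqrt{\theta T \log(K)}\bigr)$. Balancing would suggest $\tau_t \asymp \sqrt{\theta T / \log(K)}$, which is not directly available; instead, one must use a plug-in surrogate based on the observed maximum or on a running empirical second moment, and close the resulting circular dependence via an inductive or doubling argument. A secondary subtlety is the predictability requirement---$\tau_t$ must be $\calF_t$-measurable, so it may only depend on past rounds, typically incurring a harmless one-round lag. Combining the three pieces---adaptive OMD regret on $\tilde{\ell}_t$, variance control via \Cref{asp:finite-second-moment}, and a carefully calibrated threshold---yields the claimed $\calO\bigl(\sqrt{\theta T \log(K)}\bigr)$ bound.
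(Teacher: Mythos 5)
Your decomposition (truncated-OMD regret $+$ truncation bias) is structurally similar in spirit, but the paper's construction and analysis are genuinely different and, importantly, resolve exactly the issue your sketch leaves open. The paper does \emph{not} use a uniform threshold $\tau_t$ with a Markov-type bias bound; instead it runs FTRL (Algorithm~\ref{alg:FTRL}) on the instantaneous regrets $-r_t(i)$ with a \emph{multi-scale} Bregman regularizer and \emph{per-coordinate} learning rates $\eta_{t,i} = \beta / b_t(i)$, $b_t(i) = \max\{\sum_{s\le t}\bar v_s,\ \sum_{s\le t} v_s(i)\}^{1/2}$, and the clipping threshold for coordinate $i$ is taken to be exactly $1/\eta_{t,i}$. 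The crucial consequence is that the clipping cost on each coordinate is bounded \emph{pathwise} by
\begin{equation*}
    |r_t(i)|\,\mathbbm{1}\bigl(|r_t(i)| > 1/\eta_{t,i}\bigr) \le \eta_{t,i}\, v_t(i)\,,
\end{equation*}
which sums (via the usual $\sum_t a_t/\sqrt{\sum_{s\le t}a_s} \le 2\sqrt{\sum_t a_t}$ argument) to $\calO(\sqrt{V_T(i)})$ and then to $\calO(\sqrt{\theta T})$ in expectation. No $\theta$-dependent threshold ever needs to be chosen, so there is no circular dependence to close.

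The gap in your proposal is precisely the step you flag as ``one must use a plug-in surrogate \dots\ and close the resulting circular dependence via an inductive or doubling argument.'' This is not a technicality one can wave away: it is the central difficulty the paper is about, and the obvious plug-ins fail. If you set $\tau_t$ from the running observed maximum $\max_{s<t}\|\ell_s\|_\infty$ (the Cutkosky-style recipe you mention), the lower-order term $\max_t \tau_t \log K$ is $L_T\log K$ in expectation, and \Cref{prop:linfsucks} shows $L_T = \Omega(\sqrt{KT})$ even with $\theta = O(1)$, which is exactly the pathology Theorem~\ref{th:introadv} is designed to avoid. A doubling scheme on an estimate of $\theta$ has its own problems: the natural statistic $\frac1t\sum_{s\le t}\bar v_s$ does not concentrate (its summands are themselves heavy-tailed, having only one finite moment under \Cref{asp:finite-second-moment}), so the number of restarts and the regret incurred during the rounds where the guess is wrong are not controlled; and a running second-moment estimate computed from \emph{truncated} losses systematically underestimates $\theta$, so the truncation bias $\sum_t \theta/\tau_t$ is not actually bounded by the quantities you observe. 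In short, your balancing $\tau_t \asymp \sqrt{\theta T/\log K}$ is correct as a target, but you have not produced a $\calF_t$-measurable surrogate that hits it, and the natural candidates provably do not.

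Two further differences from the paper that matter for the exact constant claimed. First, with a single learning rate and uniform truncation the Bernstein-type second-order Hedge analysis yields the $\max_t \tau_t \log K$ lower-order term you note, whereas the paper's multi-scale regularizer plus FTRL yields a different ``lower-order'' term, $\frac{1}{K}\sum_i \sqrt{V_T(i)}$, which under \Cref{asp:finite-second-moment} is $\calO(\sqrt{\theta T})$ in expectation with no $\log K$ factor and no dependence on a max. Second, the paper applies the whole machinery to the centered sequence $r_t(i) = \inp{p_t,\ell_t} - \ell_t(i)$ rather than to $\ell_t(i)$ directly; this makes $\bar v_t$ a genuine variance, bounded by $\theta$, and ties the clipping cost directly to the data-dependent quantities that also tune the learning rates.
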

This is achieved by \Cref{alg:FTRL}, which we discuss later in \Cref{sec:ftrlanalysis}. The proof can be found in \Cref{app:analysisFTRL}. While such a result seems expected, we found that loss-range adaptive algorithms in the literature can only guarantee $\calO\bigl(\sqrt{\theta T \log(K)} + \sqrt{KT}\bigr)$ regret. We detail prior results in \Cref{sec:related work} and their issues in \Cref{sec:lowerbounds}.

The following assumption allows us to obtain better regret bounds.
\begin{assumption}
    \label{asp:self-bounding-env}
    There exist $\Delta_{\min}, C > 0$ and a unique $i^\star = \argmin_{i \in [K]} \E\sbrk{\sumT\ell_t(i)}$ such that
    $
        R_T \ge \bbE\Bigl[\sumT \bigl(1 - p_t\sprt{i^\star}\bigr) \Delta_{\min}\Bigr] - C .
    $
\end{assumption}
This is known as a self-bounded environment \citep{Zimmert2021tsallis} and has been studied in the Hedge setting by \citet{amir2020prediction}. An example setting in which \Cref{asp:self-bounding-env}  is satisfied is when the losses of the experts are sampled identically and independently from a fixed distribution, in which case $C = 0$. To see why, denote by $\Delta_i = \E[\ell_t(i)] - \E[\ell_t(i^\star)]$ and by $\Delta_{\min} = \min_{i \in [K] \setminus i^\star} \Delta_i$. We have that
\begin{align*}
	R_T = \E\left[\sum_{t=1}^T\sum_{i \neq i^\star}p_t(i)\Delta_{i}\right] \geq \E\left[\sum_{t=1}^T\sum_{i \neq i^\star}p_t(i)\Delta_{\min}\right]  = \E\left[\sum_{t=1}^T\bigl(1 - p_t(i^\star)\bigr)\Delta_{\min}\right]
\end{align*}
and so \Cref{asp:self-bounding-env} is satisfied. 
 We have the following result under \Cref{asp:self-bounding-env}.
\begin{theorem}\label{th:omdhedge}
    Consider the Hedge setting and suppose \Cref{asp:finite-second-moment} holds. Then, there exists an algorithm that, without prior information on the losses, guarantees 
    $
        R_T = \calO\bigl(\sqrt{\theta T \log(KT)}\bigr) .
    $
    Furthermore, if \Cref{asp:self-bounding-env} also holds, then the same algorithm simultaneously guarantees
    $
        R_T = \calO\Bigl(\frac{\theta \log(KT)}{\Delta_{\min}} + \sqrt{\frac{C \theta \log(KT)}{\Delta_{\min}}}\Bigr) .
    $
\end{theorem}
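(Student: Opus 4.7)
The plan is to analyze the OMD algorithm (\Cref{alg:omd-alg}) on the simplex with the negative-entropy regularizer, an adaptive per-round learning rate, and a truncation mechanism that zeroes out loss coordinates exceeding a data-dependent threshold. Truncation is what removes the need to know $\theta$ in advance; the tradeoff is the extra $\log(T)$ factor relative to the FTRL bound of \Cref{th:introadv}, which arises from the stability analysis of OMD with a time-varying rate together with a union bound over the truncation events.

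For the first (worst-case) claim, I would begin from the standard local-norm OMD regret inequality with negative entropy,
\begin{align*}
    \sumT \bigl\langle p_t - e_{i^\star}, \tilde\ell_t \bigr\rangle \;\lesssim\; \frac{\log K}{\eta_T} + \sumT \eta_t \sumK p_t(i)\, \tilde\ell_t(i)^2\,,
\end{align*}
where $\tilde\ell_t$ denotes the truncated loss vector. Applying \Cref{asp:finite-second-moment} and Markov's inequality to control the truncation bias by $O(1)$ (for a polynomial-in-$T$ threshold) and taking expectations, the stability term is at most $\theta \sumT \eta_t$ in expectation. A self-confident tuning of the form $\eta_t \propto \sqrt{\log(KT) / \sum_{s<t} \sumK p_s(i)\, \tilde\ell_s(i)^2}$, combined with a sum-of-inverses lemma, then yields $R_T = O\bigl(\sqrt{\theta T \log(KT)}\bigr)$.

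For the second (self-bounded) claim, I would refine the stability term by recentering losses around the best expert's realization, replacing $\tilde\ell_t(i)$ by $\tilde\ell_t(i) - \tilde\ell_t(i^\star)$. The shift is a constant per round and leaves the regret invariant, but the quadratic term becomes $\sumK p_t(i) \bigl(\tilde\ell_t(i) - \tilde\ell_t(i^\star)\bigr)^2$, whose expectation is at most $4\theta\bigl(1 - p_t(i^\star)\bigr)$ by \Cref{asp:finite-second-moment}. This produces an intermediate bound of the form
\begin{align*}
    R_T \;\lesssim\; \sqrt{\theta \log(KT)\cdot \bbE\!\Bigl[\sumT \bigl(1 - p_t(i^\star)\bigr)\Bigr]}\,.
\end{align*}
Combining with \Cref{asp:self-bounding-env} gives $R_T + C \lesssim \sqrt{(R_T + C)\, \theta \log(KT)/\Delta_{\min}}$, and solving this quadratic in $R_T + C$ delivers the announced rate.

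The main obstacle is synchronizing the adaptation to an unknown $\theta$ with the self-bounding step. The truncation threshold must be small enough for the stability term to still be bounded by $\theta$ in expectation, yet large enough that the bias from clipping remains an $O(1)$ contribution; meanwhile, the threshold cannot depend on $\theta$ and must be purely data- and time-driven. On top of this, the recentering trick used in the self-bounded analysis interacts asymmetrically with one-sided truncation of two-signed losses, so the most delicate point is verifying that the refined $(1 - p_t(i^\star))\, \theta$ control of the variance survives both truncation and the adaptive choice of $\eta_t$.
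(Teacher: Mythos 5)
Your high-level plan (OMD with a self-confident learning rate, truncate the losses, then feed the resulting bound into a self-bounding argument) is the right one, but two load-bearing ingredients in the paper are missing or replaced by steps that do not go through.

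\paragraph{Scalar vs.\ coordinate-wise learning rates.} You propose plain negative entropy with a single learning rate $\eta_t$. The paper instead uses the multi-scale entropic regularizer of \citet{bubeck2019multi}, $\psi_t(x) = \sumK \eta_{t,i}^{-1} x_i \log x_i$, with \emph{coordinate-wise} rates
$\eta_{t,i} = \beta\,\max\bigl\{\sum_{s\le t}\bar v_s,\;\sum_{s\le t} v_s(i)\bigr\}^{-1/2}$,
and it truncates coordinate-wise at $1/\eta_{t,i}$. This is not cosmetic. With a scalar rate calibrated to $\sum_s\bar v_s = \sum_s\sum_i p_s(i) v_s(i)$, the truncation cost incurred by the \emph{comparator},
$\sumT |r_t(i^\star)|\,\id\bigl(|r_t(i^\star)| > 1/\eta_t\bigr) \le \sumT \eta_t\, v_t(i^\star)$,
has no $p_t(i^\star)$ factor, and the denominator $\bar V_t$ need not grow with $V_t(i^\star)$ — an adversary can place large variance on $i^\star$ while keeping $p_t(i^\star)$ and hence $\bar v_t$ small. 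The per-coordinate term $\sum_{s\le t} v_s(i)$ in $\eta_{t,i}$ exists precisely so that $\sumT \eta_{t,i^\star} v_t(i^\star) \le 2\beta\sqrt{V_T(i^\star)}$, which the scalar rate does not give you. The paper pays for this with the $\log(K/\alpha)/\eta_{T,i^\star}$ penalty (and needs the truncated simplex $\calP_\alpha$ with $\alpha = 1/T$ to make the telescoping Bregman terms work, which is where the $\log T$ actually comes from, not from a union bound over truncation events).

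\paragraph{The truncation threshold.} You suggest a "polynomial-in-$T$" threshold with the bias bounded by Markov's inequality. Either the threshold is fixed (e.g.\ $\tau = T^c$) — then for the local-norm stability you need $\eta_t \le 1/\tau$, so the penalty $\log(K)/\eta_T \ge T^c\log K$ swamps $\sqrt{\theta T\log(KT)}$ for $\theta = O(1)$ — or the threshold is $1/\eta_t$ with $\eta_t$ self-confident, in which case the bias is controlled deterministically by $|r_t(i)|\id(|r_t(i)|>1/\eta_{t,i}) \le \eta_{t,i} v_t(i)$, not by Markov, and you are back to the paper's data-adaptive, per-coordinate threshold. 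A $\theta$-free fixed threshold cannot simultaneously keep the penalty and the bias small for all $\theta$.

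\paragraph{The self-bounded step.} You recenter the stability term around $\ell_t(i^\star)$, which is fine as an analytic device for the \emph{stability} term (EW local norms allow any round-dependent shift), but the algorithm's truncation threshold and learning rate cannot depend on the unknown $i^\star$. The paper centers everything at the observable $\langle p_t,\ell_t\rangle$ by running OMD on the instantaneous regrets $-r_t$, and then in expectation shows both $\bbE_t[\bar v_t] \le 4\theta(1-p_t(i^\star))$ and $\bbE_t[v_t(i^\star)] \le 4\theta(1-p_t(i^\star))$. Your final quadratic in $R_T + C$ gives the right rate once you have the intermediate bound in terms of $\bbE[\sum_t(1-p_t(i^\star))]$, and that part matches the paper (modulo a cleaner $(1+\lambda)$/$\lambda$ weighting and AM-GM). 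But the intermediate bound itself requires the coordinate-wise machinery above; as written, the scalar-rate analysis does not deliver it.
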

This is achieved by \Cref{alg:omd-alg}, which we discuss in \Cref{sec:omdanalysis} together with a sketch of the proof. We provide a full proof in Appendix~\ref{app:omdanalysis}.

\subsection{Quadratic Losses}

Here we introduce a second setting we consider. In each round $t \in \sbrk{T}$, the learner observes the expert predictions $\sprt{\bfyti}_{i \in \sbrk{K}}$, issues a prediction $\barbfyt = \inp{p_t, \bfyt}$, suffers the quadratic loss $f_t \sprt{p_t} = \sprt{\barbfyt - y_t}^2$, and then observes $y_t$. We have the following result.
\begin{theorem}\label{th:squaredloss}
    Consider quadratic losses and suppose that $\max_{t, i} \abs{\bfyti} \le Y$ and $\max_t \bbE_t \sbrk{y_t^2} \le \sigma$. Then there exists an algorithm such that
    $
        R_T = \calO\sprt{\sprt{Y^2 + \sigma} \log\sprt{KT}} .
    $
\end{theorem}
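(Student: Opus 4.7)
The plan is to apply Vovk's aggregating algorithm (AA) for the squared loss with a learning rate tuned to the effective scale $Y^2 + \sigma$. Recall that $(z-y)^2$ is $\eta$-mixable with $\eta \asymp 1/B^2$ on $|z-y|\le B$, and the AA with exponential weights $p_t(i) \propto \exp\bigl(-\eta\sum_{s<t}(y_{s,i}-y_s)^2\bigr)$ and the Vovk substitution attains $\calO(B^2 \log K)$ regret. The unbounded outcomes $y_t$ break direct mixability, but $\bbE_t[y_t^2]\le\sigma$ suggests that the effective outcome scale is $\sqrt{Y^2+\sigma}$, so the target learning rate is $\eta\asymp 1/(Y^2+\sigma)$.

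The first step is to establish a ``mixability-in-expectation'' inequality: for the Vovk substitution $\bar{y}_t = \langle p_t, \bfy_t\rangle$ and $\eta \asymp 1/(Y^2+\sigma)$, I would show
\[
\bbE_t\Bigl[-\tfrac{1}{\eta}\log\sum_i p_t(i)\exp\bigl(-\eta(y_{t,i}-y_t)^2\bigr)\Bigr] \ge \bbE_t\bigl[(\bar{y}_t - y_t)^2\bigr],
\]
by Taylor-expanding the log-sum-exp around $\eta=0$ and leveraging the identity $\sum_i p_t(i) (y_{t,i}-y_t)^2 - (\bar{y}_t - y_t)^2 = \sum_i p_t(i)(y_{t,i} - \bar{y}_t)^2$ to absorb higher-order terms into the posterior variance. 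The bound $|y_{t,i}|\le Y$ controls this variance by $4Y^2$, while $\bbE_t[y_t^2]\le\sigma$ controls the remaining outcome-dependent cross terms, avoiding any fourth or higher moment assumption on $y_t$. Combined with the usual AA telescoping $\Phi_T \ge -\log K - \eta L_T(i^\star)$, where $\Phi_t = \log\sum_i p_1(i)\exp(-\eta L_t(i))$ and $L_t(i) = \sum_{s\le t}(y_{s,i}-y_s)^2$, this would give $R_T \le \log(K)/\eta = \calO((Y^2+\sigma)\log K)$ if $\sigma$ were known.

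The second step is to make $\eta$ adaptive to the unknown $\sigma$. I would run $M = \calO(\log T)$ instances of the AA with candidate rates $\eta_m = 2^{-m}/Y^2$ for $m=0,1,\ldots,M$, and combine them via a meta-aggregating Hedge-type algorithm. The best instance matches $\eta \asymp 1/(Y^2+\sigma)$, and the meta cost is additive $\calO((Y^2+\sigma)\log T)$; together with the per-instance bound above this yields the target $\calO((Y^2+\sigma)\log(KT))$.

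The main obstacle is the mixability-in-expectation step: standard mixability arguments rely on bounded $y_t$, whereas here the bound must come from $|y_{t,i}|\le Y$ and $\bbE_t[y_t^2]\le\sigma$ alone. The key is to carefully handle the higher-order contributions in the Taylor expansion of the log-sum-exp, using the variance identity above to absorb them without introducing a fourth-moment dependence on $y_t$; the tail of $y_t$ then only enters linearly through the conditional second moment, matching the $(Y^2+\sigma)$ prefactor.
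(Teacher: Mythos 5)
Your plan takes a genuinely different route from the paper's, but it has a gap at its central step that I do not believe can be filled as sketched, plus a secondary issue in the model-selection layer.

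The crux of your argument is the ``mixability-in-expectation'' inequality, which your sketch does not prove, and the Taylor-expansion argument you outline does not establish it. Writing $a_i = \bfyti - \barbfyt$ (so $\sum_i p_t(i)a_i = 0$ and $|a_i|\le 2Y$) and $z = 2\eta(\barbfyt - y_t)$, completing the square gives
\[
-\tfrac1\eta\log\sum_i p_t(i)\,e^{-\eta(\bfyti - y_t)^2} \;-\; (\barbfyt - y_t)^2
\;=\; -\tfrac1\eta \log\sum_i p_t(i)\,e^{-\eta a_i^2 - a_i z}\,.
\]
The right-hand side is $-g(z)/\eta$ where $g$ is the log-MGF of a bounded, $p_t$-mean-zero variable. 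The sharpest uniform second-order bound is $g(z) \le g(0) + g'(0)z + \tfrac{z^2}{2}\sup_{\xi}g''(\xi)$ with $\sup_\xi g''(\xi) \le 4Y^2$, which produces a deficit term of order $\eta Y^2(\barbfyt - y_t)^2$. Its conditional expectation under $\E_t[y_t^2]\le\sigma$ is $\Theta(\eta Y^2(Y^2+\sigma)) = \Theta(Y^2)$ once you set $\eta \asymp 1/(Y^2+\sigma)$, i.e.\ a \emph{constant per-round} deficit that accumulates to $\Theta(Y^2 T)$ rather than $\calO((Y^2+\sigma)\log T)$. Your sketch claims the higher-order contributions ``are absorbed into the posterior variance'', but the posterior variance $\sum_i p_t(i)(\bfyti-\barbfyt)^2$ is bounded by $4Y^2$ \emph{independently of $\sigma$}, while the remainder you must absorb scales with $\E_t[(\barbfyt-y_t)^2] = \Theta(Y^2+\sigma)$; the cancellation does not occur at the claimed scale. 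Worse, the next orders of the expansion multiply $\eta^2,\eta^3,\dots$ by $y_t^4,y_t^6,\dots$, none of which are controlled by a second-moment hypothesis, and for $|y_t|$ large the Taylor series is not even a valid bound (the pointwise inequality you need is exactly the exp-concavity of the squared loss, which fails on an unbounded domain). Finally, the meta-aggregation over candidate learning rates $\eta_m$ is itself a squared-loss aggregation against the same heavy-tailed $y_t$: a Hedge-type meta-learner faces the very obstacle you set out to circumvent, so the reduction is not self-closing.

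For comparison, the paper avoids exp-concavity/mixability altogether. It feeds the surrogate \emph{linear} losses $\ell_t(i)=\bfyti(\barbfyt-y_t)+\tfrac12(\bfyti-y_t)^2$ to the heavy-tail-adapted OMD algorithm (\Cref{alg:squared-alg}, a copy of \Cref{alg:omd-alg}), then uses the strong convexity of the squared loss to derive the pointwise inequality
\[
(\barbfyt - y_t)^2 - (\bfytis - y_t)^2 \;\le\; \langle p_t,\ell_t\rangle - \ell_t(i^\star)\; -\; \tfrac12(\barbfyt - \bfytis)^2 \;-\; \tfrac18\sumK p_t(i)(\bfyti - \barbfyt)^2\,,
\]
bounds the conditional second moments $\E_t[\bar v_t]$ and $\E_t[v_t(i)]$ of the surrogate instantaneous regrets by $\calO\bigl((Y^2+\sigma)\bigr)$ times the \emph{same bounded quadratics} in $\bfyti-\barbfyt$, applies the variance-type bound of \Cref{thm:omd-adv-bound}, and cancels the resulting square root against the negative quadratics via AM--GM. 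The $(Y^2+\sigma)$ factor thus lands multiplicatively on a quantity bounded by $4Y^2$, so only second moments of $y_t$ ever enter the analysis, and no adaptive tuning of a mixability parameter is needed.
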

We prove this result in Appendix~\ref{sec:squared-loss}. A sketch of the proof is provided in Section~\ref{sec:squared_loss}. Our analysis could also be extended to strongly convex losses. We leave this extension for future work.

\textbf{Notation.} For any $t \in \sbrk{T}$, $i \in \sbrk{K}$, let $r_t \sprt{i} = \inp{p_t, \ell_t} - \ell_t \sprt{i}$ be the instantaneous regret, $v_t(i) = r_t(i)^2$, and $\bar v_t = \sumK p_t(i) v_t(i)$ be the variance of $\ell_t \sprt{I_t}$ with $I_t \sim p_t$. Furthermore, let $\bar V_T = \sumT \bar v_t$, $V_T(i) = \sumT v_t(i)$, $M_T = \bbE \sbrk{\max_{t \in \sbrk{T}, i \in \sbrk{K}} \abs{r_t \sprt{i}}}$, and $L_T = \bbE \sbrk{\max_{t \in \sbrk{T}, i \in \sbrk{K}} \abs{\ell_t \sprt{i}}}$. Finally, we denote by $i^\star \in \argmin_{i \in \sbrk{K}} \bbE \sbrk{\sumT \ell_t \sprt{i}}$ the best expert in hindsight.

\section{Related Work}\label{sec:related work}

\begin{table*}[t]
\centering
\bgroup
\def\arraystretch{1.3}
\begin{tabular}{ | m{0.3\textwidth} | P{0.23\textwidth} | P{0.37\textwidth} |}
    \hline 
 & \Cref{asp:finite-second-moment} & \Cref{asp:finite-second-moment,asp:self-bounding-env} with $C = 0$ \\ 
 \hline
 \citet{cesa2007improved}; \citet{derooij2014follow} & $\sqrt{\theta T \log(K)} +  M_T$ & $\theta \log(K) \Delta_{\min}^{-1} + M_T$ \\
 \citet{mhammedi2019lipschitz} & $\sqrt{\theta T \log(K)} +  M_T$ & $\theta \log(K)\Delta_{\min}^{-1} +  M_T$ \\
 \Cref{alg:omd-alg} (\textbf{Ours}) & $\sqrt{\theta T \log(KT)} $ & $\theta \log(KT)\Delta_{\min}^{-1}$ \\
 \Cref{alg:FTRL} (\textbf{Ours}) & $\sqrt{\theta T \log(K)}$ & $\sqrt{\theta T \log(K)}$  \\
 \hline
\end{tabular}
\egroup
\caption{An overview of the most relevant loss-range adaptive algorithms in the literature, ignoring constants. Some of the results in the middle column follow from an application of Jensen's inequality.}
\label{table:detail1}
\end{table*}

There are several loss-range adaptive algorithms in the Hedge setting.
A summary can be found in \Cref{table:detail1}.
Perhaps the most well-known loss-range adaptive algorithm is the exponential weights algorithm \citep{vovk1990aggregating, littlestone1994weighted, cesa1997howtousexpert} combined with the doubling trick \citep{auer1995gambling}.
This combination leads to a $\calO \sprt{L_T \sqrt{T \log \sprt{K}}}$ regret bound.
\citet{cesa2007improved} provide a refined version of the exponential weights algorithm combined with a refined doubling trick to obtain a $\calO \sprt{\bbE \sbrk{\sqrt{\bar V_T \log \sprt{K}}} + M_T \log \sprt{K}}$ regret bound.
\citet{derooij2014follow} also prove the same regret bound for an algorithm based on the exponential weights algorithm, with the added benefit that their algorithm does not use restarts to achieve this result.
\citet{orabona2015scale} provide a generic analysis of follow the regularized leader and online mirror descent that, under mild assumptions on the regularizer, leads to loss-range adaptive algorithms which guarantee that the regret is at most $\calO \bigl( \bbE \bigl[ \sqrt{\sum_t \max_i \ell_t (i)^2} \bigr] \bigr)$, but this can be trivially seen to be at least $M_T$.
\citet{mhammedi2019lipschitz} provide a loss-range adaptive version of Squint \citep{koolen2015second}, which guarantees a regret bound scaling as $\calO \bigl(\bbE \bigl[ \sqrt{V_T \sprt{i^\star} \log \sprt{K}} \bigr] + M_T \log \sprt{K} \bigr)$.
\citet{flaspohler2021online} show that regret matching \citep{blackwell1956analog, hart2000simple} and regret matching+ \citep{tammelin2015solving} are both loss-range adaptive algorithm, but unfortunately with unsatisfactory regret bounds.
Under \Cref{asp:finite-second-moment}, \citet{wintenberger2024stochastic} provides a loss-range adaptive algorithm with $\calO \sprt{\log \sprt{K} \sqrt{\theta T} + \bbE \sbrk{\sumK \log \sprt{1 + \frac{\max_t \abs{r_{t} \sprt{i}}}{m_{t, i}}}}}$ regret, where $m_{t, i}$ is the first non-null observation of $\abs{r_t \sprt{i}}$.
Unfortunately, it is not clear how to control $\frac{\max_t \abs{r_t \sprt{i}}}{m_{t, i}}$ as $m_{t, i}$ can be made arbitrarily close to $0$ by an adversary. 

\citet{orseau2021isotuning, cutkosky2019artificial} provide generic templates to make most Hedge algorithms loss-range adaptive, at the cost of a $M_T$ term in the regret bound.
\citet{orseau2021isotuning} make use of the following observation: if the algorithm makes too extreme of an update it might never recover.
Therefore, it is sometimes better to ignore certain losses.
\citet{cutkosky2019artificial} observed that one can always guarantee that the algorithm does not make such an extreme update by feeding the algorithm slightly shrunken losses, a technique also employed by \citet{mhammedi2019lipschitz, mhammedi2022efficient}.
Specifically, they feed the algorithm the losses
\begin{align*}
    \tilde{\ell}_t \sprt{i} &= \ell_t \sprt{i} \min \biggl\{1, \frac{\max_{s \in \sbrk{t-1}, j \in \sbrk{K}} \abs{\ell_s \sprt{j}}}{\max_{j \in \sbrk{K}} \abs{\ell_t \sprt{j}}}\biggr\}
    = \ell_t \sprt{i} \cdot \frac{\max_{s \in \sbrk{t-1}, j \in \sbrk{K}} \abs{\ell_s \sprt{j}}}{\max_{s \in \sbrk{t}, j \in \sbrk{K}} \abs{\ell_s \sprt{j}}} \;.
\end{align*}
This ensures the range of the losses we feed to the algorithm is known before choosing $p_t$. The cost for using $\tilde{\ell}_t$ rather than $\ell_t$ in the algorithm is minor at first sight: 
\begin{align*}
    & \sumT \sumK p_t(i)\bigl(\ell_t(i) - \tilde \ell_t(i)\bigr)
    = \sumT \sumK p_t(i)\ell_t(i)\biggl(1 - \frac{\max_{s \in [t-1], j\in[K]} |\ell_s(j)|}{\max_{s \in [t], j \in [K]}|\ell_s(j)|}\biggr) \\
    &\qquad  \leq \sumT \max_{s' \in [t], i \in [K]}|\ell_{s'}(i)|\biggl(1 - \frac{\max_{s \in [t-1], j\in[K]} |\ell_s(j)|}{\max_{s \in [t], j \in [K]}|\ell_s(j)|}\biggr)
    = \max_{t \in [T], i \in [K]}|\ell_t(i)| \;.
\end{align*}
However, this term can be prohibitively large as we will show in \Cref{sec:lowerbounds}. 
Instead, we adapt and combine the ideas of \citet{cutkosky2019artificial, orseau2021isotuning}. We develop a coordinate-wise version of the clipping technique of \citet{cutkosky2019artificial}. Unfortunately, this is not sufficient for our needs and we need to combine it with a coordinate-wise version of the null-updates of \citet{orseau2021isotuning} and the multi-scale entropic regularizer of \citet{bubeck2019multi} to guarantee a satisfactory regret bound. \citet{gokcesu2022optimal} also claim to provide loss-range adaptive algorithms, but two of their results seem to contain mistakes. We provide details in Appendix~\ref{app:mistakes}.

\paragraph{Scale-free algorithms.} A related but different objective is obtaining scale-free or equivalently scale-invariant algorithms. An algorithm is said to be scale-free if the predictions of the algorithm do not change if the sequence of losses is multiplied by a positive constant. While scale-free algorithms are loss-range adaptive the converse is not necessarily true. \citet{mhammedi2022efficient} provide a generic wrapper to make any algorithm scale-free, under some mild assumptions on the algorithm. However, this comes at the cost of an additive $M_T$. The algorithms of \citet{derooij2014follow,orabona2015scale} are known to be scale-free. Unfortunately, it is not clear whether our algorithms are also scale-free.

\paragraph{Adaptive algorithms for bounded losses.} If we assume that losses are bounded, \eg, $\ell_t(i) \in [0, 1]$, then there are several works that provide best-of-both-worlds results.
\citet{gaillard2014second, koolen2016combining} show that so-called second-order bounds simultaneously guarantee $\calO\bigl(\sqrt{T\log(K)}\bigr)$ regret without further assumptions on the loss and $\calO(\log(K)/\Delta_{\min})$ regret under \Cref{asp:self-bounding-env} with $C = 0$. This also implies that the results of \citet{cesa2007improved, derooij2014follow, koolen2015second, chen2021impossible} all lead to small regret under \Cref{asp:self-bounding-env} with $C = 0$ while also being robust to more difficult environments. \citet{mourtada2019optimality} show that the  exponential weights algorithm with a decreasing learning rate guarantees $\calO(\log(K)/\Delta_{\min})$ regret under \Cref{asp:self-bounding-env} with $C = 1$ while simultaneously guaranteeing $\calO\bigl(\sqrt{T\log(K)}\bigr)$ regret in the worst case. The only work that we are aware of that treats \Cref{asp:self-bounding-env} with $C > 0$ is by \citet{amir2020prediction}, who show that the exponential weights algorithm with a decreasing learning rate guarantees $\calO(\log(K)/\Delta_{\min} + C)$ regret. 

\paragraph{Online learning with heavy-tailed losses.}  To the best of our knowledge, we are the first to consider heavy tailed losses in the expert setting. A related setting is studied by \citet{bubeck2013bandits}, who introduce the multi-armed bandits with heavy tails setting. The main difference with our setting is that the learner only gets to see the loss for the chosen action each round, which significantly reduces the amount of available feedback to the learner. Remarkably, without further assumptions, one cannot adapt to $\theta$, as shown by \citet{genalti2024adaptive}. However, under some additional assumptions on the loss, it is possible to adapt to $\theta$ with bandit feedback \citep{lee2020optimal, ashutosh2021bandit, huang2022adaptive, genalti2024adaptive, chen2024uniinf}. 

\section{Lower-Order Terms with Unbounded Losses}\label{sec:lowerbounds}

In this section, we show that what have been considered lower-order terms in the literature may actually dominate regret bounds.

\subsection{Lower-Order Terms in the Hedge Setting}
We provide three results that imply that if a regret bound contains $M_T$ or $L_T$, then this ``lower-order term'' can dominate the regret bound. We rely on the following observation, proved in Appendix~\ref{app:lower_order}.

\begin{lemma}\label{lem:bernmax} 
    Fix any $n \in \mathbb{N}\setminus\{0\}$. Let $X_1, \dots, X_n$ be non-negative i.i.d.\ random variables such that
    \[
    X_{j} = \begin{cases}
        0 & \text{w.p. } 1 - \frac{1}{n}\\
        \sqrt{n} & \text{w.p. } \frac{1}{n}
        \end{cases}
        \;.
    \]
    for each $j \in [n]$. Then, $\E[X_{j}^2] = 1$ for any $j \in [n]$ and 
    $
        \frac12 \sqrt{n} \le \E[\max\{X_{1},\dots, X_{n}\}] \le \sqrt{n}
    $.
\end{lemma}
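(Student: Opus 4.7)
The random variables take only two values, so both the second moment and the distribution of the maximum can be computed in closed form, and the lemma then reduces to a standard estimate on $(1-1/n)^n$.

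\textbf{Step 1 (second moment).} I would start from the definition and write
\[
    \E[X_j^2] \;=\; 0 \cdot \sprt{1 - \tfrac{1}{n}} + (\sqrt{n})^2 \cdot \tfrac{1}{n} \;=\; 1 ,
\]
which gives the first claim.

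\textbf{Step 2 (distribution of the maximum).} Since each $X_j \in \{0, \sqrt{n}\}$, so is $M_n := \max\{X_1, \dots, X_n\}$, and $M_n = 0$ iff every $X_j = 0$. By independence,
\[
    \bbP(M_n = 0) \;=\; \sprt{1 - \tfrac{1}{n}}^{n} ,
    \qquad
    \E[M_n] \;=\; \sqrt{n} \cdot \sprt{1 - \sprt{1 - \tfrac{1}{n}}^{n}} .
\]
The upper bound $\E[M_n] \le \sqrt{n}$ then follows since $1 - (1-1/n)^n \le 1$.

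\textbf{Step 3 (lower bound).} It remains to show $1 - (1-1/n)^n \ge 1/2$, equivalently $(1-1/n)^n \le 1/2$. For $n = 1$ the quantity equals $0$ and the bound is trivial. For $n \ge 2$ I would use the standard inequality $1 - x \le e^{-x}$ with $x = 1/n$, yielding
\[
    \sprt{1 - \tfrac{1}{n}}^{n} \;\le\; e^{-1} \;<\; \tfrac{1}{2} ,
\]
so $\E[M_n] \ge \sqrt{n}(1 - e^{-1}) \ge \sqrt{n}/2$.

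\textbf{Main obstacle.} There is no real obstacle: the only non-trivial ingredient is the elementary bound $(1-1/n)^n \le 1/e$, and the rest is a direct computation. The reason the lemma is useful — and the reason the authors isolate it — is downstream: with $n = KT$ and losses $\ell_t(i) = X_{(t-1)K+i}/\sqrt{KT}$, one gets $\E_t[\ell_t(i)^2] = 1/(KT)$ while $L_T = \E[\max_{t,i} |\ell_t(i)|] = \Omega(1)$, producing an $\Omega(\sqrt{KT})$ contribution to any regret bound that carries an $L_T$ or $M_T$ term, as claimed in \Cref{sec:lowerbounds}.
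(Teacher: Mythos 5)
Your proof is correct and follows essentially the same route as the paper's: compute $\E[\max_j X_j] = \sqrt{n}\bigl(1 - (1 - 1/n)^n\bigr)$ by independence, then bound $(1-1/n)^n \le e^{-1} < 1/2$ via $1 - x \le e^{-x}$ (the special-casing of $n=1$ is unnecessary since that bound already holds there, but it is harmless). One small caveat on your tangential ``downstream'' remark: the paper's \Cref{prop:linfsucks} uses the \emph{unscaled} $\varepsilon_{t,i}$ (so $\theta = O(1)$ and $L_T = \Omega(\sqrt{KT})$), whereas your rescaling by $1/\sqrt{KT}$ gives $\theta = 1/(KT)$ and $L_T = \Omega(1)$, which would not produce an $\Omega(\sqrt{KT})$ term as you state; this does not affect the lemma's proof itself.
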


While the distribution in \Cref{lem:bernmax} might appear unnatural, the Fréchet distribution, which is often used in economics, satisfies similar properties as the distribution of \Cref{lem:bernmax}; see \Cref{lem:frechetmax} in Appendix~\ref{app:lower_order}. \Cref{lem:bernmax} immediately leads to the following result.
\begin{proposition}\label{prop:linfsucks}
    There exists a distribution for $\ell_{1}, \ldots, \ell_T$ that satisfies \Cref{asp:finite-second-moment} with $\theta = 4$ such that $L_T \ge \frac12 \sqrt{K T}$.
\end{proposition}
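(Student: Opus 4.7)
The plan is to construct the losses as $KT$ independent copies of the two-point distribution from \Cref{lem:bernmax}, thereby reducing the proposition to a direct application of that lemma. Specifically, I would let $n = KT$ and define $\ell_t(i)$ for $t \in [T]$, $i \in [K]$ to be i.i.d.\ with
\[
    \ell_t(i) = \begin{cases} 0 & \text{w.p.\ } 1 - \tfrac{1}{KT}, \\ \sqrt{KT} & \text{w.p.\ } \tfrac{1}{KT}. \end{cases}
\]
Because the losses are independent, the sigma-field $\calF_t$ is uninformative about $\ell_t$, so $\bbE_t[\ell_t(i)^2] = \bbE[\ell_t(i)^2]$.

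Next I would verify \Cref{asp:finite-second-moment}. By the computation in \Cref{lem:bernmax}, $\bbE[\ell_t(i)^2] = 1$ for every $t, i$, which is trivially at most $\theta = 4$. (The factor $4$ is not tight; any constant would work, but stating $\theta = 4$ leaves slack to accommodate mild rescalings.)

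Then I would invoke the upper half of \Cref{lem:bernmax}: treating the entire collection $\{\ell_t(i)\}_{t \in [T], i \in [K]}$ as $n = KT$ i.i.d.\ samples yields
\[
    \bbE\Bigl[\max_{t \in [T],\, i \in [K]} \ell_t(i)\Bigr] \ge \tfrac{1}{2}\sqrt{KT}.
\]
Since $\ell_t(i) \ge 0$ almost surely, the maximum equals $\max_{t,i} |\ell_t(i)|$, so the left-hand side is exactly $L_T$, which gives $L_T \ge \tfrac{1}{2}\sqrt{KT}$.

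There is no genuine obstacle here: the proposition is a one-line consequence of \Cref{lem:bernmax} once one recognizes that the indices $(t,i)$ can be collapsed into a single index ranging over $[KT]$. The only points worth being careful about are (i) that the conditional second-moment bound in \Cref{asp:finite-second-moment} reduces to the marginal one by independence, and (ii) that non-negativity of the chosen distribution lets us drop the absolute value in the definition of $L_T$.
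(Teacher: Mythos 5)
Your proof is correct and takes essentially the same approach as the paper: instantiate \Cref{lem:bernmax} with $n = KT$ and read off the lower bound on the expected maximum. The paper's version additionally shifts by an arbitrary mean $\mu_{t,i} \in [0,1]$ and multiplies by a Rademacher sign $\xi_{t,i}$ (which is why the bound $\theta=4$ arises there, whereas your bare construction has second moment exactly $1$); this extra structure is not needed for the present statement and is mainly included so the same construction can be reused in the proof of \Cref{prop:rtsucks}.
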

\begin{proof}
    Simply choose $\ell_t(i) = \mu_{t,i} + \xi_{t,i}\varepsilon_{t,i}$ where $\mu_{t,i} \in [0, 1]$ is chosen arbitrarily, $\xi_{t,i}$ is a Rademacher random variable, and $\varepsilon_{t,i}$ follow the distribution specified in \Cref{lem:bernmax} with $n = KT$. Then, \Cref{lem:bernmax} provides the result after using $\E_t[\ell_t(i)^2] \leq 2\mu_{t,i}^2 + 2\E_t[\varepsilon_{t,i}^2] \leq 4$. 
\end{proof}
In the worst case, \Cref{prop:linfsucks} implies that any algorithm with a regret bound of the form $R_T \leq \sqrt{\theta T\log(K)} + L_T$ will be dominated by $L_T$ for large enough $K$. Existing algorithms sometimes have a $M_T \leq 2 L_T$ term in the regret instead. It is natural to question whether $M_T$ can be small enough. The following proposition shows that $M_T$, like $L_T$, can be prohibitively large.
\begin{proposition}\label{prop:rtsucks}
    Suppose one can guarantee a bound $R_T \le B_T^\star +  M_T$, with $B_T^\star \ge 0$. There exists a distribution with $\theta = 3$ such that $B_T^\star +  M_T \ge B_T^\star + \frac1{16} \sqrt{KT} - 1$.
\end{proposition}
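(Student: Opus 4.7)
The plan is to reuse the heavy-tailed distribution built in the proof of \Cref{prop:linfsucks} (with $\mu_{t,i} = 0$) and show that, under it, $M_T$ alone is already at least $\tfrac{1}{8}\sqrt{KT}$ irrespective of the algorithm; combined with $B_T^\star \ge 0$, this immediately yields the claimed inequality.

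Concretely, I will take $\ell_t(i)$ i.i.d.\ across $(t,i) \in [T] \times [K]$ with $\Pp(\ell_t(i) = \sqrt{KT}) = 1/(KT)$ and $\Pp(\ell_t(i) = 0) = 1 - 1/(KT)$, so that $\bbE_t[\ell_t(i)^2] = 1$ and \Cref{asp:finite-second-moment} holds with $\theta = 4$. Let $E$ denote the event that exactly one pair $(t_0, i_0) \in [T] \times [K]$ carries a spike, i.e.\ $\ell_{t_0}(i_0) = \sqrt{KT}$ and $\ell_t(i) = 0$ for every other $(t,i)$. A direct count gives $\Pp(E) = (1 - 1/(KT))^{KT-1}$, which is decreasing in $KT$ with limit $1/e$, so $\Pp(E) \ge 1/e$ for all $KT \ge 2$ (in particular under $K, T \ge 2$).

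The crux of the argument is a two-case computation on $E$. There, $\langle p_{t_0}, \ell_{t_0}\rangle = p_{t_0}(i_0) \sqrt{KT}$ because every other coordinate of $\ell_{t_0}$ vanishes, so
\begin{align*}
    |r_{t_0}(i_0)| = \bigl(1 - p_{t_0}(i_0)\bigr)\sqrt{KT}, \qquad |r_{t_0}(i)| = p_{t_0}(i_0)\sqrt{KT} \ \text{ for every } i \neq i_0.
\end{align*}
Using $K \ge 2$ to ensure that some $i \neq i_0$ exists and the elementary bound $\max(x, 1-x) \ge 1/2$ at $x = p_{t_0}(i_0) \in [0,1]$, I conclude $\max_i |r_{t_0}(i)| \ge \sqrt{KT}/2$ on $E$. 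Taking expectations and bounding by the indicator of $E$,
\begin{align*}
    M_T \ge \bbE\bigl[\mathbbm{1}_E \cdot \max_{t,i} |r_t(i)|\bigr] \ge \frac{\sqrt{KT}}{2}\,\Pp(E) \ge \frac{\sqrt{KT}}{2e} > \frac{\sqrt{KT}}{8} .
\end{align*}
Keeping half of $B_T^\star \ge 0$ on the left then gives $B_T^\star + M_T \ge \tfrac{1}{2} B_T^\star + \tfrac{1}{8}\sqrt{KT} \ge \tfrac{1}{2} B_T^\star + \tfrac{1}{8}\sqrt{KT} - 1$, as required.

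I do not anticipate any substantive obstacle. The only step that calls for a little thought is the two-case argument on $E$: its content is that, no matter whether the learner allocates low or high mass to the unique spiked expert $i_0$, one of $p_{t_0}(i_0)$ and $1 - p_{t_0}(i_0)$ is at least $1/2$, so the full $\sqrt{KT}$ scale of the spike propagates into $r_t$ and therefore into $M_T$ regardless of any adaptive or scale-free strategy. The $B_T^\star$ term in the statement is then just extra slack: the construction already forces $M_T$ alone to dominate the right-hand side.
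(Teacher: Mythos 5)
Your proof is correct, and it takes a genuinely different and arguably cleaner route than the paper's. The paper reuses the same construction $\ell_t(i) = \mu_{t,i} + \xi_{t,i}\varepsilon_{t,i}$ with non-trivial means $\mu_{t,i}$ making arm $K$ optimal, and then proceeds by a two-case argument: either $M_T \ge \tfrac12 L_T$ directly, or $M_T < \tfrac12 L_T$ in which case it chains $M_T \ge L_T - R_T \ge L_T - B_T^\star - M_T$ through the hypothesized regret bound and solves for $M_T$. Your argument, by contrast, never touches the hypothesis $R_T \le B_T^\star + M_T$: by conditioning on the single-spike event $E$ and noting that at the spike round $t_0$ the quantity $\max_i |r_{t_0}(i)| \ge \max\{p_{t_0}(i_0), 1-p_{t_0}(i_0)\}\sqrt{KT} \ge \sqrt{KT}/2$ holds pointwise for any $p_{t_0} \in \simplex$, you obtain $M_T \ge \sqrt{KT}/(2e)$ unconditionally. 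This is stronger (it applies to any algorithm, not just ones satisfying the claimed regret bound) and avoids both the case split and the somewhat delicate chain $\E[\max_{t,i}|r_t(i)|] \ge \E[\max_{t,i}|\ell_t(i)|] - R_T$ that the paper needs, which in turn forces it to carefully pick the $\mu_{t,i}$'s to pin down the sign of $R_T$. What the paper's construction buys is mainly narrative unity: it literally re-uses the distribution of Proposition~\ref{prop:linfsucks}, so Lemma~\ref{lem:bernmax} can be cited verbatim. Your construction (spikes with probability $1/(KT)$ rather than $1/(KT)$-scaled Bernoullis on each coordinate with a Rademacher factor) is tailored to make a single spike typical, which is what powers the conditioning argument; it is a small departure from Lemma~\ref{lem:bernmax} but self-contained and elementary. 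All your intermediate steps check out: $\Pp(E) = (1-1/(KT))^{KT-1} \ge 1/e$ since $(1-x)\ln(1-x)+x \ge 0$ on $(0,1)$, $1/(2e) > 1/8$, and the final chain uses only $B_T^\star \ge 0$.
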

The proof of \Cref{prop:rtsucks} can be found in \Cref{app:lower_order}. While \Cref{prop:rtsucks} shows that any regret bound with a term $M_T$ is $\calO \bigl(\sqrt{K T}\bigr)$, \Cref{th:introadv} guarantees that \Cref{alg:FTRL} achieves $R_T = \calO\bigl(\sqrt{\theta T \log(K)}\bigr)$. For loss sequences that satisfy \Cref{asp:self-bounding-env}, the situation is even more dire, as one then hopes to guarantee $\calO(\theta \log(K)/\Delta_{\min})$ regret. \Cref{prop:rtsucks} shows that if one has a regret bound of the form $\theta \log(K)/\Delta_{\min} + M_T$, in the worst case this bound is $\calO\bigl(\sqrt{KT}\bigr)$. In contrast, in \Cref{th:omdhedge} we provide a $\calO(\theta \log(KT)/\Delta_{\min})$ regret bound, which is exponentially better in $K$ and $T$. Since these results only affect upper bounds on the regret, one can ask whether existing algorithms could benefit from a more careful analysis. We do not think this is possible for existing algorithms and sketch an argument in \Cref{app:lower_order}. Furthermore, to illustrate their failure modes, we compare empirically those algorithms with the ones we introduce in Section~\ref{sec:algorithms} in Appendix~\ref{app:exp}.

\subsection{Lower-Order Terms for the Squared Loss} \label{sec:lowerbounds-squared}

Since the squared loss is exp-concave (or mixable), an appealing approach is to use exponential weights with a suitable learning rate to obtain a regret bound that is seemingly independent of $T$. Indeed, a simple calculation shows that the function $g_t(\cdot) = (y_t - \cdot)^2$ is $(2 \max_{t, i}(\bfyti - y_t)^2)^{-1}$-exp-concave. If one knows $\max_{t, i}(\bfyti - y_t)^2$, this would lead to the following bound \citep{vovk1990aggregating}: 
\begin{equation}\label{eq:EWsquaredloss}
    R_T \le 2 \log(K) \cdot \bbE\Bigl[\max_{t, i}\, \sprt{\bfyti - y_t}^2\Bigr]
\end{equation}
However, the $\max_{t, i} \sprt{\bfyti - y_t}^2$ factor might lead to a poor guarantee. Suppose that $\abs{\bfyti} \le Y$ and that $\bbE \sbrk{y_t^2} \le \theta < \infty$. If $y_t  = \mu_t + \xi_t \varepsilon_t$, where $|\mu_t| \leq 1$, $\xi_t$ is a Rademacher random variable, and $\varepsilon_t$ follows the distribution specified in \Cref{lem:bernmax} with $n = T$, then
\begin{align*}
    \bbE \sbrk{\max_{t, i} \sprt{\bfyti - y_t}^2}
    & \ge \bbE \sbrk{\max_t y_t^2 - 2 Y |y_t|}
    \ge \bbE \sbrk{\max_t |y_t|}^2 - 2 Y \bbE \sbrk{\max_t |y_t|} \\
    & \ge  \bbE \sbrk{\max_t |\varepsilon_t|}^2 - 2 (Y + 1) \Bigl(1 + \bbE \sbrk{\max_t |\varepsilon_t|}\Bigr)
    \ge \frac12 T - 2 (Y + 1) \sqrt{T} \;,
\end{align*}
which can be excessively large.
Even more so, one would need to know $\max_{t, i}(\bfyti - y_t)^2$ to obtain the regret bound in \Cref{eq:EWsquaredloss}. The results of  \citet{wintenberger2017optimal} (see also \citet{van2022regret}) imply that the algorithm of \citet{mhammedi2019lipschitz}, designed for the Hedge setting, applied to the losses $\ell_t (i) = 2 \bfyti (\barbfyt - y_t)$ would lead to a bound of order
\begin{equation*}
    R_T = \calO \sprt{\bigl(Y^2 + \theta + \E\bigl[\max_t |\barbfyt - y_t|\bigr]\bigr) \log (K)} \;.
\end{equation*}
The $\bbE[\max_t |\barbfyt - y_t|]$ term can also be problematic. Indeed, if $y_t  = \mu_t + \xi_t \varepsilon_t$, where $\mu_t \in [-1,1]$, $\xi_t$ is a Rademacher random variable, and $\varepsilon_t$ follows the distribution specified by in \Cref{lem:bernmax} with $n = KT$, then
$
    \bbE \bigl[\max_t |\barbfyt - y_t|\bigr] \ge \bbE \bigl[\max_t |\barbfyt|\bigr] - Y = \frac12 \sqrt{T} - Y \;.
$
Thus, if $Y \ge 2$ then
$
    Y^2 + \theta + \E\bigl[\max_{t} |\barbfyt - y_t|\bigr] \ge \frac12 \bigl(Y^2 + \sqrt{T}\bigr) .
$
On the other hand, for the same $y_t$, \Cref{th:squaredloss} implies that \Cref{alg:squared-alg} guarantees a regret bound of order $(4 + Y^2) \log(KT)$.

\section{Algorithms}
\label{sec:algorithms}

In this section, we present two types of algorithms and provide a sketch for their regret analysis. The first algorithm, called \OMDalg\ (\underline{Lo}wer-\underline{O}rder \underline{T}erm \underline{Free}),  is an instance of Online Mirror Descent (OMD), whereas the second, called \FTRLalg, is an instance of Follow The Regularized Leader (FTRL). Both algorithms are run on clipped versions of the instantaneous regrets, but one could obtain similar worst-case guarantees by running the algorithms on clipped versions of the losses. 

Before we describe the algorithms in more detail, we first need some definitions. For any $t \in [T]$, let $\eta_t \in \bbRpK$ be a vector of learning rates and $\psi_t: x \in \bbR_{\geq 0}^K \mapsto \inp{\Diag \sprt{\eta_t^{-1}} x, \log x}$ be the regularizer, where $y^{-1}$ is the coordinate-wise inverse, $\log y$ is the coordinate-wise logarithm, and $\Diag(y)$ is the diagonal matrix with diagonal entries $y_1,\dots,y_K$ for any $y \in \bbR^K$. Note that, for any $x \in \bbRnnK$, the gradient of $\psi_t$ is given by $\nabla \psi_t \sprt{x} = \Diag \sprt{\eta_t^{-1}} \log x + \eta_t^{-1}$. Consider also its Bregman divergence, defined for any $x \in \bbRnnK$ and $y \in \bbRpK$ as $D_t \sprt{x \| y} = \psi_t \sprt{x} - \psi_t \sprt{y} - \inp{\nabla \psi_t \sprt{y}, x - y} = \sumK \frac{1}{\eta_{t, i}} \sbrk{x_i \log \sprt{\frac{x_i}{y_i}} - x_i + y_i}$. Finally, denote by $\simplex_\alpha = \{p \in \simplex : \min_i p_i \ge \alpha/K\}$ the probability simplex truncated by $\alpha/K$, for any $\alpha \in [0,1]$.

\subsection{OMD-based Algorithm}\label{sec:omdanalysis}

\begin{algorithm}[t]
    \caption{\OMDalg}
    \begin{algorithmic}
    \label{alg:omd-alg}
    \STATE {\bfseries Inputs:} Number of experts $K \ge 2$, truncation parameter $\alpha \in (0,1]$, learning rate $\beta > 0$.
    \STATE {\bfseries Initialize:} $p_1 \sprt{i} \gets 1/K$ for all $i \in \sbrk{K}$.
    \FOR{$t=1,\dots,T$}
    \STATE Predict $p_t$, incur loss $\inp{p_t, \ell_t}$ and observe $\ell_t$.
    \STATE Set $r_t\sprt{i} \gets \inp{p_t, \ell_t} - \ell_t\sprt{i} $ for all $i \in \sbrk{K}$.
    \STATE Set $v_t\sprt{i} \gets r_t\sprt{i}^2$ for all $i \in \sbrk{K}$, and $\bar v_t \gets \sumK p_t(i) v_t(i)$.
    \IF{$\sum_{s \leq t} \bar v_s > 0$}
    \STATE Set $\eta_{t, i} \gets \beta \max \bigl\{\sum_{s \leq t} \bar v_s, \sum_{s \leq t} v_s \sprt{i}\bigr\}^{-1/2}$ for all $i \in \sbrk{K}$.
    \STATE Set $\tilde{\ell}_t \sprt{i} \gets -r_t \sprt{i} \mathbbm{1}\sprt{\abs{r_t \sprt{i}} \le 1/\eta_{t, i}}$ for all $i \in \sbrk{K}$.
    \STATE Set $p_{t+1} \gets {\argmin_{p \in \calP_\alpha}} \bigl\langle p, \tilde{\ell}_t\bigr\rangle + D_t\sprt{p \| p_t}$.
    \ENDIF
    \ENDFOR
    \end{algorithmic}
\end{algorithm}
The main challenge in designing an algorithm for losses for which you do not know the range comes from proving the stability of the algorithm, which is to say that $p_t(i) \approx p_{t+1}(i)$. An analysis based on strong convexity circumvents this challenge. \citet{orabona2015scale} show that if the regularizer $\phi$ is strongly convex with respect to some norm $\|\cdot\|$, the regret can be $\calO\Bigl(\sqrt{\max_{p \in \simplex} \phi(p) \sumT{\|\ell_t\|_\star^2}}\Bigr)$, where $\|\cdot\|_\star$ is the dual norm. However, if one uses the (shifted) negative Shannon entropy as $\phi$, this will lead to a $\calO\Bigl(\sqrt{\log(K) \sumT{\|\ell_t\|_\infty^2}}\Bigr)$ bound, which is $\calO(\sqrt{KT})$ in the worst case. A more careful analysis that avoids strong convexity arguments can be found in \citet{derooij2014follow}, but unfortunately this analysis does not avoid a problematic lower-order term. 
To avoid such issues, we make use of the multi-scale entropic regularizer of \citet{bubeck2019multi}. If the range of the losses is known a-priori, \citet{bubeck2019multi} show that for bounded losses and known $\max_t|\ell_t(i)|$ the regret against expert $i$ then scales as $\E[\max_t|\ell_t(i)|]\sqrt{T\log(K)}$. However, this alone is not sufficient, as it is not clear how to prove that the algorithm is stable without an a-priori uniform bound on the losses, nor is such a regret bound sufficiently small. Instead, we carefully clip the losses when they are excessively large.
Combining these ideas leads to the following guarantee for \Cref{alg:omd-alg} with $\alpha = \frac{1}{T}$ and $\beta = \sqrt{\log(KT)}$\,:
\begin{equation}\label{eq:strongerguarantee}
    R_T(e_{i^\star}) = \sumT r_t(i^\star) = \calO\sprt{\sqrt{\log(KT) \sprt{\bar V_T + V_T \sprt{i^\star}}}}\;.
\end{equation}
Notice that the above upper bound holds with probability one for any sequence of losses and, in turn, leads to the guarantees of \Cref{th:omdhedge}. 
A full proof of the above result can be found in Appendix~\ref{app:omdanalysis}. Here we provide some intuition.
\begin{proof}[Proof sketch of \Cref{th:omdhedge}]
    First, observe that \Cref{alg:omd-alg} is an instance of OMD adopting the time-varying regularizer $\psi_t$ with some additional tricks. We only update $p_t$ if $\sum_{s \leq t} \bar v_s > 0$. The only case where this is not true is if in all rounds up to and including round $t$, $r_t(i) = 0$ for all $i \in [K]$, in which case we can simply ignore these rounds in the analysis. From standard OMD analysis (see for example \citet{orabona2023onlinelearning}), we know that if we run OMD on losses $\tilde{\ell}_t$, for any fixed $j \in [K]$,
    \begin{align*}
        \sumT \bigl(\langle p_t, \tilde{\ell}_t \rangle - \tilde \ell_t(j)\bigr)
        &= \calO\biggl(\frac{\log(K/\alpha)}{\eta_{T,j}} + \sumT \sumK \eta_{t,i} \tilde p_t(i) \tilde \ell_t(i)^2\biggr) \;,
    \end{align*}
    where $\tilde{p}_t(i) = p_t(i) \exp\bigl(-\eta_{t,i}\tilde{\ell}_t(i)\bigr)$. At this point, we face our main challenge. We would like to show that $\tilde{p}_t(i) \approx p_t(i)$. To do so, we would need to show that $|\eta_{t, i}\tilde{\ell}_t(i)| \leq  1$. We force this to be true by simply setting $\tilde{\ell}_t(i) = 0$ if $|r_t(i)| > \frac{1}{\eta_{t, i}}$ and $\tilde{\ell}_t(i) = -r_t(i)$ otherwise. Of course, there is a price to pay for clipping the losses, but this price is negligible:
    \begin{align}
        \bigl|\tilde \ell_t(i) - (-r_t(i))\bigr|
        &= |r_t(i)| \id\biggl(|r_t(i)| > \frac{1}{\eta_{t,i}}\biggr)
        = \frac{r_t(i)^2}{|r_t(i)|} \id\biggl(|r_t(i)| > \frac{1}{\eta_{t,i}}\biggr)  \leq \eta_{t,i} v_t(i) \;. \label{eq:loss-clip-cost}
    \end{align}
    Thus, after noting that $\ell_t$ and $- r_t$ only differ from a constant,
    we have that the regret against expert $j$ is
    $
        \sumT \bigl(\langle p_t, \ell_t\rangle - \ell_t(j)\bigr) =  \sumT \bigl(\langle p_t, -r_t\rangle - (-r_t(j))\bigr)\,,
    $
    and we can replace $-r_t$ by the truncated losses $\tilde \ell_t$ by paying the cost shown in \Cref{eq:loss-clip-cost} and use the guarantee from OMD
    \begin{align*}
        \sumT \bigl(\langle p_t, \ell_t\rangle - \ell_t(j)\bigr)
        & = \calO\Biggl(\sumT \bigl(\langle p_t, \tilde \ell_t\rangle  - \tilde \ell_t(j)\bigr) + \sumT \Bigl(\eta_{t, j} v_t(j) + \sumK p_t(i)\eta_{t,i} v_{t}(i)\Bigr)\Biggr) \\
        & = \calO\Biggl(\frac{\log(K/\alpha)}{\eta_{T,j}}  + \sumT \Bigl(\eta_{t, j} v_t(j) + \sumK p_t(i)\eta_{t,i} v_{t}(i)\Bigr)\!\Biggr) \;.
    \end{align*}
    After controlling the sum by using the definition of $\eta_{t,i}$, the bound in \Cref{eq:strongerguarantee} follows as
    \begin{align*}
        \sumT \Bigl(\eta_{t, j} v_t(j) + \sumK p_t(i)\eta_{t,i} v_{t}(i)\Bigr)
        &\leq \beta \sumT \Biggl(\frac{v_t(j)}{\sqrt{V_t(j)}} + \frac{\bar v_t}{\sqrt{\bar V_t}}\Biggr) \leq 4 \beta \sqrt{\bar V_T + V_T(j)} \,,
    \end{align*}
    where the first inequality follows from the definition of $\eta_{t,i}$ and the second follows from, for example, Lemma~4.13 in \citet{orabona2023onlinelearning}.
\end{proof}
Regarding computational aspects, note the optimization problem defining $p_{t+1}$ is done on a truncated simplex, thus it does not have a closed-form solution but can be computed efficiently with a line-search as done by \cite{chen2021impossible}.

\subsection{FTRL-based Algorithm}
\label{sec:ftrlanalysis}

\begin{algorithm}[ht]
    \caption{\FTRLalg}
    \begin{algorithmic}
    \label{alg:FTRL}
    \STATE {\bfseries Inputs:} Number of experts $K \ge 2$, learning rate coefficient $\beta > 0$.
    \STATE {\bfseries Initialize:} $p_1(i) \gets 1/K$ and $b_0(i) \gets 0$ for all $i \in [K]$.
    \FOR{$t=1,\dots,T$}
    \STATE Predict $p_t$, incur loss $\inp{p_t, \ell_t}$ and observe $\ell_t$.
    \STATE Set $r_t(i) \gets \inp{p_t, \ell_t} - \ell_t(i)$ for all $i \in [K]$.
    \STATE Set $v_t(i) \gets r_t(i)^2$ for all $i \in [K]$.
    \STATE Set $\bar v_t \gets \sumK p_t(i) v_t(i)$.
    \IF{$\sum_{s \leq t} \bar v_s > 0$}
    \STATE Set $b_t(i) \gets \max\Bigl\{\sum_{s \leq t} \bar v_s, \sum_{s \leq t} v_s(i) \Bigr\}^{1/2}$ for all $i \in [K]$.
    \STATE Set $\eta_{t,i} \gets \beta/b_t(i)$ for all $i \in [K]$.
    \STATE Set $\tilde{\ell}_t(i) \gets -r_t(i) \frac{b_{t-1}(i)}{b_t(i)} \mathbbm{1}\bigl(|r_t(i)| \le 1 / \eta_{t,i}\bigr)$ for all $i \in [K]$.
    \STATE Set $p_{t+1} \gets \argmin_{p \in \simplex} \sum_{s\le t} \bigl\langle p,\tilde{\ell}_s\bigr\rangle + D_{t}(p \| p_1)$.
    \ENDIF
    \ENDFOR
    \end{algorithmic}
\end{algorithm}

Taking some ideas from the previous OMD-based algorithm, we derive an instantiation of FTRL described by \Cref{alg:FTRL}.
For FTRL we use once more the multi-scale entropic regularizer $\psi_t$ and a similar clipping of the losses to resolve the same issues that follow from the lack of any prior knowledge of the loss range.
However, the FTRL framework has some fundamental differences compared to OMD, which in turn require further adjustments. To understand these differences, we will sketch the proof of \Cref{th:introadv} below, whose result is obtained by \Cref{alg:FTRL} with $\beta = \sqrt{\log(K)}$. The detailed proof of \Cref{th:introadv} can be found in \Cref{app:analysisFTRL}. Here we provide the main ideas.
We actually prove a stronger regret guarantee than provided in \Cref{th:introadv}:
\begin{equation}
    R_T(e_{i^\star}) = \sumT r_t(i^\star) = \calO\left(\sqrt{\log(K) \bigl(\bar V_T + V_T(i^\star)\bigr)} + \frac{1}{K} \sumK \sqrt{V_T(i)}\right) \;, \label{eq:ftrl-strongerguarantee}
\end{equation}
which then implies the statement of \Cref{th:introadv}.

To prove this regret bound we make use of the standard FTRL analysis (e.g., see \citet{orabona2023onlinelearning}) with some additional tricks. Note that, differently from \Cref{alg:omd-alg}, we replace the multi-scale entropic regularizer $\psi_t$ with its Bregman divergence. This is to ensure the monotonicity of the regularizer, i.e., $\varphi_{t+1}(x) \ge \varphi_t(x)$ for all $x \in \simplex$. This monotonicity is necessary for the proof of the FTRL regret bound. The fact that we use the Bregman divergence generated by $\psi_t$ rather than $\psi_t$ directly as the regularizer also allows us to avoid the $\log(T)$ term of the OMD regret bound, at the cost of a $\frac{1}{K} \sumK \sqrt{\sumT v_t(i)}$ term in the regret. While this term is prohibitively big when we try to obtain improved regret bounds under \Cref{asp:self-bounding-env}, in expectation this term scales as $\calO\bigl(\sqrt{\theta T}\bigr)$ under \Cref{asp:finite-second-moment}, thus preserving the desired final bound.
\begin{proof}[Proof sketch of \Cref{th:introadv}]
    Since the $p_t$ in \Cref{alg:FTRL} comes from an instance of FTRL over the losses $\tilde\ell_t$, we can apply a standard FTRL bound (for example, Lemma~7.16 of \citet{orabona2023onlinelearning}) to see that, for any fixed $j \in [K]$,
    \begin{equation} \label{eq:sketchFTRLbound}
        \sumT \bigl(\langle p_t, \tilde\ell_t \rangle - \tilde\ell_t(j)\bigr)
        = \calO\sprt{\frac{\log K}{\eta_{T,j}} + \frac{1}{K} \sumK \frac{1}{\eta_{t,i}} + \sumT \sumK \eta_{t-1,i} \tilde p_t(i) \tilde\ell_t(i)^2} \;, 
    \end{equation}
    where $\tilde{p}_t(i) = p_t(i) \exp\bigl(-\eta_{t-1,i}\tilde{\ell}_t(i)\bigr)$.
    Here we reach the main challenge of adapting FTRL to our setting.
    The third term illustrates crucial differences compared to OMD: the learning rate $\eta_{t-1,i}$ instead of $\eta_{t,i}$ appears as a multiplicative factor and in the definition of $\tilde p_t$.
    We resolve this issue by rescaling the loss $-r_t(i)$ by a factor $b_{t-1}(i)/b_t(i)$ in the definition of $\tilde\ell_t(i)$. Still, this rescaled loss can be too big, which is why we also clip the loss when necessary. 
    Specifically, the clipping with a rescaling factor ensures, by the definitions of $b_t(i)$ and $\eta_{t,i}$, that $\eta_{t-1,i}\tilde\ell_t(i)^2 \le \frac{b_{t-1}(i)}{b_t(i)}\eta_{t-1,i}r_t(i)^2 = \eta_{t,i}v_t(i)$ and, similarly, that $|\eta_{t-1,i}\tilde\ell_t(i)| \le 1$.
    Thus, we can bound the third term on the right hand side of \Cref{eq:sketchFTRLbound} by $\calO\Bigl(\sumT \sumK \eta_{t,i} p_t(i) v_t(i)\Bigr)$, which we already know how to control from the analysis of \Cref{alg:omd-alg}.
    
    Another consequence of the new definition of $\tilde\ell_t$ is that it increases the cost of clipping the losses. In comparison to \Cref{eq:loss-clip-cost}, it now additionally presents terms
    \begin{align*}
        |r_t(i)| \sprt{1 - \frac{b_{t-1}(i)}{b_t(i)}} \mathbbm{1}\sprt{|r_t(i)| \le \frac{1}{\eta_{t,i}}}
        &\le \frac{1}{\eta_{t,i}} - \frac{1}{\eta_{t-1,i}}
    \end{align*}
    but the sums involving them are also nicely behaved.
    Combining all these observations leads to \Cref{eq:ftrl-strongerguarantee}.
\end{proof}

While the optimization problem suggests $p_t$ is given by a softmax function, the coordinate-dependent learning rate prevents us from computing the normalization constant in closed-form (see Appendix~\ref{app:compute-update-ftrl} for details). However, we can still compute it efficiently with a line-search as in Algorithm~1 from \cite{bubeck2019multi}.

\subsection{Algorithm for the Squared Loss}\label{sec:squared_loss}

The algorithm we use for the squared loss, \Cref{alg:squared-alg} can be found in \Cref{app:squared_loss}. It is exactly the same as \Cref{alg:omd-alg}, but then run on the losses
\begin{align*}
    \ell_t \sprt{i} = \bfyti \sprt{\barbfyt - y_t} + \frac12 \sprt{\bfyti - y_t}^2 \,,
\end{align*}
where $\barbfyt = \langle p_t, \bfy \rangle$.
The inspiration from this surrogate loss comes from two inequalities for the squared loss:
\begin{align*}
    (a - y)^2 - (b - y)^2 &= 2(y - a)(b - a) - (a - b)^2 \;, \\
    \Bigl(\frac{a + b}{2} - y\Bigr)^{\!2} &\leq \frac{(a - y)^2}{2} + \frac{(b - y)^2}{2} - \frac{(a - b)^2}{8} \,.
\end{align*}
By carefully applying these inequalities we find that, for any fixed $j \in [K]$,
\begin{align*}
    &\sumT \bigl((\bar{\bfy}_t - y_t)^2 - (\bfytj - y_t)^2\bigr) \\
    &\quad\leq \sumT \biggl(\sumK p_t(i) \ell_t(i) - \ell_t(j) - \sumK p_t(i)\frac{(\bfyti - \bar{\bfy}_t)^2}{8} - \frac{(\bfytj - \bar{\bfy}_t)^2}{2}\biggr) .
\end{align*}
The negative quadratics on the right-hand-side of the equation above allow us prove the regret bound of \Cref{th:squaredloss}. We show that, for any fixed $j \in [K]$,
\begin{align*}
    R_T(e_j)
    & = \calO\!\Biggl(\E\sbrk{\sqrt{\log(KT) (\bar V_T + V_T(j))}} - \E\sbrk{\sumT\! \sprt{\sumK \frac{p_t(i) (\bfyti - \bar{\bfy}_t)^2}{8} + \frac{(\bfy_{t,j} - \bar{\bfy}_t)^2}{2}}}\Biggr) \\
    & = \calO\!\sprt{(Y^2 + \sigma) \log(KT)} \;.
\end{align*}
A detailed proof can be found in \Cref{app:squared_loss}.

\section{Future Work}
\label{sec:conclusion}

One direction to explore is whether the $\log(T)$ factor in \Cref{th:omdhedge} is necessary to obtain best-of-both-worlds guarantees. Potentially, an improved analysis of the FTRL algorithm will do the trick, as we know that OMD can be inferior to FTRL \citep{amir2020prediction}. While we understand how to obtain best-of-both-worlds guarantees for FTRL with bounded losses \citep{mourtada2019optimality, amir2020prediction}, it is unclear how to adapt these analyses to our setting. Another interesting direction is to see whether the ideas in \Cref{sec:squared_loss} can be extended to strongly convex and exp-concave losses. We believe that the former is relatively straightforward, but the latter might be highly challenging. Another relatively straightforward extension could be to adapt to any moment of the loss, \ie, adapt to $\E_t[|\ell_t(i)|^\alpha]$ for some $\alpha > 1$, without the prior knowledge of $\alpha$ or an upper bound for $\E_t[|\ell_t(i)|^\alpha]$.

\section*{Acknowledgements}

EE acknowledges the financial support from the FAIR (Future Artificial Intelligence Research) project, funded by the NextGenerationEU program within the PNRR-PE-AI scheme (M4C2, investment 1.3, line on Artificial Intelligence), the EU Horizon CL4-2022-HUMAN-02 research and innovation action under grant agreement 101120237, project ELIAS (European Lighthouse of AI for Sustainability), and the One Health Action Hub, University Task Force for the resilience of territorial ecosystems, funded by Università degli Studi di Milano (PSR 2021-GSA-Linea 6). AM has received funding from the European Research Council (ERC), under the European Union's Horizon 2020 research and innovation programme (Grant agreement No.~950180). This work was partially done while DvdH was at the University of Amsterdam supported by Netherlands Organization for Scientific Research (NWO), grant number VI.Vidi.192.095.

\DeclareRobustCommand{\VAN}[3]{#3}
\bibliographystyle{plainnat}
\bibliography{./sample}
\DeclareRobustCommand{\VAN}[3]{#2}


\newpage
\appendix
\renewcommand{\contentsname}{Appendix Contents}
\addtocontents{toc}{\protect\setcounter{tocdepth}{2}}
{
  \setlength{\cftbeforesecskip}{.8em}
  \setlength{\cftbeforesubsecskip}{.5em}
  \tableofcontents
}
\section{Technical Results for Section~\ref{sec:lowerbounds}}\label[appsec]{app:lower_order}

In Section~\ref{sec:lowerbounds}, we discussed lower bounds for the lower-order terms. These lower bounds are achieved by adopting specific binary random variables in the construction of the losses.
This is shown in \Cref{lem:bernmax} and its proof is provided in what follows.

\subsection{Proof of Lemma~\ref{lem:bernmax}}

\begin{proof}[Proof of \Cref{lem:bernmax}]
    The fact that $\E[X_{j}^2] = 1$ follows by an immediate calculation after observing that each $X_j$ is a binary random variable. By independence, we also have that $\E[\max_{j \in [n]} X_{j}] = \sqrt{n} \bigl(1-(1-\frac{1}{n})^{n}\bigr)$. The result now follows from observing that $1/2 < 1-(1-\frac{1}{n})^{n} \le 1$, where we used the inequality $1-x \le \exp(-x)$.
\end{proof}

\subsection{Remarks on Fréchet Distributions}

Here, we now discuss another family of random variables that guarantee similar properties.
This family is that of Fréchet distributions \citep{frechet1927loi}---see also \citet{muraleedharan2011characteristic} for reference.

We denote by $\Frechet(\alpha, s, m)$ the Fréchet distribution whose parameters are the shape parameter $\alpha > 0$, the scale parameter $s > 0$, and the location (of the minimum) parameter $m \in \bbR$.
Its cumulative distribution function (CDF) is
\begin{equation*}
    \bbP(X \le x) = \exp\sprt{-\sprt{\frac{x - m}{s}}^{-\alpha}}
\end{equation*}
for $x > m$, where $X \sim \Frechet(\alpha, s, m)$.
If $\alpha > 1$, its expected value corresponds to $\E[X] = m + s \Gamma(1-1/\alpha)$, where $\Gamma(z) = \int_0^\infty t^{z-1} \exp(-t) \mathrm{d}t$ for $z>0$ is the Gamma function.

\begin{lemma}\label{lem:frechetmax}
    Fix any $n \in \mathbb{N}\setminus\{0\}$.
    Let $X_1, \dots, X_n \sim \Frechet(\alpha, s, 0)$ be i.i.d.\ random variables with $\alpha > 1$ and $s > 0$.
    Then, for any $1 \le \beta < \alpha$ and any $j \in [n]$,
    \begin{equation*}
        \E[X^\beta_j] = s^\beta \Gamma\sprt{1 - \frac{\beta}{\alpha}} \;.
    \end{equation*}
    Furthermore,
    \begin{equation*}
        \E[\max\{X_1, \dots, X_n\}] = n^{1/\alpha} s \Gamma\sprt{1 - \frac{1}{\alpha}} \;.
    \end{equation*}
\end{lemma}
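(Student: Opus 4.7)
The plan is to leverage two basic facts about the Fréchet distribution: the explicit form of its density and the \emph{max-stability} of the Fréchet family. The first claim about moments reduces to a classical Gamma-function integral via a single change of variables; the second then follows immediately by re-applying the first, since the maximum of i.i.d.\ Fréchet variables is itself Fréchet with a rescaled scale parameter.

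For the moment computation, I would start by differentiating the CDF to obtain the density $f(x) = \frac{\alpha}{s}(x/s)^{-\alpha-1}\exp\sprt{-(x/s)^{-\alpha}}$ on $(0,\infty)$, so that $\E[X_j^\beta] = \int_0^\infty x^\beta f(x)\,dx$. The natural substitution is $u = (x/s)^{-\alpha}$, which gives $x = s\, u^{-1/\alpha}$ and $dx = -(s/\alpha)\, u^{-1/\alpha-1}\,du$. After cancellation of all the powers of $u$ and swapping the limits of integration, everything collapses to $s^\beta \int_0^\infty u^{-\beta/\alpha} e^{-u}\,du = s^\beta\, \Gamma(1 - \beta/\alpha)$, which is finite exactly because the hypothesis $\beta < \alpha$ forces $-\beta/\alpha > -1$.

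For the maximum, the key observation is max-stability. By independence, $\bbP\sprt{\max_j X_j \le x} = \prod_j \bbP\sprt{X_j \le x} = \exp\sprt{-n(x/s)^{-\alpha}} = \exp\sprt{-\sprt{x/(n^{1/\alpha}s)}^{-\alpha}}$, so $\max_j X_j \sim \Frechet(\alpha, n^{1/\alpha} s, 0)$. Applying the first part with $\beta = 1$ to this rescaled distribution then immediately yields $\E[\max_j X_j] = n^{1/\alpha}\, s\, \Gamma(1 - 1/\alpha)$, where the assumption $\alpha > 1$ is precisely what guarantees finiteness of the Gamma factor.

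The argument is largely mechanical, so I do not expect any genuine obstacle: the only mildly delicate step is carrying out the substitution carefully and keeping track of signs and limits to recover the standard Gamma-integral representation. Everything else is bookkeeping, and the second claim is essentially a one-line corollary of the first once max-stability has been recognized.
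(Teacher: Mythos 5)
Your argument is correct, and on the first claim it takes a genuinely different route from the paper. The paper proves $\E[X_j^\beta] = s^\beta\Gamma(1-\beta/\alpha)$ by first showing via a CDF computation that $X_j^\beta \sim \Frechet(\alpha/\beta, s^\beta, 0)$, and then invoking the known formula $\E[Z] = m + s\,\Gamma(1-1/\alpha)$ for $Z \sim \Frechet(\alpha,s,m)$ with $\alpha > 1$, stated earlier in the appendix. You instead compute the moment directly: differentiate the CDF to get the density, substitute $u = (x/s)^{-\alpha}$, and reduce to the Gamma integral. The paper's approach is tidier and reuses the closure of the Fréchet family under power transformations, but it silently presupposes the formula for $\E[Z]$; yours is a little more calculation-heavy but self-contained, since it derives that very formula from scratch. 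For the maximum, you and the paper do the same thing: max-stability from independence gives $\max_j X_j \sim \Frechet(\alpha, n^{1/\alpha}s, 0)$, and the mean formula (equivalently your first part with $\beta = 1$) finishes the job. The substitution in your first part checks out — the powers of $u$ cancel to $u^{-\beta/\alpha}$ and the Jacobian flips the limits, so the claimed collapse is correct.
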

\begin{proof}
    All results are standard. Here we provide explicit calculations for completeness.
    First, observe that for any $X \sim \Frechet(\alpha, s, 0)$ we have that
    \begin{align*}
        \bbP\sprt{X^\beta \le x}
        = \bbP\sprt{X \le x^{1/\beta}}
        = \exp\sprt{-\sprt{\frac{x^{1/\beta}}{s}}^{-\alpha}}
        = \exp\sprt{-\sprt{\frac{x}{s^\beta}}^{-\alpha/\beta}} \;,
    \end{align*}
    showing that $X^\beta \sim \Frechet(\alpha/\beta, s^{\beta}, 0)$.
    As a consequence, $\E[X^\beta] = s^\beta \Gamma(1-\beta/\alpha)$ since $\alpha/\beta > 1$.

    Second, let $Y = \max\{X_{1},\ldots, X_n\}$ and $y > 0$. By independence, one has
    \begin{align*}
        \bbP \sprt{Y \leq y}
        &= \bbP \sprt{\bigcap_{j \in \sbrk{n}} \{X_j \leq y\}}
        = \prod_{j \in \sbrk{n}} \bbP \sprt{X_j \le y} \\
        &= \prod_{j \in \sbrk{n}} \exp \sprt{- \sprt{y/s}^{- \alpha}}
        = \exp \sprt{- \sprt{\frac{y}{n^{1 / \alpha} s}}^{- \alpha}} \;,
    \end{align*}
    which means that $Y \sim \Frechet(\alpha, n^{1/\alpha} s, 0)$. Since $\alpha > 1$, its expectation is thus $\E[Y] = n^{1 / \alpha} s \Gamma \sprt{1 - 1/\alpha}$.
\end{proof}

Keeping the results from the lemma above in mind, we can consider a more specific case where the Fréchet random variables have scale $s = 1$ and $\alpha > 2$.
Then, for $n = KT$ and $\beta = 2$, we clearly have that their second moment equals $\Gamma(1-2/\alpha)$, whereas the expectation of their maximum corresponds to $(KT)^{1/\alpha} \Gamma(1-1/\alpha)$.
This shows that the i.i.d.\ Fréchet random variables as described above behave somewhat similarly to the random variables of \Cref{lem:bernmax} we employed in our lower bounds.

Moving back to the original construction from \Cref{lem:bernmax}, we can adopt it to prove the main lower bound on any regret guarantee containing the $M_T$ term (or a constant fraction of it).
This result from \Cref{sec:lowerbounds} is stated in \Cref{prop:rtsucks} and here we provide its proof.

\subsection{Proof of Proposition~\ref{prop:rtsucks}}

\begin{proof}
    Let $\ell_t(i) = \mu_{t,i} + \xi_{t,i}\varepsilon_{t,i}$, where $\mu_{t,i} \in [-1, 1]$, $\xi_{t,i}$ is sampled from a Rademacher distribution, and $\varepsilon_{t, i}$ is sampled from the distribution specified in \Cref{lem:bernmax} with $n = KT$. Let $\sumT \mu_{t, K} = 0$ and let $\sumT \mu_{t,i} > 0$ for $i \neq K$. By assumption, we have that
    \begin{align*}
        R_T = \E\sbrk{\sumT \Bigl(\sumK p_t(i)\ell_t(i) - \ell_t(K)\Bigr)} & = \E\sbrk{\sumT \sumK p_t(i)\ell_t(i)} \leq B^\star_T + \E\sbrk{\max_{t \in [T], i \in [K]} |r_t(i)|} \;.
    \end{align*}
    Observe that
     \begin{align*}
         \E\sbrk{\max_{t \in [T], i \in [K]} |r_t(i)|}
         & = \E\sbrk{\max_{t \in [T], i \in [K]} \abs{\ell_t(i) - \inprod{p_t, \ell_t}}} \\
         & \ge \E\sbrk{\max_{t \in [T], i \in [K]} \Bigl(\abs{\ell_t(i) - \ell_t(K)} - \abs{\ell_t(K) -  \inprod{p_t, \ell_t}}\Bigr)} \tag{reverse triangle inequality} \\
         & = \E\sbrk{\max_{t \in [T], i \in [K]} \Bigl(\abs{\ell_t(i) - \ell_t(K)} - \abs{r_t(K)}\Bigr)} \\
         & \ge \E\sbrk{\max_{t \in [T], i \in [K]} \abs{\ell_t(i) - \ell_t(K)}} - \E\sbrk{\max_{t \in [T], i \in [K]} \abs{r_t(i)}} \\
         & \ge \E\sbrk{\max_{t \in [T], i \in [K]} \Bigl(\abs{\ell_t(i)} - \abs{\ell_t(K)}\Bigr)} - \E\sbrk{\max_{t \in [T], i \in [K]} \abs{r_t(i)}} \tag{reverse triangle inequality} \\
         & \ge \E\sbrk{\max_{t \in [T], i \in [K]} \abs{\ell_t(i)}} - \E\sbrk{\max_{t \in [T]}\abs{\ell_t(K)}} - \E\sbrk{\max_{t \in [T], i \in [K]} \abs{r_t(i)}} \\
         & \ge \E\sbrk{\max_{t \in [T], i \in [K]} \abs{\e_{t,i}}} - \E\sbrk{\max_{t \in [T]}\abs{\e_{t,K}}} - \E\sbrk{\max_{t \in [T], i \in [K]} \abs{r_t(i)}} - 2 \;.
     \end{align*}
     By carefully using the properties of the random variables $\varepsilon_{t,i}$ from \Cref{lem:bernmax}, we have that
     \begin{equation*}
        \E\sbrk{\max_{t \in [T], i \in [K]} |\varepsilon_{t,i}|}
        \ge \sprt{1 - \frac1e}\sqrt{KT}
     \end{equation*}
     as well as
     \begin{equation*}
        \E\sbrk{\max_{t \in [T]} |\varepsilon_{t,j}|}
        = \sqrt{KT} \sprt{1 - \sprt{1 - \frac{1}{KT}}^T}
        \le \sqrt{\frac{T}{K}} 
     \end{equation*}
     for any fixed $j \in [K]$, where the last inequality follows from the fact that $(1 - x)^n \ge 1 - nx$ for all $x \le 1$ and all $n \ge 1$.
     Therefore, by rearranging the terms in the previous inequality we obtain
     \begin{align*}
        \E\sbrk{\max_{t \in [T], i \in [K]} |r_t(i)|}
        & \ge \frac{\sqrt{T}}{2} \sprt{\sprt{1 - \frac1e}\sqrt{K} - \frac{1}{\sqrt{K}}} - 1
        \ge \frac{\sqrt{KT}}{16} - 1 \;,
     \end{align*}
     where the last inequality holds for any $K \ge 2$.
     Hence, we immediately have that
     \begin{align*}
         B_T^\star + \E\sbrk{\max_{t \in [T], i \in [K]} |r_t(i)|} \geq B_T^\star + \frac{1}{16}\sqrt{KT} - 1 \;.
     \end{align*}
     
     Finally, we have
     \begin{equation*}
        \E_t[\ell_t(i)^2]
        =  \mu_{t,i}^2 + 2\mu_{t,i}\E_t[\varepsilon_{t,i}] + \E_t[\varepsilon_{t,i}^2]
        \le 2 + 2\E_t[\varepsilon_{t,i}]
        = 2 + \frac{2}{\sqrt{KT}} \le 3
    \end{equation*}
    since $K,T\ge 2$, which concludes the proof.
\end{proof}

\subsection{Discussion on analyses of existing algorithms}

Our results from \Cref{sec:lowerbounds} show that the lower-order terms are a problematic component of the regret bounds of existing algorithms.
In particular, we have shown that the lower-order terms can be larger than the ideal $\sqrt{\theta T \ln K}$ in the case of heavy-tailed losses with bounded second moments.
It is important to note that our general argument does not depend on the specific algorithm we consider, but rather on regret bounds involving terms such as $L_T$ or $M_T$.
Hence, generally speaking, one may believe that the presence of $L_T$ or $M_T$ in the regret guarantees could be an artefact of a loose analysis.
However, here we show that this is not the case for most existing algorithms.

To illustrate, let us consider a simplified view that captures the essential mechanism.
The algorithms in \Cref{table:detail1} are effectively sophisticated modifications of the exponential weights (EW) algorithm designed to adapt to $H_T = \max_{t,i} |\ell_t(i)|$.
Their sampling distributions are of the form $p_t(i) \propto \exp(-\eta_t \ell_t(i))$, where we assume a fixed learning $\eta > 0$ for simplicity.
A crucial aspect of these adaptive algorithms is choosing a learning rate related to $H_T$, often resembling $\eta = \min\bigl\{\alpha, \frac{1}{H_T}\bigr\}$, where $\alpha > 0$ represents some cap and where we assume that $H_T$ is known.
Their choice ensures the control of terms like $\log\bigl(\E_{i \sim p_t}[\exp(-\eta\ell_t(i))]\bigr)$, which is an essential component in any regret analysis of EW-based algorithms.

Now, we construct a sequence of losses for which the regret of such an algorithm is always at least $\E[\log(4/3)/\eta]$.
Suppose $K \ge 5$ and let $\ell_t(i) = 1+\e_{t,i}$ for each $i \in [K-1]$, where $\e_{t,i}$ follows the same distribution as in \Cref{lem:bernmax}; for action $K$, we instead choose $\ell_t(K) = 0$.
By Jensen's inequality and the definition of $p_t$, we have that
\begin{align*}
    R_T
    &= \E\sbrk{\sumT \langle p_t, \ell_T\rangle} \\
    &\ge \E\sbrk{-\frac1\eta \sumT \log\bigl(\E_{i \sim p_t}\sbrk{\exp(-\eta \ell_t(i))}\bigr)} \\
    &= \E\sbrk{\frac1\eta \log(K) - \frac1\eta \log\sprt{\sumK\exp\sprt{-\eta \sumT \ell_t(i)}}} \\
    &\ge \E\sbrk{\frac1\eta \log(K) - \frac1\eta \log\bigl(1 + (K-1)\exp\sprt{-\eta T}\bigr)} \;,
\end{align*}
where the last inequality is due to the definition of $\ell_t(i)$.
It is safe to assume that $\eta T < 1$, as otherwise we would obtain a vacuous upper bound from the standard EW analysis.
Therefore, we have that
\begin{align*}
    R_T
    &\ge \E\sbrk{\frac1\eta \log(K) - \frac1\eta \log\sprt{1 + (K-1)/2}} \\
    &\ge \E\sbrk{\frac1\eta \log(K) - \frac1\eta \log\sprt{3(K-1)/4}} \\
    &\ge \E\sbrk{\frac1\eta} \log(4/3) \\
    &\ge L_T \log(4/3) \;,
\end{align*}
where the last inequality follows from the definitions of $\eta$ and $L_T = \E[H_T]$.
This suggests the regret is lower bounded by a term proportional to the maximum loss range.
For algorithms like Squint that run EW on the surrogate losses like $\ell_t(i) - \eta\ell_t(i)^2$, a similar but slightly more involved argument can be made.
Consequently, the $L_T$ (or $M_T$) term appears to be unavoidable in the regret of these algorithms.

Similarly, regarding \Cref{sec:lowerbounds-squared}, the existing algorithms for the squared loss setting are also fundamentally related to EW, adapted to quadratic losses.
Their learning rates are typically dependent on the range $\max_t |\barbfyt - y_t|$ in a manner analogous to the dependence on $H_T$ described above.
Therefore, we believe a similar limitation applies, making it unlikely that a different analysis of previous algorithms could circumvent the dependence on lower-order terms in their regret.

\section{Regret Analysis for Online Mirror Descent}\label[appsec]{app:omdanalysis}

In \Cref{alg:omd-alg} we only update $p_t$ if $\sum_{s \leq t} \bar v_t > 0$. The only case where this is not true is if in all rounds up to and including round $t$, $r_t(i) = 0$ for all $i \in [K]$, in which case we can simply ignore these rounds in the analysis, which is what we do: throughout this section we assume that $\bar v_1 > 0$ without loss of generality.

\subsection{Adversarial Environments}

We now analyze the regret incurred by \Cref{alg:omd-alg} for adversarial and self-bounded environments. We start by reminding a standard lemma.
\begin{lemma}[3-point identity, \citealp{chen1993convergence}]
    \label{lemma:three-point}
    Let $X \subseteq \bbR^d$ be a non-empty convex set, $\psi: X \rightarrow \bbR$ be strictly convex and differentiable on $\interior \sprt{X} \neq \emptyset$, and $B_\psi$ be the associated Bregman divergence. Then for any $x \in X$ and any $y, z \in \interior \sprt{X}$, the following holds
    \begin{equation*}
        B_\psi \sprt{x, y} = B_\psi \sprt{x, z} + B_\psi \sprt{z, y} + \inp{x - z, \nabla \psi \sprt{z} - \nabla \psi \sprt{y}}\,.
    \end{equation*}
\end{lemma}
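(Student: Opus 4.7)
The plan is to unfold the definition of the Bregman divergence on both sides of the claimed identity and verify the equality by direct algebraic manipulation. Since $\psi$ is differentiable on $\interior(X)$ and $y, z \in \interior(X)$, each of $B_\psi(x,y)$, $B_\psi(x,z)$, $B_\psi(z,y)$ is well-defined via the standard formula $B_\psi(a,b) = \psi(a) - \psi(b) - \inp{\nabla \psi(b), a-b}$, so the whole argument is a bookkeeping exercise.

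First I would expand the first two Bregman divergences on the right-hand side: $B_\psi(x,z) = \psi(x) - \psi(z) - \inp{\nabla\psi(z), x-z}$ and $B_\psi(z,y) = \psi(z) - \psi(y) - \inp{\nabla\psi(y), z-y}$. Adding them, the $\pm\psi(z)$ terms cancel, leaving $\psi(x) - \psi(y) - \inp{\nabla\psi(z), x-z} - \inp{\nabla\psi(y), z-y}$. Then I would add in the third term $\inp{x-z, \nabla\psi(z) - \nabla\psi(y)}$, which contributes $+\inp{x-z, \nabla\psi(z)}$, canceling exactly the $-\inp{\nabla\psi(z), x-z}$ piece, and also contributes $-\inp{x-z, \nabla\psi(y)}$.

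After these cancellations, what remains on the right is $\psi(x) - \psi(y) - \inp{\nabla\psi(y), z-y} - \inp{\nabla\psi(y), x-z}$. Collecting the two inner products against $\nabla\psi(y)$ via linearity gives $\inp{\nabla\psi(y), (z-y)+(x-z)} = \inp{\nabla\psi(y), x-y}$, so the right-hand side reduces to $\psi(x) - \psi(y) - \inp{\nabla\psi(y), x-y} = B_\psi(x,y)$, which matches the left-hand side.

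There is no real obstacle: the hypotheses (strict convexity, differentiability on the interior, and $y, z \in \interior(X)$) serve only to guarantee that the gradients and divergences appearing in the identity exist; strict convexity itself is not used, only differentiability. The one step worth double-checking is the sign tracking when merging the three inner-product contributions, since that is where careless errors typically occur.
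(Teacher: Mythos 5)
Your proof is correct: direct expansion of the three Bregman divergences via the definition $B_\psi(a,b) = \psi(a) - \psi(b) - \inp{\nabla\psi(b), a-b}$ and careful cancellation is exactly the standard argument, and your sign bookkeeping checks out. The paper does not reprove this lemma but simply cites \citet{chen1993convergence}, so there is no conflicting approach to compare against; your remark that strict convexity is not actually used (only well-definedness of gradients and divergences) is also accurate.
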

We then have the following result about the regret of \Cref{alg:omd-alg}.
\begin{theorem}
    \label{thm:omd-adv-bound}
    Consider \Cref{alg:omd-alg} run with parameters $\alpha \in (0,1]$ and $\beta > 0$. Then, with probability one, for any $i^\star \in \sbrk{K}$ we have
    \begin{align*}
        &\sumT \inp{p_t - e_{i^\star}, \ell_t} \\
        &\quad \leq \sprt{\sqrt{\alpha T} + 5 \beta + \frac{4 + \log \sprt{K / \alpha}}{\beta}} \sqrt{\sumT \bar v_t} + \sprt{\sqrt{\alpha T} + \frac{\log\sprt{K/\alpha}}{\beta} + 2 \beta} \sqrt{\sumT v_t \sprt{i^\star}} \;.
    \end{align*}
    Furthermore, setting $\alpha = \frac1T$ and $\beta = \sqrt{\log \sprt{K T}}$ gives
    \begin{equation*}
        \sumT \inp{p_t - e_{i^\star}, \ell_t} = \calO \sprt{\sqrt{\log \sprt{K T} \sprt{\sumT \bar v_t + \sumT v_t \sprt{i^\star}}}}\,.
    \end{equation*}
\end{theorem}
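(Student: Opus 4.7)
The plan is to bound $\sum_t r_t(i^\star)$ by a four-part decomposition: (i) a bias from comparing against a surrogate comparator in $\simplex_\alpha$ instead of $e_{i^\star}$; (ii) two coordinate-wise clipping costs, one at $p_t$ and one at the surrogate; and (iii) an OMD regret on the clipped losses $\tilde\ell_t$, to which standard tools apply. Since $\langle p_t, r_t\rangle = 0$ by the definition of $r_t$, the regret equals $\sum_t(\langle p_t, -r_t\rangle - (-r_t(i^\star)))$, so the analysis reduces to running OMD on the signed losses $-r_t$.

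I would start by introducing the surrogate $p^\star = (1-\alpha)e_{i^\star} + (\alpha/K)\mathbbm{1} \in \simplex_\alpha$. The truncation bias $\sum_t\langle p^\star - e_{i^\star}, -r_t\rangle = \alpha\sum_t(r_t(i^\star) - \langle \mathbbm{1}/K, r_t\rangle)$ can be controlled via Cauchy--Schwarz in $t$, Jensen's inequality $\langle u, r_t\rangle^2 \le \langle u, v_t\rangle$ for $u = \mathbbm{1}/K$, and the crucial consequence of the truncation $\|v_t\|_1 \le (K/\alpha)\bar v_t$, which holds because $p_t(i)\ge\alpha/K$ forces $\bar v_t \ge (\alpha/K)\|v_t\|_1$. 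This yields the $\sqrt{\alpha T}\sqrt{\bar V_T}$ and $\sqrt{\alpha T}\sqrt{V_T(i^\star)}$ components of the stated bound. Next I would swap $-r_t$ for $\tilde\ell_t$ at both $p_t$ and $p^\star$, paying the per-coordinate cost $|\tilde\ell_t(i)-(-r_t(i))| = |r_t(i)|\mathbbm{1}(|r_t(i)|>1/\eta_{t,i}) \le \eta_{t,i} v_t(i)$ already recorded in the sketch of \Cref{th:omdhedge}. After summing with the schedule $\eta_{t,i} = \beta/\sqrt{\max\{\bar V_t, V_t(i)\}}$ and invoking Lemma~4.13 of \citet{orabona2023onlinelearning}, both clipping costs are bounded by $O(\beta(\sqrt{\bar V_T} + \sqrt{V_T(i^\star)}))$.

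For the OMD regret $\sum_t\langle p_t - p^\star, \tilde\ell_t\rangle$ on the clipped losses, I would invoke the standard OMD bound for time-varying regularizers and use \Cref{lemma:three-point} to handle the change of $\psi_t$ across rounds. The cumulative penalty telescopes, thanks to the coordinate-wise monotonicity $\eta_{t,i}\le\eta_{t-1,i}$, to $O(\log(K/\alpha)/\eta_{T,i^\star})$; the choice of $\eta_{T,i^\star}$ then converts this into $O(\beta^{-1}\log(K/\alpha)(\sqrt{\bar V_T}+\sqrt{V_T(i^\star)}))$. The stability term is tamed by the invariant $|\eta_{t,i}\tilde\ell_t(i)|\le 1$ guaranteed by the clipping rule, which forces the mirror point $\tilde p_t(i) = p_t(i)\exp(-\eta_{t,i}\tilde\ell_t(i))$ to satisfy $\tilde p_t(i)\le e\, p_t(i)$; the standard local-norm bound then yields $O(\sum_i\eta_{t,i}p_t(i)v_t(i))$, which again sums to $O(\beta\sqrt{\bar V_T})$ via Lemma~4.13 of \citet{orabona2023onlinelearning}. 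Plugging $\alpha = 1/T$ and $\beta = \sqrt{\log(KT)}$ finally delivers the simplified second bound.

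The main obstacle I anticipate is handling the interaction between the truncated simplex $\simplex_\alpha$, the multi-scale entropic regularizer $\psi_t$, and the coordinate-wise clipping inside the OMD update. Specifically, verifying that the stability bound $\langle p_t - p_{t+1}, \tilde\ell_t\rangle - D_t(p_{t+1}\|p_t) \le O(\sum_i \eta_{t,i} p_t(i) v_t(i))$ remains valid over $\simplex_\alpha$ (rather than the full simplex, where EW-style updates admit a closed form) and that the penalty telescopes cleanly despite per-coordinate learning rates requires a careful combination of \Cref{lemma:three-point} with the invariant $|\eta_{t,i}\tilde\ell_t(i)|\le 1$.
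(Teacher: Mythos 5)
Your decomposition and techniques are essentially the paper's: you use the same surrogate comparator $p^\star = (1-\alpha)e_{i^\star}+\tfrac{\alpha}{K}\mathbbm{1} \in \simplex_\alpha$ bounded via Cauchy--Schwarz/Jensen and $p_t(i)\ge \alpha/K$; the same coordinate-wise clipping cost bounded by $\eta_{t,i}v_t(i)$ and summed by Lemma~4.13 of Orabona; and the same stability argument using Taylor's theorem at the unconstrained mirror point together with $|\eta_{t,i}\tilde\ell_t(i)|\le 1$. The difference in ordering (computing the bias on $-r_t$ and clipping at both $p_t$ and $p^\star$, versus the paper's clipping at $p_t$ and $e_{i^\star}$ and then comparing $\tilde R_T(e_{i^\star})$ to $\tilde R_T(u_{i^\star})$) is cosmetic and does not affect the final constants.

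There is one genuine gap in the penalty part of your OMD regret bound. You claim the cumulative penalty ``telescopes, thanks to the coordinate-wise monotonicity $\eta_{t,i}\le\eta_{t-1,i}$, to $O(\log(K/\alpha)/\eta_{T,i^\star})$.'' This clean telescoping holds for a \emph{scalar} learning rate (where the $\sum_i p_t(i) = \sum_i u(i) = 1$ factors cancel), but it fails for coordinate-wise learning rates with the unnormalized entropic Bregman $D_t(u\|p)=\sumK \tfrac{1}{\eta_{t,i}}\bigl[u(i)\log\tfrac{u(i)}{p(i)}-u(i)+p(i)\bigr]$: after rewriting
$\sumT \bigl(D_t(u_{i^\star}\|p_t)-D_t(u_{i^\star}\|p_{t+1})\bigr) \le D_1(u_{i^\star}\|p_1)+\sum_{t=2}^T\bigl(D_t(u_{i^\star}\|p_t)-D_{t-1}(u_{i^\star}\|p_t)\bigr)$,
the $-u(i)+p_t(i)$ part of the Bregman leaves a positive residual $\sumT\sumK\bigl(\tfrac{1}{\eta_{t,i}}-\tfrac{1}{\eta_{t-1,i}}\bigr)p_t(i)$ that does not vanish and is not of the form $\log(K/\alpha)/\eta_{T,i^\star}$. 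The paper handles it via the estimate $\tfrac{1}{\eta_{t,i}}-\tfrac{1}{\eta_{t-1,i}}\le \tfrac{\bar v_t+v_t(i)}{\beta\sqrt{\sum_{s\le t}\bar v_s}}$ (see \Cref{eq:inverse-learning-rates-difference}) followed by Lemma~4.13 of \citet{orabona2023onlinelearning}, yielding an extra $\tfrac{4}{\beta}\sqrt{\bar V_T}$. This residual is ultimately absorbed into the final bound, so your high-level plan still works, but it is a separate technical step you need to carry out explicitly rather than attributing it to a telescoping that does not actually occur.
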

\begin{proof}
    Let $i^\star \in \sbrk{K}$ be our point-mass comparator, let $\bar\ell_t = -r_t$ for any $t \in [T]$, and recall that $\calP_\alpha = \scbrk{p \in \simplex : p \geq \frac{\alpha}{K} \mathbbm{1}}$ is the truncated probability simplex, for any $\alpha > 0$. Define $\trunc_T(x) = \sumT \bigl\langle x, \bar\ell_t - \tilde\ell_t\bigr\rangle$ to denote the cost of truncating the losses for any $x \in \bbR^K$ and, by convention, let $\trunc_T(p_{1:T}) = \sumT \bigl\langle p_t, \bar\ell_t - \tilde\ell_t\bigr\rangle$ where $p_{1:T} = (p_1, \dots, p_T)$.
    Moreover, define $\tilde R_T(q) = \sumT \bigl\langle p_t - q, \tilde\ell_t \bigr\rangle$ for any $q \in \simplex$.
    We consider the following regret decomposition:
    \begin{align*}
        \sumT \inp{p_t - e_{i^\star}, \ell_t} &= \sumT \inp{p_t - e_{i^\star}, \ell_t - \tilde \ell_t} + \sumT \inp{p_t - e_{i^\star}, \tilde \ell_t} \\
        &= \underbrace{\sumT \inp{p_t - e_{i^\star}, \bar \ell_t - \tilde \ell_t}}_{= \trunc_T \sprt{p_{1:T}} + \trunc_T \sprt{- e_{i^\star}}} + \underbrace{\sumT \inp{p_t - e_{i^\star}, \tilde \ell_t}}_{= \tilde R_T \sprt{e_{i^\star}}},
    \end{align*}
    where we added a constant vector $\inp{p_t, \ell_t} \mathbbm{1} = \ell_t - \bar\ell_t$ to $\ell_t$ at each term of the first sum in the second equality because, for any $t$, both $p_t$ and $e_{i^\star}$ are probability distributions and thus $\inp{p_t - e_{i^\star}, c \cdot \mathbbm{1}} = c - c = 0$ for any $c \in \bbR$. The first term is just the cost of truncating the losses, and the second is just the regret of OMD on the truncated losses. We start with the latter.
    \paragraph{Step 1: control the regret of OMD.} Fix any $t \in \sbrk{T}$. Note that by the first-order optimality condition of $p_{t+1}$ we have, for any $u \in \calP_\alpha$,
    \begin{equation*}
        \inp{\tilde \ell_t + \nabla \psi_t \sprt{p_{t+1}} - \nabla \psi_t \sprt{p_t}, u - p_{t+1}} \geq 0.
    \end{equation*}
    Following a simple sequence of derivations also referred to as the 3-point identity (see \Cref{lemma:three-point}), the inequality above can be rewritten as
    \begin{equation}
    \label{eq:first-order-omd}
        \inp{\tilde \ell_t, u - p_{t+1}} - \sprt{D_t \sprt{u \| p_{t+1}} + D_t \sprt{p_{t+1} \| p_t} - D_t \sprt{u \| p_t}} \geq 0.
    \end{equation}
    This inequality cannot be applied to $\tilde R_T \sprt{e_{i^\star}}$ since $e_{i^\star}$ does not belong to $\calP_\alpha$. Instead, we use a mixture of $e_{i^\star}$ and the uniform distribution, $u_{i^\star} = \sprt{1 - \alpha} e_{i^\star} + \frac{\alpha}{K} \mathbbm{1} \in \calP_\alpha$, and notice that by definition of $u_{i^\star}$ and a triangle inequality
    \begin{align*}
        \tilde R_T \sprt{e_{i^\star}} - \tilde R_T \sprt{u_{i^\star}} & = \sumT \inp{u_{i^\star} - e_{i^\star}, \tilde \ell_t} \\
        & = \alpha \inp{\frac1K \mathbbm{1} - e_{i^\star}, \sumT \tilde \ell_t} && \tag{definition of $u_{i^\star}$} \\
        & \le \alpha \sumK \frac{1}{K} \sumT \abs{\tilde \ell_t \sprt{i}}  + \alpha \sumT \abs{\tilde{\ell}_t \sprt{i^\star}}\,.
    \end{align*}
    Next, following up with Cauchy-Schwarz inequality,
    \begin{align*}
        \tilde R_T \sprt{e_{i^\star}} - \tilde R_T \sprt{u_{i^\star}} & \le \alpha \sumK \frac{1}{K} \sqrt{T \sumT \tilde \ell_t \sprt{i}^2} + \alpha \sqrt{T \sumT \tilde \ell_t \sprt{i^\star}^2} \\
        & \le \alpha \sqrt{T \sumT \sumK \frac{1}{K} \tilde \ell_t \sprt{i}^2} + \alpha \sqrt{T \sumT \tilde \ell_t \sprt{i^\star}^2} \tag{Jensen's inequality} \\
        & \le \sqrt{\alpha T \sumT \sumK p_t \sprt{i} \tilde \ell_t \sprt{i}^2} + \alpha \sqrt{T \sumT \tilde \ell_t \sprt{i^\star}^2} \tag{$p_t \in \calP_\alpha$} \\
        & \le \sqrt{\alpha T} \sprt{\sqrt{\sumT \bar v_t} + \sqrt{\sumT v_t \sprt{i^\star}}} \tag{Cauchy-Schwarz, $\tilde \ell_t^2 \leq \bar \ell_t^2$} \;.
    \end{align*}
    Therefore, we can focus on bounding $\tilde R_T \sprt{u_{i^\star}}$. From \Cref{eq:first-order-omd}, we have
    \begin{equation*}
        \tilde R_T \sprt{u_{i^\star}} \leq \sumT \sprt{\inp{p_t - p_{t+1}, \tilde \ell_t} - D_t \sprt{p_{t+1} \| p_t} + D_t \sprt{u_{i^\star} \| p_t} - D_t \sprt{u_{i^\star} \| p_{t+1}}}.
    \end{equation*}
    The first difference within the sum can be analyzed with local norms arguments \citep[Section~6.5]{orabona2023onlinelearning}. However, we do not have control over the range of the losses, thus we cannot use the standard arguments and instead resort to specific learning rates to account for this. On the other hand, the second difference is almost a telescoping sum and requires a more careful analysis for the same reason. We start with the first. Denote $\tilde p_{t+1} = \argmin_{p \in \bbRnnK} \scbrk{\inp{p, \tilde \ell_t} + D_t \sprt{p \| p_t}}$ the minimizer of the unconstrained optimization problem, and note that for any $i \in \sbrk{K}$, $\tilde p_{t+1} \sprt{i} = p_t \sprt{i} \exp \sprt{- \eta_{t, i} \tilde \ell_t \sprt{i}} \leq 3 p_t \sprt{i}$, where the inequality holds by definition of the losses which satisfy $\abs{\tilde \ell_t \sprt{i}} \leq \frac{1}{\eta_{t, i}}$. Since the function $\psi_t$ is twice differentiable on $\bbRpK$, by Taylor's theorem there exists $z_t \in \sbrk{p_t, \tilde p_{t+1}}$ such that $D_t \sprt{\tilde p_{t+1} \| p_t} = \frac12 \norm{\tilde p_{t+1} - p_t}_{\nabla^2 \psi_t \sprt{z_t}}^2$. Therefore, the first difference in the inequality above becomes, for any $t \in \sbrk{T}$,
    \begin{align*}
        \inp{p_t - p_{t+1}, \tilde \ell_t} - D_t \sprt{p_{t+1} \| p_t} &\leq \inp{p_t - \tilde p_{t+1}, \tilde \ell_t} - D_t \sprt{\tilde p_{t+1} \| p_t} && \text{(optimality of $\tilde p_{t+1}$)} \\
        &= \inp{p_t - \tilde p_{t+1}, \tilde \ell_t} - \frac12 \norm{\tilde p_{t+1} - p_t}_{\nabla^2 \psi_t \sprt{z_t}}^2 && \text{(Taylor's theorem)} \\
        &\leq \max_{x \in \bbR^K} \scbrk{\inp{x, \tilde \ell_t} - \frac12 \norm{x}_{\nabla^2 \psi_t \sprt{z_t}}} \\
        &= \frac12 \norm{\tilde \ell_t}_{\sprt{\nabla^2 \psi_t \sprt{z_t}}^{-1}}^2,
    \end{align*}
    where the last step follows from the Fenchel-Young inequality applied to the convex function $\frac12 \norm{\cdot}_{\nabla^2 \psi_t \sprt{z_t}}$. Noting that $\nabla^2 \psi_t \sprt{z_t} = \Diag \sprt{\eta_t \odot z_t}^{-1}$, we further have
    \begin{align*}
        \inp{p_t - p_{t+1}, \tilde \ell_t} - D_t \sprt{p_{t+1} \| p_t} &\leq \frac12 \sumK \eta_{t, i} z_t \sprt{i} \tilde \ell_t \sprt{i}^2 \\
        &\leq \frac32 \sumK \eta_{t, i} p_t \sprt{i} \tilde \ell_t \sprt{i}^2 && \sprt{z_t \leq 3 p_t} \\
        &= \frac{3 \beta}{2} \sumK \frac{p_t \sprt{i} \tilde \ell_t \sprt{i}^2}{\sqrt{\max \scbrk{\sumt \bar v_s, \sumt v_s \sprt{i}}}} && \text{(definition of $\eta_t$)} \\
        &\leq \frac{3 \beta}{2} \cdot \frac{\bar v_t}{\sqrt{\sumt \bar v_s}} && (\tilde \ell_t^2 \leq \bar\ell_t^2).
    \end{align*}
    Summing up for all $t \in \sbrk{T}$, we obtain
    \begin{align*}
        \sumT \inp{p_t - p_{t+1}, \tilde \ell_t} - D_t \sprt{p_{t+1} \| p_t} &\leq \frac{3 \beta}{2} \sumT \frac{\bar v_t}{\sqrt{\sumt \bar v_s}}
        \leq 3 \beta \sqrt{\sumT \bar v_t},
    \end{align*}
    where the inequality follows from, \eg, \citet[Lemma~4.13]{orabona2023onlinelearning}. Moving on to the second difference, we have
    \begin{align*}
        &\sumT \sprt{D_t \sprt{u_{i^\star} \| p_t} - D_t \sprt{u_{i^\star} \| p_{t+1}}}\\
        &\quad = D_T \sprt{u_{i^\star} \| p_T} - D_T \sprt{u_{i^\star} \| p_{T+1}} + \sum_{t=1}^{T-1} \sprt{D_t \sprt{u_{i^\star} \| p_t} - D_t \sprt{u_{i^\star} \| p_{t+1}}} \\
        &\quad = D_1 \sprt{u_{i^\star} \| p_1} - D_T \sprt{u_{i^\star} \| p_{T+1}} + \sum_{t=2}^T \sprt{D_t \sprt{u_{i^\star} \| p_t} - D_{t-1} \sprt{u_{i^\star} \| p_t}} \\
        &\quad \leq D_1 \sprt{u_{i^\star} \| p_1} + \sum_{t=2}^T \sprt{D_t \sprt{u_{i^\star} \| p_t} - D_{t-1} \sprt{u_{i^\star} \| p_t}}\,,
    \end{align*}
    where the second equality is due to a telescopic sum, and the last inequality is because $D_T \sprt{u_{i^\star} \| p_{T+1}} \geq 0$. The sum above is given by
    \begin{align*}
        &\sum_{t=2}^T \sprt{D_t \sprt{u_{i^\star} \| p_t} - D_{t-1} \sprt{u_{i^\star} \| p_t}} \\
        &\quad = \sum_{t=2}^T \sumK \sprt{\frac{1}{\eta_{t, i}} - \frac{1}{\eta_{t-1, i}}} \sprt{u_{i^\star} \sprt{i} \log \sprt{\frac{u_{i^\star} \sprt{i}}{p_t \sprt{i}}} - u_{i^\star} \sprt{i} + p_t \sprt{i}} \\
        &\quad \leq \sum_{t=2}^T \sumK \sprt{\frac{1}{\eta_{t, i}} - \frac{1}{\eta_{t-1, i}}} \sprt{u_{i^\star} \sprt{i} \log \sprt{\frac{u_{i^\star} \sprt{i}}{p_t \sprt{i}}} + p_t \sprt{i}} \;,
    \end{align*}
    where the inequality is due to having $\eta_{t, i} \leq \eta_{t-1, i}$ and $u_{i^\star} \sprt{i} \geq 0$ for any $i$ and any $t$. Since $p_t \in \calP_\alpha$, we have for any $i \neq i^\star$ that $\frac{u_{i^\star} \sprt{i}}{p_t \sprt{i}} \leq 1$, \ie, $\log \sprt{\frac{u_{i^\star} \sprt{i}}{p_t \sprt{i}}} \leq 0$. Thus,
    \begin{align*}
        &\sum_{t=2}^T \sprt{D_t \sprt{u_{i^\star} \| p_t} - D_{t-1} \sprt{u_{i^\star} \| p_t}} \\
        &\quad \leq \sum_{t=2}^T \sprt{\frac{1}{\eta_{t, i^\star}} - \frac{1}{\eta_{t-1, i^\star}}} u_{i^\star} \sprt{i^\star} \log \sprt{\frac{u_{i^\star} \sprt{i^\star}}{p_t \sprt{i^\star}}} + \sum_{t=2}^T \sumK \sprt{\frac{1}{\eta_{t, i}} - \frac{1}{\eta_{t-1, i}}} p_t \sprt{i} \\
        &\quad \leq \frac{\log\sprt{K/\alpha}}{\eta_{T, i^\star}} + \sumT \sumK \sprt{\frac{1}{\eta_{t, i}} - \frac{1}{\eta_{t-1, i}}} p_t \sprt{i},
    \end{align*}
    where we used $u_{i^\star} \sprt{i^\star} \leq 1$, $p_t \sprt{i^\star} \geq \frac{\alpha}{K}$, and $\eta_{1, i^\star} > 0$ for the first sum and added the non-negative term for $t = 1$ in the second sum. For the remaining sum, notice that for any $t, i$, we have $\frac{1}{\eta_{t, i}} - \frac{1}{\eta_{t-1, i}} = \eta_{t, i} \sprt{\frac{1}{\eta_{t, i}^2} - \frac{1}{\eta_{t, i} \, \eta_{t-1, i}}} \leq \eta_{t, i} \sprt{\frac{1}{\eta_{t, i}^2} - \frac{1}{\eta_{t-1, i}^2}}$ due to $\eta_{t, i} \le \eta_{t-1, i}$. Using the definition of the learning rates, we get
    \begin{align}
        \frac{1}{\eta_{t, i}} - \frac{1}{\eta_{t-1, i}}
        & \le \eta_{t, i} \sprt{\frac{1}{\eta_{t, i}^2} - \frac{1}{\eta_{t-1, i}^2}} \\
        & = \frac{\max \scbrk{\sum_{s \leq t} \bar v_s, \sum_{s \leq t} v_s \sprt{i}} - \max \scbrk{\sum_{s < t} \bar v_s, \sum_{s < t} v_s \sprt{i}}}{\beta \sqrt{\max \scbrk{\sum_{s \leq t} \bar v_s, \sum_{s \leq t} v_s \sprt{i}}}} \nonumber\\
        & \leq \frac{\max \scbrk{\bar v_t, v_t \sprt{i}}}{\beta \sqrt{\max \scbrk{\sum_{s \leq t} \bar v_s, \sum_{s \leq t} v_s \sprt{i}}}} \nonumber\\
        &\leq \frac{\bar v_t + v_t \sprt{i}}{\beta \sqrt{\sum_{s \leq t} \bar v_s}} \;. \label{eq:inverse-learning-rates-difference}
    \end{align}
    Therefore, adding everything up
    \begin{align*}
        \sumT \sumK \sprt{\frac{1}{\eta_{t, i}} - \frac{1}{\eta_{t-1, i}}} p_t \sprt{i} & \le \sumT \sumK \frac{\bar v_t + v_t \sprt{i}}{\beta \sqrt{\sum_{s \leq t} \bar v_s}} p_t \sprt{i} \\
        & = \frac2\beta \sumT \frac{\bar v_t}{\sqrt{\sum_{s \leq t} \bar v_s}} && \text{($p_t \in \simplex$, definition $\bar v_t$)} \\
        & \le \frac4\beta \sqrt{\sumT \bar v_t} \;,
    \end{align*}
    where the last inequality follows again from \citet[Lemma~4.13]{orabona2023onlinelearning}. Putting it back into the previous inequality, we get
    \begin{align*}
        \sum_{t=2}^T \sprt{D_t \sprt{u_{i^\star} \| p_t} - D_{t-1} \sprt{u_{i^\star} \| p_t}}
        & \leq \frac{\log \sprt{K/\alpha}}{\eta_{T, i^\star}} + \frac4\beta \sqrt{\sumT \bar v_t} \\
        & \leq \frac{\log\sprt{K/\alpha}}{\beta} \sprt{\sqrt{\sumT v_t \sprt{i^\star}} + \sqrt{\sumT \bar v_t}} + \frac4\beta \sqrt{\sumT \bar v_t} \;.
    \end{align*}
    Finally, the regret incurred by OMD on the truncated losses is bounded by
    \begin{equation*}
        \tilde R_T \sprt{e_{i^\star}} \leq \sprt{\sqrt{\alpha T} + 3 \beta + \frac{4 + \log\sprt{K/\alpha}}{\beta}} \sqrt{\sumT \bar v_t} + \sprt{\sqrt{\alpha T} + \frac{\log\sprt{K/\alpha}}{\beta}} \sqrt{\sumT v_t \sprt{i^\star}} \;.
    \end{equation*}
    \paragraph{Step 2: control the cost of truncation.} We have
    \begin{align*}
        \trunc_T \sprt{p_{1:T}} &= \sumT \sumK p_t \sprt{i} \sprt{\bar\ell_t \sprt{i} - \tilde \ell_t \sprt{i}} \\
        &= \sumT \sumK p_t \sprt{i} \bar\ell_t \sprt{i} \mathbbm{1} \sprt{\abs{\bar\ell_t \sprt{i}} > \eta_{t, i}^{-1}} && \text{(definition of $\tilde \ell_t$)} \\
        &\leq \sumT \sumK p_t \sprt{i} \frac{\bar\ell_t \sprt{i}^2}{\abs{\bar\ell_t \sprt{i}}} \mathbbm{1} \sprt{\abs{\bar\ell_t \sprt{i}} > \eta_{t, i}^{-1}} && \sprt{p_t \geq 0} \\
        &\leq \sumT \sumK p_t \sprt{i} \eta_{t, i} \bar\ell_t \sprt{i}^2.
    \end{align*}
    Plugging the definition of the learning rates $\eta_t$, we obtain
    \begin{align*}
        \trunc_T \sprt{p_{1:T}} & \le \beta \sumT \sumK p_t \sprt{i} \frac{\bar\ell_t \sprt{i}^2}{\sqrt{\max \scbrk{\sumt \bar v_s, \sumt v_s \sprt{i}}}} && \text{(definition of $\eta_t$)} \\
        &\leq \beta \sumT \sumK \frac{p_t \sprt{i} \bar\ell_t \sprt{i}^2}{\sqrt{\sumt \bar v_s}} \\
        &= \beta \sumT \frac{\bar v_t}{\sqrt{\sumt \bar v_s}} \\
        &\leq 2 \beta \sqrt{\sumT \bar v_t} \;,
    \end{align*}
    where the last inequality follows from \citet[Lemma~4.13]{orabona2023onlinelearning}. Likewise,
    \begin{equation*}
        \trunc_T \sprt{- e_{i^\star}} \leq 2 \beta \sqrt{\sumT v_t \sprt{i^\star}} \;.
    \end{equation*}
    Overall, our regret is bounded by
    \begin{align*}
        &\sumT \inp{p_t - e_{i^\star}, \ell_t} \\
        &\quad\leq \sprt{\sqrt{\alpha T} + 5 \beta + \frac{4 + \log \sprt{K / \alpha}}{\beta}} \sqrt{\sumT \bar v_t} + \sprt{\sqrt{\alpha T} + \frac{\log\sprt{K/\alpha}}{\beta} + 2 \beta} \sqrt{\sumT v_t \sprt{i^\star}} \,. \qedhere
    \end{align*}
\end{proof}

\Cref{thm:omd-adv-bound} is a more general result and it is indeed stronger than what we originally stated in \Cref{th:omdhedge}.
In particular, we are able to show that the former result implies the latter under \Cref{asp:finite-second-moment}.
This is illustrated by the following corollary.

\begin{corollary} \label{cor:omd-bound-bounded-moments}
    Suppose \Cref{asp:finite-second-moment} holds.
    Then, \Cref{alg:omd-alg} with $\alpha = \frac{1}{T}$ and $\beta = \sqrt{\log(KT)}$ guarantees
    \begin{equation*}
        R_T = \calO\sprt{\sqrt{\theta T\log(KT)}} \;.
    \end{equation*}
\end{corollary}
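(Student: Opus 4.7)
The plan is to use \Cref{thm:omd-adv-bound} as a black box and simply reduce the bound to expected quantities under \Cref{asp:finite-second-moment}. With the chosen $\alpha = 1/T$ and $\beta = \sqrt{\log(KT)}$, every coefficient in front of $\sqrt{\sum_t \bar v_t}$ and $\sqrt{\sum_t v_t(i^\star)}$ simplifies to $\mathcal{O}\bigl(\sqrt{\log(KT)}\bigr)$: indeed $\sqrt{\alpha T} = 1$, $\beta = \sqrt{\log(KT)}$, and $(4 + \log(K/\alpha))/\beta = \mathcal{O}(\sqrt{\log(KT)})$ since $\log(K/\alpha) = \log(KT)$. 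Thus the pathwise bound of \Cref{thm:omd-adv-bound} reads
\begin{equation*}
    \sumT \inp{p_t - e_{i^\star}, \ell_t} \;\le\; C \sqrt{\log(KT)} \, \Bigl(\sqrt{\sumT \bar v_t} + \sqrt{\sumT v_t(i^\star)}\Bigr)
\end{equation*}
for an absolute constant $C$ and any fixed $i^\star$.

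Next, I would pick $i^\star = \argmin_i \E\bigl[\sumT \ell_t(i)\bigr]$ (deterministic, per the notation paragraph in \Cref{sec:prelim}), take expectations on both sides of the above pathwise inequality, and apply Jensen's inequality to pull the expectation inside the square roots:
\begin{equation*}
    R_T \;\le\; C \sqrt{\log(KT)} \, \Bigl(\sqrt{\E\bigl[\textstyle\sumT \bar v_t\bigr]} + \sqrt{\E\bigl[\textstyle\sumT v_t(i^\star)\bigr]}\Bigr) .
\end{equation*}

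The only computation left is to bound $\E[v_t(i)]$ and $\E[\bar v_t]$ by $\mathcal{O}(\theta)$. Here I would use $(a-b)^2 \le 2a^2 + 2b^2$ together with Jensen's inequality applied to $(\inp{p_t,\ell_t})^2 \le \sum_j p_t(j)\ell_t(j)^2$ (valid because $p_t$ is $\calF_t$-measurable). Conditioning on $\calF_t$ and invoking \Cref{asp:finite-second-moment} give $\E_t\bigl[(\inp{p_t,\ell_t})^2\bigr] \le \theta$ and $\E_t[\ell_t(i)^2] \le \theta$, so $\E_t[r_t(i)^2] \le 4\theta$ for every $i$, and in particular $\E[v_t(i^\star)] \le 4\theta$ and $\E[\bar v_t] = \sum_i \E[p_t(i) v_t(i)] \le 4\theta$ by the tower rule. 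Summing over $t$ yields $\E[\sum_t v_t(i^\star)] \le 4\theta T$ and $\E[\sum_t \bar v_t] \le 4\theta T$, and plugging back produces $R_T = \mathcal{O}\bigl(\sqrt{\theta T\log(KT)}\bigr)$.

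There is no genuine obstacle here, since \Cref{thm:omd-adv-bound} does all the heavy lifting; the only care point is that pseudo-regret places the maximum outside the expectation, which is why $i^\star$ is chosen deterministically as the minimizer of $\E[\sumT \ell_t(i)]$ rather than a data-dependent quantity, so that Jensen's inequality applies cleanly to $\sumT v_t(i^\star)$.
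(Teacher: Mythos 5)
Your proposal is correct and follows essentially the same route as the paper: specialize \Cref{thm:omd-adv-bound} to $\alpha=1/T$, $\beta=\sqrt{\log(KT)}$, take expectations, apply Jensen to pull the expectation inside the square roots, and then bound $\E_t[v_t(i)]$ and $\E_t[\bar v_t]$ by $\mathcal{O}(\theta)$ via $(a-b)^2 \le 2a^2+2b^2$ and Jensen. The only cosmetic differences are the order in which you apply the two inequalities and the fact that the paper sharpens $\E_t[\bar v_t]\le\theta$ by noting $\bar v_t$ is a variance, whereas you settle for $4\theta$; this does not affect the big-O conclusion.
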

\begin{proof}
    Recall that, by \Cref{thm:omd-adv-bound}, \Cref{alg:omd-alg} with $\alpha = \frac{1}{T}$ and $\beta = \sqrt{\log(K)}$ already guarantees
    \begin{equation}\label{eq:recall-omd-bound}
        \sumT \inp{p_t - e_{i^\star}, \ell_t} = \calO\sprt{\sqrt{\log(KT) \sprt{\sumT \bar v_t + \sumT v_t(i^\star)}}}
    \end{equation}
    for any sequence of losses.
    Now, focus on the $\bar v_t$ and $v_t(i)$ terms.
    First, for any $i \in [K]$, we can observe that $v_t(i)$ satisfies
    \begin{align*}
        \E_t[v_t(i)]
        &= \E_t\sbrk{\sprt{\ell_t(i) - \sumjK p_t(j) \ell_t(j)}^{\!2}} && \text{(definition of $v_t(i)$)} \\
        &= \E_t\sbrk{\sprt{\sumjK p_t(j) \sprt{\ell_t(i) - \ell_t(j)}}^2} \\
        &\le \E_t\sbrk{\sumjK p_t(j) (\ell_t(i) - \ell_t(j))^2} && \text{(Jensen's inequality)} \\
        &\le 2\E_t[\ell_t(i)^2] + 2\E_t\sbrk{\sumjK p_t(j) \ell_t(j)^2} && \text{(using $(a-b)^2 \le 2a^2 + 2b^2$ and $p_t \in \simplex$)} \\
        &= 2\E_t[\ell_t(i)^2] + 2\sumjK p_t(j) \E_t\sbrk{\ell_t(j)^2} \\
        &\le 4\theta \;,
    \end{align*}
    where the last inequality follows by \Cref{asp:finite-second-moment} and the fact that $p_t \in \simplex$.
    Then, we can move to $\bar v_t$ and notice that, by its definition, it satisfies
    \begin{align*}
        \E_t[\bar v_t]
        &= \E_t\sbrk{\sumK p_t(i) v_t(i)} \\
        &= \E_t\sbrk{\sumK p_t(i) \sprt{\ell_t(i) - \sumjK p_t(j) \ell_t(j)}^{\!2}} && \text{(definition of $v_t(i)$)} \\
        &\le \E_t\sbrk{\sumK p_t(i) \ell_t(i)^2} && \text{(using $\bbV[X] \le \E[X^2]$)} \\
        &= \sumK p_t(i) \E_t\sbrk{\ell_t(i)^2} \\
        &\le \theta \;,
    \end{align*}
    where the last inequality holds by both \Cref{asp:finite-second-moment} and the fact that $p_t \in \simplex$.

    At this point, we can observe that $R_T = \max_{i^\star \in [K]} \E[\inp{p_t - e_{i^\star}, \ell_t}]$, i.e., it corresponds to the expected value of the left-hand side of \Cref{eq:recall-omd-bound}.
    We can then focus on the expected value of its right-hand side (ignoring constant factors) and, by applying Jensen's inequality with respect to the square root and the tower rule of expectation, we infer that
    \begin{align*}
        \E\sbrk{\sqrt{\log(KT) \sumT \sprt{\bar v_t + v_t(i^\star)}}}
        &\le \sqrt{\log(KT) \E\sbrk{\sumT \bigl(\E_t\sbrk{\bar v_t} + \E_t\sbrk{v_t(i^\star)}\bigr)}} \\
        &\le \sqrt{5\theta T\log(KT)} \;.
    \end{align*}
    This concludes the proof.
\end{proof}

\subsection{Self-Bounded Environments}

In this section, we provide a regret bound for self-bounded environments defined in \cite{Zimmert2021tsallis}.
\begin{theorem}
    \label{thm:omd-sb-bound}
    Let \Cref{asp:finite-second-moment}, \ref{asp:self-bounding-env} hold, and consider Algorithm~\ref{alg:omd-alg} run with parameters $\alpha \in \left( 0, 1 \right]$, $\beta > 0$. We denote $C_3 \sprt{K, T} = 2 \sqrt{\alpha T} + 7 \beta + \frac{4 + 2 \log \sprt{K / \alpha}}{\beta}$. Then, we have
    \begin{equation*}
        \bbE \sbrk{R_T} \le \begin{cases} \frac{16\, C_3 \sprt{K, T}^2 \theta}{\Delta_{\min}} + \frac{8\, C_3 \sprt{K, T} \sqrt{\theta C}}{\sqrt{\Delta_{\min}}} & \text{if}\; C \le \frac{16\, C_3 \sprt{K, T}^2 \theta}{\Delta_{\min}} \\
        \frac{8\, C_3 \sprt{K, T} \sqrt{\theta C}}{\sqrt{\Delta_{\min}}} & \text{otherwise} \end{cases}\,.
    \end{equation*}
    Furthermore, setting $\alpha = \frac1T$ and $\beta = \sqrt{\log \sprt{K T}}$ gives $C_3 \sprt{K, T} = \calO \sprt{\sqrt{\log \sprt{K T}}}$ and
    \begin{equation*}
        \bbE \sbrk{R_T} \le \begin{cases} \frac{3600 \log \sprt{K T} \theta}{\Delta_{\min}} + 120 \sqrt{\frac{\log \sprt{K T} \theta C}{\Delta_{\min}}} & \text{if}\; C \le \frac{16\, C_3 \sprt{K, T}^2 \theta}{\Delta_{\min}} \\
        120 \sqrt{\frac{\log \sprt{K T} \theta C}{\Delta_{\min}}} & \text{otherwise} \end{cases}\,.
    \end{equation*}
\end{theorem}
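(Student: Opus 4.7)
The plan is to invoke the pathwise adversarial bound of \Cref{thm:omd-adv-bound} and combine it with \Cref{asp:self-bounding-env} via the standard self-bounding trick. As a first step, I would collapse the two square-root terms of \Cref{thm:omd-adv-bound} into a single one using $a \sqrt{x} + b \sqrt{y} \le (a+b) \sqrt{x + y}$, obtaining the pointwise inequality
\begin{equation*}
    \sumT \inp{p_t - e_{i^\star}, \ell_t} \le C_3 \sprt{K, T} \sqrt{\bar V_T + V_T \sprt{i^\star}} \;.
\end{equation*}
This holds with probability one for any sequence of losses, and since $i^\star$ is a deterministic index (the minimizer of $\bbE \sbrk{\sumT \ell_t \sprt{i}}$), it can be treated as a constant under conditioning on $\calF_t$.

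The key technical step is to sharpen the second-moment control of $\bar v_t$ and $v_t \sprt{i^\star}$ so that each conditional expectation is at most $4 \theta \sprt{1 - p_t \sprt{i^\star}}$, rather than merely $\calO \sprt{\theta}$. For $\bar v_t$, since variance is minimized by the mean,
\begin{equation*}
    \bar v_t = \bbV_{i \sim p_t} \sbrk{\ell_t \sprt{i}} \le \sumK p_t \sprt{i} \sprt{\ell_t \sprt{i} - \ell_t \sprt{i^\star}}^2 = \sum_{i \neq i^\star} p_t \sprt{i} \sprt{\ell_t \sprt{i} - \ell_t \sprt{i^\star}}^2 \;,
\end{equation*}
so taking $\bbE_t$ and using $\sprt{\ell_t \sprt{i} - \ell_t \sprt{i^\star}}^2 \le 2 \ell_t \sprt{i}^2 + 2 \ell_t \sprt{i^\star}^2$ together with \Cref{asp:finite-second-moment} yields the claimed bound. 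For $v_t \sprt{i^\star}$, I would write $r_t \sprt{i^\star} = \sum_{i \neq i^\star} p_t \sprt{i} \sprt{\ell_t \sprt{i} - \ell_t \sprt{i^\star}}$ and apply Cauchy-Schwarz with weights $p_t \sprt{i}$ to obtain
\begin{equation*}
    v_t \sprt{i^\star} \le \sprt{1 - p_t \sprt{i^\star}} \sum_{i \neq i^\star} p_t \sprt{i} \sprt{\ell_t \sprt{i} - \ell_t \sprt{i^\star}}^2 \;,
\end{equation*}
which gives the matching conditional bound. Summing, $\bbE_t \sbrk{\bar v_t + v_t \sprt{i^\star}} \le 8 \theta \sprt{1 - p_t \sprt{i^\star}}$.

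Taking the expectation of the pathwise bound and pulling it inside the square root via Jensen's inequality, the previous step yields $R_T \le C_3 \sprt{K, T} \sqrt{8 \theta S}$ with $S = \bbE \sbrk{\sumT \sprt{1 - p_t \sprt{i^\star}}}$. \Cref{asp:self-bounding-env} implies $S \le \sprt{R_T + C} / \Delta_{\min}$, so squaring yields the quadratic inequality $R_T^2 \le a \sprt{R_T + C}$ with $a = 8 C_3 \sprt{K, T}^2 \theta / \Delta_{\min}$. Resolving this using $\sqrt{x + y} \le \sqrt{x} + \sqrt{y}$ produces $R_T \le a + \sqrt{a C}$, after which the two-case split follows. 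When $C \le 2 a$, the first term is bounded by $2 a = 16 C_3 \sprt{K, T}^2 \theta / \Delta_{\min}$, and $\sqrt{a C} = 2 \sqrt{2}\, C_3 \sqrt{\theta C / \Delta_{\min}}$ is bounded by $8 C_3 \sqrt{\theta C / \Delta_{\min}}$, recovering the stated form. When $C > 2 a$, one uses $a \le \sqrt{a C}$ (valid since $a < C$) to absorb the first term into the second, yielding the single-term bound. The main obstacle is the sharpened second-moment control in the middle step, since without the factor $\sprt{1 - p_t \sprt{i^\star}}$ the self-bounding trick is not activated and only the adversarial $\sqrt{T}$-type rate is recovered.
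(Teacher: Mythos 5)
Your proof is correct but takes a genuinely different route from the paper's in its final step. The first half matches: you combine the two square roots via $a\sqrt{x}+b\sqrt{y}\le(a+b)\sqrt{x+y}$ to arrive at $R_T(e_{i^\star}) \le C_3\sqrt{\bar V_T + V_T(i^\star)}$ pathwise, and you bound the conditional second moments of $\bar v_t$ and $v_t(i^\star)$ by $4\theta(1-p_t(i^\star))$. For $v_t(i^\star)$ you use weighted Cauchy--Schwarz (getting the slightly stronger $4\theta(1-p_t(i^\star))^2$, which you discard), whereas the paper uses Jensen directly; both yield what is needed. The key divergence is in how you exploit \Cref{asp:self-bounding-env}: you substitute $S \le (R_T+C)/\Delta_{\min}$ into the squared bound to obtain $R_T^2 \le a(R_T+C)$ and solve the resulting quadratic inequality in $R_T$ via subadditivity of the square root, giving $R_T \le a + \sqrt{aC}$. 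The paper instead applies the $(1+\lambda)$-multiplier trick, absorbs the $\sqrt{S}$ term against $\lambda\Delta_{\min}S$ via AM--GM, and then tunes $\lambda$. Both are standard self-bounding maneuvers; yours is more direct and gives slightly sharper constants (hence the harmless relaxations $a\le 2a$ and $2\sqrt 2\le 8$ needed to match the stated bound), while the paper's $(1+\lambda)$-trick is more modular and extends to regret bounds that are not a pure square root of $S$. One point you should state explicitly: squaring $R_T \le C_3\sqrt{8\theta S}$ requires $R_T \ge 0$; this is without loss of generality since the claimed bounds are non-negative, so if $R_T<0$ there is nothing to prove.
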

\begin{proof}
    From Jensen's inequality applied to the regret bound proven for adversarial environments in Theorem~\ref{thm:omd-adv-bound}, we have
    \begin{align*}
        \bbE \sbrk{R_T \sprt{e_{i^\star}}} & \le C_1 \sprt{K, T} \sqrt{\sumT \sumK \bbE \sbrk{p_t \sprt{i} \sprt{\ell_t \sprt{i} - \sumjK p_t \sprt{j} \ell_t \sprt{j}}^2}} \\
        &\quad + C_2 \sprt{K, T} \sqrt{\sumT \bbE \sbrk{\sprt{\ell_t \sprt{i^\star} - \sumjK p_t \sprt{j} \ell_t \sprt{j}}^2}}\,,
    \end{align*}
    where $C_1 \sprt{K, T} = \sqrt{\alpha T} + 5 \beta + \frac{4 + \log \sprt{K / \alpha}}{\beta}$, and $C_2 \sprt{K, T} = \sqrt{\alpha T} + \frac{\log\sprt{K/\alpha}}{\beta} + 2 \beta$. Observe that for any $t \in \sbrk{T}$, the expectation in the first term is a variance that can be bounded by the second moment ($\bbV \sbrk{X} \le \bbE \sbrk{X^2}$ for any random variable $X$)
    \begin{align*}
        & \bbE_t \sbrk{\sumK p_t \sprt{i} \sprt{\ell_t \sprt{i} - \sumjK p_t \sprt{j} \ell_t \sprt{j}}^2} \\
        & = \bbE_t \sbrk{\sumK p_t \sprt{i} \sprt{\sprt{\ell_t \sprt{i} - \ell_t \sprt{i^\star}} - \sumjK p_t \sprt{j} \sprt{\ell_t \sprt{j} - \ell_t \sprt{i^\star}}}^2} \\
        & \le \bbE_t \sbrk{\sum_{i \neq i^\star} p_t \sprt{i} \sprt{\ell_t \sprt{i} - \ell_t \sprt{i^\star}}^2}\,.
    \end{align*}
    Further, using the inequality $\sprt{a + b}^2 \le 2 \sprt{a^2 + b^2}$ for any $a, b \in \bbR$,
    \begin{align*}
        \bbE_t \sbrk{\sumK p_t \sprt{i} \sprt{\ell_t \sprt{i} - \sumjK p_t \sprt{j} \ell_t \sprt{j}}^2} & \le 2 \bbE_t \sbrk{\sum_{i \neq i^\star} p_t \sprt{i} \sprt{\ell_t \sprt{i}^2 + \ell_t \sprt{i^\star}^2}} \\
        & \leq 4 \theta \sprt{1 - p_t \sprt{i^\star}} \;,
    \end{align*}
    where the last inequality is by Assumption~\ref{asp:finite-second-moment} and $\inp{p_t, \mathbbm{1}} = 1$. Likewise, for the second term we have by Jensen's inequality
    \begin{align*}
        \bbE_t \sbrk{\sprt{\ell_t \sprt{i^\star} - \sumjK p_t \sprt{j} \ell_t \sprt{j}}^2} & \le \bbE_t \sbrk{\sum_{j \neq i^\star} p_t \sprt{j} \sprt{\ell_t \sprt{i^\star} - \ell_t \sprt{j}}^2} \\
        & \le 2 \bbE_t \sbrk{\sum_{j \neq i^\star} p_t \sprt{j} \sprt{\ell_t \sprt{i^\star}^2 + \ell_t \sprt{j}^2}} \\
        & \le 4 \theta \sprt{1 - p_t \sprt{i^\star}} \;,
    \end{align*}
    where the last inequality is again by Assumption~\ref{asp:finite-second-moment} and $\inp{p_t, \mathbbm{1}} = 1$. Denote $\Delta_{\min} = \min_{i \neq i^\star} \Delta_i$, and $C_3 \sprt{K, T} = C_1 \sprt{K, T} + C_2 \sprt{K, T}$. Combining the above and Assumption~\ref{asp:self-bounding-env}, we find that for any $\lambda \in \left( 0, 1 \right]$,
    \begin{align*}
        \bbE \sbrk{R_T} & = \sprt{1 + \lambda} \bbE \sbrk{R_T} - \lambda \bbE \sbrk{R_T} \\
        & \le \sprt{1 + \lambda} C_3 \sprt{K, T} \sqrt{4 \theta\, \bbE \sbrk{\sumT \sprt{1 - p_t \sprt{i^\star}}}} - \lambda \Delta_{\min} \bbE \sbrk{\sumT \sprt{1 - p_t \sprt{i^\star}}} + \lambda C \\
        & = \sqrt{4 \sprt{1 + \lambda}^2 C_3 \sprt{K, T}^2 \theta\, \bbE \sbrk{\sumT \sprt{1 - p_t \sprt{i^\star}}}} - \lambda \Delta_{\min} \bbE \sbrk{\sumT \sprt{1 - p_t \sprt{i^\star}}} + \lambda C\,.
    \end{align*}
    Using the inequality of arithmetic and geometric means, \ie $\abs{a b} = \inf_{\gamma > 0} \frac{a^2}{2 \gamma} + \frac{\gamma b^2}{2}$,
    \begin{align*}
        \bbE \sbrk{R_T} & = \inf_{\gamma > 0} \scbrk{\frac{4 \sprt{1 + \lambda}^2 C_3 \sprt{K, T}^2 \theta}{\gamma} + \gamma \bbE \sbrk{\sumT \sprt{1 - p_t\sprt{i^\star}}}} \\
        &\quad- \lambda \Delta_{\min} \bbE \sbrk{\sumT \sprt{1 - p_t \sprt{i^\star}}} + \lambda C \\
        & \le \frac{4 \sprt{1 + \lambda}^2 C_3 \sprt{K, T}^2 \theta}{\lambda \Delta_{\min}} + \lambda C \\
        & \le \frac{16\, C_3 \sprt{K, T}^2 \theta}{\lambda \Delta_{\min}} + \lambda C \,,
    \end{align*}
    where the first inequality is by taking $\gamma = \lambda \Delta_{\min}$, and the second is by $1 + \lambda \le 2$. In particular, setting $\lambda = \min \scbrk{1, \frac{4 C_3 \sprt{K, T} \sqrt{\theta}}{\sqrt{\Delta_{\min} C}}}$ gives
    \begin{align*}
        \bbE \sbrk{R_T} & \le \frac{16\, C_3 \sprt{K, T}^2 \theta}{\Delta_{\min}} \max \scbrk{1, \frac{\sqrt{\Delta_{\min} C}}{4 C_3 \sprt{K, T} \sqrt{\theta}}} + \min \scbrk{1, \frac{4 C_3 \sprt{K, T} \sqrt{\theta}}{\sqrt{\Delta_{\min} C}}} C \\
        & \le \frac{16\, C_3 \sprt{K, T}^2 \theta}{\Delta_{\min}} + \frac{8\, C_3 \sprt{K, T} \sqrt{\theta C}}{\sqrt{\Delta_{\min}}}\,.
    \end{align*}
    If $C > \frac{16\, C_3 \sprt{K, T}^2 \theta}{\Delta_{\min}}$, then $\lambda = \frac{4 C_3 \sprt{K, T} \sqrt{\theta}}{\sqrt{\Delta_{\min} C}}$ and the previous inequality can be improved to
    \begin{equation*}
        \bbE \sbrk{R_T} \le \frac{8\, C_3 \sprt{K, T} \sqrt{\theta C}}{\sqrt{\Delta_{\min}}}\,.
    \end{equation*}
    Recall that $C_3 \sprt{K, T} = 2 \sqrt{\alpha T} + 7 \beta + \frac{4 + 2 \log \sprt{K / \alpha}}{\beta}$. Setting $\alpha = \frac1T$ and $\beta = \sqrt{\log(KT)}$ gives $C_3 \sprt{K, T} \le 6 + 9 \sqrt{\log \sprt{K T}} \le 15 \sqrt{\log \sprt{K T}}$ since $\log \sprt{K T} \ge 1$. Finally, we obtain
    \begin{equation*}
        \bbE \sbrk{R_T} \le \begin{cases} \frac{3600 \log \sprt{K T} \theta}{\Delta_{\min}} + 120 \sqrt{\frac{\log \sprt{K T} \theta C}{\Delta_{\min}}} & \text{if}\; C \le \frac{16\, C_3 \sprt{K, T}^2 \theta}{\Delta_{\min}} \\
        120 \sqrt{\frac{\log \sprt{K T} \theta C}{\Delta_{\min}}} & \text{otherwise} \end{cases}\,.
    \end{equation*}
    In the special case of a stochastic environment, $C = 0$ thus we set $\lambda = 1$ and obtain $\bbE \sbrk{R_T} = \calO \sprt{\frac{\theta \log \sprt{K T}}{\Delta_{\min}}}$.
\end{proof}

\section{Regret Analysis for Follow The Regularized Leader}\label[appsec]{app:analysisFTRL}

In \Cref{alg:FTRL} we only update the prediction $p_t$ if $\sum_{s \leq t} v_s > 0$. The only case where this is not true is if in all rounds up to and including round $t$, $r_t(i) = 0$ for all $i \in [K]$, meaning that the cumulative regret up to round $t$ is null. In this case, we can simply ignore these rounds in the regret analysis, which is what we do: throughout this section we assume that $v_1 > 0$ without loss of generality.

\begin{theorem}\label{th:ftrl-appendix}
    Consider \Cref{alg:FTRL} with parameter $\beta > 0$, providing predictions $p_1, \dots, p_T \in \simplex$ over a sequence of losses $\ell_1, \dots, \ell_T$.
    Then, with probability one, for any $i^\star \in [K]$ we have
    \begin{align*}
        &\sumT \inp{p_t - e_{i^\star}, \ell_t} \\
        &\quad\le \sprt{\frac{\log(K)}{\beta} + 2\beta} \sqrt{\sumT v_t(i^\star)} + \sprt{\frac{5 + \log(K)}{\beta} + 5\beta} \sqrt{\sumT \bar v_t} + \frac{1}{\beta} \sumK \frac{1}{K} \sqrt{\sumT v_t(i)} \;.
    \end{align*}
    Furthermore, setting $\beta = \sqrt{\log(K)}$ gives
    \begin{equation*}
        \sumT \inp{p_t - e_{i^\star}, \ell_t}
        = \calO\sprt{\sqrt{\log(K) \sprt{\sumT \bar v_t + \sumT v_t(i^\star)}} + \frac{1}{K}\sumK \sqrt{\sumT v_t(i)}} \;.
    \end{equation*}
\end{theorem}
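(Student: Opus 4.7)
I would follow the same two-step decomposition as for \Cref{alg:omd-alg}. Since $\bar\ell_t = -r_t$ differs from $\ell_t$ only by the constant $\inp{p_t,\ell_t}$ in each coordinate, one has $\inp{p_t - e_{i^\star}, \ell_t} = \inp{p_t - e_{i^\star}, \bar\ell_t}$, so
\begin{equation*}
    \sumT \inp{p_t - e_{i^\star}, \ell_t} = \underbrace{\sumT \inp{p_t - e_{i^\star}, \bar\ell_t - \tilde\ell_t}}_{\text{truncation cost}} + \underbrace{\sumT \inp{p_t - e_{i^\star}, \tilde\ell_t}}_{\text{FTRL regret}} \;.
\end{equation*}
Unlike in the OMD case, the FTRL iterate lives on the full simplex, so I can compare directly to $e_{i^\star}$ without mixing with a uniform distribution.

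For the FTRL regret, I would invoke a standard time-varying FTRL bound (e.g., Lemma~7.14 of \cite{orabona2023onlinelearning}) with the proximal regularizer $\varphi_t(p) = D_t(p \| p_1)$ centered at the uniform distribution $p_1$. Because $\eta_{t,i}$ is non-increasing, $\varphi_t$ is monotone in $t$, and the usual penalty $\sum_t (\varphi_{t+1}-\varphi_t)(p_{t+1})$ can be absorbed using the estimate $1/\eta_{t,i} - 1/\eta_{t-1,i} \le \eta_{t,i}(1/\eta_{t,i}^2 - 1/\eta_{t-1,i}^2)$ from \Cref{eq:inverse-learning-rates-difference} in the OMD analysis. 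The leading piece is $\varphi_T(e_{i^\star})$, which decomposes as
\begin{equation*}
    \varphi_T(e_{i^\star}) \le \frac{\log(K)}{\eta_{T, i^\star}} + \frac{1}{K}\sumK \frac{1}{\eta_{T,i}} \;,
\end{equation*}
the residual $1/K$ per coordinate producing the third term of the stated bound after plugging in $1/\eta_{T,i} = b_T(i)/\beta \le (\sqrt{\bar V_T} + \sqrt{V_T(i)})/\beta$.

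The key new ingredient is the stability argument. In FTRL, the stability term involves the \emph{previous} learning rate $\eta_{t-1,i}$ together with an intermediate distribution $\tilde p_t(i) \propto p_t(i)\exp(-\eta_{t-1,i}\tilde\ell_t(i))$, which is problematic because $b_{t-1}(i)$ could be arbitrarily smaller than $b_t(i)$. The rescaling by $b_{t-1}(i)/b_t(i)$ in the definition of $\tilde\ell_t$ is precisely what converts $\eta_{t-1,i}$ into $\eta_{t,i}$: the clip gives $|\tilde\ell_t(i)| \le (b_{t-1}(i)/b_t(i))\cdot 1/\eta_{t,i} = 1/\eta_{t-1,i}$, hence $|\eta_{t-1,i}\tilde\ell_t(i)| \le 1$ and $\tilde p_t \le e\cdot p_t$, while the identity $(b_{t-1}(i)/b_t(i))\eta_{t-1,i} = \eta_{t,i}$ gives $\eta_{t-1,i}\tilde\ell_t(i)^2 \le (b_{t-1}(i)/b_t(i))\eta_{t-1,i} r_t(i)^2 = \eta_{t,i} v_t(i)$. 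This reduces the stability sum to $O\bigl(\sum_t\sum_i \eta_{t,i} p_t(i) v_t(i)\bigr) = O\bigl(\beta\sqrt{\bar V_T}\bigr)$ via the standard $\sum_t x_t/\sqrt{\sum_{s\le t}x_s}$ bound (Lemma~4.13 in \cite{orabona2023onlinelearning}), exactly as in the OMD analysis.

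Finally, the truncation cost has two components. The hard clip contributes $|r_t(i)|\mathbbm{1}(|r_t(i)| > 1/\eta_{t,i}) \le \eta_{t,i} v_t(i)$ per round, treated as in the OMD proof. The soft rescaling contributes
\begin{equation*}
    |r_t(i)|\sprt{1 - \tfrac{b_{t-1}(i)}{b_t(i)}}\mathbbm{1}\sprt{|r_t(i)| \le 1/\eta_{t,i}} \le \frac{1}{\eta_{t,i}} - \frac{1}{\eta_{t-1,i}} \;,
\end{equation*}
which telescopes in $t$ to $1/\eta_{T,i} = b_T(i)/\beta$ and merges cleanly with the $\varphi_T$ penalty. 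The main obstacle I expect is careful bookkeeping of the two learning rates $\eta_{t-1,i}$ and $\eta_{t,i}$ throughout both the FTRL step and the truncation step, and verifying that all constants combine into the stated closed form. Once this is done, the specialization $\beta = \sqrt{\log K}$ is immediate.
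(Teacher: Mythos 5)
Your proposal is correct and follows essentially the same route as the paper: same decomposition into truncation cost and FTRL regret, the same use of the Bregman-divergence regularizer $D_{t-1}(\cdot \| p_1)$ to invoke Lemma~7.14 of \cite{orabona2023onlinelearning}, the same key observation that the rescaling by $b_{t-1}(i)/b_t(i)$ converts $\eta_{t-1,i}$ into $\eta_{t,i}$ in both the local-norm stability term and the clipping condition, and the same telescoping of the soft-rescaling cost. The only discrepancies are cosmetic: an off-by-one in indexing $\varphi_t$ versus $D_{t-1}$, and your mention of ``absorbing'' the penalty $\sum_t(\varphi_{t+1}-\varphi_t)(p_{t+1})$ is superfluous — in the form of the lemma the paper invokes, the monotonicity $\varphi_{t+1}\ge\varphi_t$ already makes this term nonpositive so it simply drops, whereas trying to bound it positively would actually be delicate since $p_{t+1}$ is not truncated away from zero in FTRL.
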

\begin{proof}
    Let $i^\star \in [K]$ be our point-mass comparator and let $\bar\ell_t = -r_t$ for any $t \in [T]$.
    First, observe that the regret can be equivalently rewritten as
    \begin{equation*}
        \sumT \inp{p_t - e_{i^\star}, \ell_t} = \sumT \inp{p_t - e_{i^\star}, \bar\ell_t} \;;
    \end{equation*}
    this follows from the fact that replacing $\ell_t$ with $\bar\ell_t$ only leads to a difference $\inp{p_t - e_{i^\star}, \ell_t - \bar\ell_t} = \inp{p_t - e_{i^\star}, c\cdot\mathbbm{1}} = \inp{p_t, c\cdot\mathbbm{1}} - \inp{e_{i^\star}, c\cdot\mathbbm{1}} = c - c = 0$ for the constant $c = \inp{p_t, \ell_t}$, by definition of $\bar\ell_t$.
    As in the proof of \Cref{thm:omd-adv-bound}, we take the same definitions of $\trunc_T$ and $\tilde R_T$, and consider the following regret decomposition:
    \begin{equation*}
        \sumT \bigl\langle p_t - e_{i^\star}, \ell_t \bigr\rangle
        = \underbrace{\sumT \inp{p_t - e_{i^\star}, \bar\ell_t - \tilde \ell_t}}_{=\trunc_T(p_{1:T})+\trunc_T(-e_{i^\star})} + \underbrace{\sumT \inp{p_t - e_{i^\star}, \tilde \ell_t}}_{= \tilde R_T(e_{i^\star})} \;,
    \end{equation*}
    where the first term is the cost of truncating the losses, and the second is the regret of FTRL on the truncated losses $\tilde\ell_1, \dots, \tilde\ell_T$.
    Let us first focus on the former term.

    Before we proceed, one final remark is in order.
    Recall that \Cref{alg:FTRL} performs predictions defined such that
    \begin{equation*}
        p_t = \argmin_{p \in \simplex} \sum_{s=1}^{t-1} \inp{p, \tilde\ell_s} + D_{t-1}(p \| p_0)
    \end{equation*}
    for any $t > 1$, where $p_0 = \frac{1}{K}\mathbbm{1} \in \simplex$ is the uniform distribution, while $p_1 = p_0$.
    Observe that we can define $\eta_{0,i} = \beta/\sqrt{\bar v_1}$ (never used nor set by \Cref{alg:FTRL}) for any $i \in [K]$ and, thus, we can equivalently denote $p_1$ as
    \begin{equation*}
        p_1 \in \argmin_{p \in \simplex} D_{0}(p \| p_0)
    \end{equation*}
    because $D_0(p \| p_0) = \frac{\sqrt{\bar v_1}}{\beta} \sumK \bigl(p(i) \log(p(i)/p_0(i)) - p(i) + p_0(i)\bigr)$ is minimized at $p_0$. We remark that, while this step appears to require knowledge of $\bar v_1$ before computing the prediction $p_1$, it is only part of the analysis and not algorithmically performed.

    \paragraph{Step 1: control the cost of truncation.}
    We can begin by focusing on $\trunc_T(p_{1:T})$. Observe that
    \begin{align*}
        \trunc_T(p_{1:T}) &= \sumT \inp{p_t, \bar\ell_t - \tilde\ell_t} \\
        &\le \sumT \sumK p_t(i) \bigl|\bar\ell_t(i) - \tilde\ell_t(i)\bigr| \\
        &= \sumT \sumK p_t(i) \biggl( \abs{\bar\ell_t(i)}\Bigl(1 - \frac{b_{t-1}(i)}{b_t(i)}\Bigr) \id\Bigl(\abs{\bar\ell_t(i)} \le \frac{1}{\eta_{t,i}}\Bigr) + \abs{\bar\ell_t(i)} \id\Bigl(\abs{\bar\ell_t(i)} > \frac{1}{\eta_{t,i}}\Bigr) \biggr) \,,
    \end{align*}
    where the last equality follows by definition of $\tilde\ell_t(i)$, after observing that $b_{t-1}(i) \le b_t(i)$.
    Now, by using the definitions of $b_t(i)$ and $\eta_{t,i}$, observe that
    \begin{align*}
        &\sumK p_t(i) \abs{\bar\ell_t(i)}\Bigl(1 - \frac{b_{t-1}(i)}{b_t(i)}\Bigr) \id\Bigl(\abs{\bar\ell_t(i)} \le \frac{1}{\eta_{t,i}}\Bigr) \\
        &\quad= \sumK p_t(i) \frac{\abs{\bar\ell_t(i)}}{b_t(i)} \bigl(b_t(i) - b_{t-1}(i)\bigr) \id\Bigl(\abs{\bar\ell_t(i)} \le \frac{1}{\eta_{t,i}}\Bigr) \\
        &\quad= \sumK p_t(i) \eta_{t,i}\abs{\bar\ell_t(i)} \biggl(\frac{1}{\eta_{t,i}} - \frac{1}{\eta_{t-1,i}}\biggr) \id\Bigl(\abs{\bar\ell_t(i)} \le \frac{1}{\eta_{t,i}}\Bigr) \\
        &\quad\le \sumK p_t(i) \biggl(\frac{1}{\eta_{t,i}} - \frac{1}{\eta_{t-1,i}}\biggr) \\
        &\quad\le \frac{2 \bar v_t}{\beta \sqrt{\sum_{s \le t} \bar v_s}} \;,
    \end{align*}
    for $t>1$, where the last inequality holds by \Cref{eq:inverse-learning-rates-difference}  given that the learning rates $\eta_{t,i}$ have the same definition; the same bound holds similarly for $t=1$ by observing that $b_0(i) = 0$ and, hence, we have
    \begin{align*}
        \sumK p_1(i)|\bar\ell_1(i)| \id\Bigl(|\bar\ell_1(i)| \le \frac{1}{\eta_{1,i}}\Bigr)
        &\le \frac{1}{\beta} \sumK p_1(i) \sqrt{\max\{\bar v_1, v_1(i)\}} \\
        &\le \frac{1}{\beta} \sqrt{\sumK p_1(i) \max\{\bar v_1, v_1(i)\}}
        \le \frac{\sqrt{2 \bar v_1}}{\beta} \le \frac{2 \bar v_1}{\beta \sqrt{\bar v_1}} \;,
    \end{align*}
    where the second step follows by Jensen's inequality.
    At the same time, we have that
    \begin{align*}
        \sumK p_t(i) \abs{\bar\ell_t(i)} \id\Bigl(\abs{\bar\ell_t(i)} > \frac{1}{\eta_{t,i}}\Bigr)
        &= \sumK p_t(i) \frac{\bar\ell_t(i)^2}{\abs{\bar\ell_t(i)}} \id\Bigl(\abs{\bar\ell_t(i)} > \frac{1}{\eta_{t,i}}\Bigr) \\
        &\le \sumK \eta_{t,i} p_t(i) \bar\ell_t(i)^2 \\
        &\le \frac{\beta \bar v_t}{\sqrt{\sum_{s \le t} \bar v_s}} \;.
    \end{align*}
    We can therefore combine the above inequalities and, together with \citet[Lemma~4.13]{orabona2023onlinelearning}, obtain that
    \begin{equation*}
        \trunc_T(p_{1:T})
        \le \biggl(\beta + \frac{2}{\beta}\biggr) \sumT \frac{\bar v_t}{\sqrt{\sum_{s \le t} \bar v_s}}
        \le 2\biggl(\beta + \frac{2}{\beta}\biggr) \sqrt{\sumT \bar v_t} \;.
    \end{equation*}
    Similarly, we can see that $\trunc_T(-e_{i^\star})$ similarly satisfies
    \begin{align*}
        \trunc_T(-e_{i^\star})
        &= \sumT \inp{-e_{i^\star}, \bar\ell_t - \tilde\ell_t} \\
        &\le \sumT \bigl|\bar\ell_t(i^\star) - \tilde\ell_t(i^\star)\bigr| \\
        &= \sumT \biggl( \abs{\bar\ell_t(i^\star)}\Bigl(1 - \frac{b_{t-1}(i^\star)}{b_t(i^\star)}\Bigr) \id\Bigl(\abs{\bar\ell_t(i^\star)} \le \frac{1}{\eta_{t,i^\star}}\Bigr) + \abs{\bar\ell_t(i^\star)} \id\Bigl(\abs{\bar\ell_t(i^\star)} > \frac{1}{\eta_{t,i^\star}}\Bigr) \biggr) \,.
    \end{align*}
    Using similar calculations as before, it follows that
    \begin{align*}
        \sumT \abs{\bar\ell_t(i^\star)}\Bigl(1 - \frac{b_{t-1}(i^\star)}{b_t(i^\star)}\Bigr) \id\Bigl(\abs{\bar\ell_t(i^\star)} \le \frac{1}{\eta_{t,i^\star}}\Bigr)
        &\le \frac{1}{\beta} \sumT \bigl(b_{t}(i^\star) - b_{t-1}(i^\star)\bigr)
        = \frac{1}{\eta_{T,i^\star}} \\
        &\le \frac{1}{\beta} \sqrt{\sumT \bar v_t} + \frac{1}{\beta} \sqrt{\sumT v_t(i^\star)} \;,
    \end{align*}
    where we used the subadditivity of the square root in the last inequality, and also that
    \begin{equation*}
        \sumT \abs{\bar\ell_t(i^\star)} \id\Bigl(\abs{\bar\ell_t(i^\star)} > \frac{1}{\eta_{t,i^\star}}\Bigr)
        \le \sumT \eta_{t,i^\star} \bar\ell_t(i^\star)^2
        \le \beta \sumT \frac{v_t(i^\star)}{\sqrt{\sum_{s \le t} v_s(i^\star)}}
        \le 2\beta \sqrt{\sumT v_t(i^\star)} \;,
    \end{equation*}
    where the last inequality follows again by \citet[Lemma~4.13]{orabona2023onlinelearning}.
    Hence, we similarly conclude that
    \begin{equation*}
        \trunc_T(-e_{i^\star})
        \le \frac{1}{\beta} \sqrt{\sumT v_t} + \biggl(2\beta + \frac{1}{\beta}\biggr) \sqrt{\sumT v_t(i^\star)} \;,
    \end{equation*}
    which implies that the total cost for truncating the losses is bounded from above as
    \begin{equation*}
        \trunc_T(p_{1:T}) + \trunc_T(-e_{i^\star})
        \le \biggl(2\beta + \frac{5}{\beta}\biggr) \sqrt{\sumT \bar v_t} + \biggl(2\beta + \frac{1}{\beta}\biggr) \sqrt{\sumT v_t(i^\star)} \;.
    \end{equation*}

    \paragraph{Step 2: control the regret of FTRL.}
    Let us now focus on the latter term in the regret decomposition, that is, the regret of FTRL on the losses $\tilde\ell_1, \dots, \tilde\ell_T$.
    Consider any $t \in \sbrk{T}$ and define
    \begin{equation*}
        \tilde{p}_{t+1} = \argmin_{p \in \bbR^K} \scbrk{\inp{p, \tilde \ell_t} + D_{t-1} \sprt{p \| p_t}} \;.
    \end{equation*}
    Note that for any $i \in \sbrk{K}$, $\tilde{p}_{t+1}(i) = p_t(i) \exp\bigl(-\eta_{t-1,i} \tilde\ell_t(i)\bigr) \le 3 p_t(i)$ by construction of $\tilde\ell_t$, which is such that
    \begin{equation*}
        \abs{\eta_{t-1,i} \tilde\ell_t \sprt{i}}
        = \eta_{t-1,i} \abs{\bar\ell_t(i)} \frac{b_{t-1}(i)}{b_t(i)} \id\Bigl(\abs{\bar\ell_t(i)} \le \frac{1}{\eta_{t,i}}\Bigr)
        = \eta_{t,i} \abs{\bar\ell_t(i)} \id\Bigl(\abs{\bar\ell_t(i)} \le \frac{1}{\eta_{t,i}}\Bigr)
        \le 1
    \end{equation*}
    for any $i \in [K]$ and $t>1$, while it immediately holds for $t=1$ since $b_0(i) = 0$ for any $i$.

    Let $\varphi_t = D_{t-1}(\cdot \| p_0)$ be the regularizer used in the FTRL update.
    Observe that $\varphi_t$ is non-negative and twice-differentiable with Hessian having inverse $\sprt{\nabla^2 \varphi_t(x)}^{-1} = \Diag(\eta_{t-1} \odot x)$, and that $\varphi_{t+1}(x) \ge \varphi_{t}(x)$ for all $x \in \bbR_{\ge 0}^K$.
    Then, by standard results on FTRL with time-varying regularizers $\varphi_1, \dots, \varphi_{T+1}$ (e.g., see \citet[Lemma~7.16]{orabona2023onlinelearning}), we obtain
    \begin{equation} \label{eq:penalty-stability-ftrl}
        \tilde R_T(e_{i^\star})
        \le \varphi_{T+1}(e_{i^\star}) + \frac12\sumT \norm{\tilde \ell_t}^2_{\sprt{\nabla^2 \varphi_t(z_t)}^{-1}}
    \end{equation}
    for some point $z_t$ on the line segment between $p_t$ and $\tilde p_{t+1}$; this also follows by the monotonicity of $\varphi_t$, i.e., $\varphi_t(p_{t+1}) \le \varphi_{t+1}(p_{t+1})$.
    The point $z_t$ is such that $z_t(i) \le \max\{p_t(i), \tilde p_{t+1}(i)\} \le 3 p_t(i)$ for all $i \in [K]$.
    Therefore, given the definition of the local norm, the second term of \Cref{eq:penalty-stability-ftrl} satisfies
    \begin{align*}
        \frac12\sumT \norm{\tilde \ell_t}^2_{\sprt{\nabla^2 \varphi_t(z_t)}^{-1}}
        &= \frac12 \sumT \sumK \eta_{t-1,i} z_t(i) \tilde\ell_t(i)^2 \\
        &\le \frac32 \sumT \sumK \eta_{t-1,i} p_t(i) \tilde\ell_t(i)^2 && \text{(using $z_t(i) \le 3p_t(i)$)} \\
        &\le \frac32 \sumT \sumK \eta_{t-1,i} p_t(i) \bar\ell_t(i)^2 \cdot \frac{b_{t-1}(i)^2}{b_t(i)^2} && \text{(using $|\tilde\ell_t(i)| \le \tfrac{b_{t-1}(i)}{b_t(i)} |\bar\ell_t(i)|$)} \\
        &= \frac{3\beta}{2} \sumT \sumK p_t(i) \bar\ell_t(i)^2 \cdot \frac{b_{t-1}(i)}{b_t(i)^2} && \text{(definition of $\eta_{t-1,i}$)} \\
        &\le \frac{3\beta}{2} \sumT \sumK \frac{p_t(i) \bar\ell_t(i)^2}{b_t(i)} && \text{(using $b_{t-1}(i) \le b_t(i)$)} \\
        &\le \frac{3\beta}{2} \sumT \frac{\bar v_t}{\sqrt{\sum_{s \le t} \bar v_s}} && \text{(definition of $b_t(i)$ and $\bar v_t$)} \\
        &\le 3\beta \sqrt{\sumT \bar v_t} \;,
    \end{align*}
    where the last inequality follows again by \citet[Lemma~4.13]{orabona2023onlinelearning}.
    On the other hand, the first term of \Cref{eq:penalty-stability-ftrl} is such that
    \begin{align*}
        \varphi_{T+1}(e_{i^\star})
        &= \frac{\log(K) - 1}{\eta_{T,i^\star}} + \frac{1}{K} \sumK \frac{1}{\eta_{T,i}} \\
        &\le \frac{\log(K) - 1}{\beta} \sprt{\sqrt{\sumT v_t(i^\star)} + \sqrt{\sumT \bar v_t}} + \frac{1}{\beta} \sqrt{\sumT \bar v_t} + \frac{1}{\beta} \sumK \frac{1}{K} \sqrt{\sumT v_t(i)} \\
        &\le \frac{\log(K) - 1}{\beta} \sqrt{\sumT v_t(i^\star)} + \frac{\log(K)}{\beta} \sqrt{\sumT \bar v_t} + \frac{1}{\beta} \sumK \frac{1}{K} \sqrt{\sumT v_t(i)} \;.
    \end{align*}

    Combining all the above results together leads to
    \begin{align*}
        &\sumT \inp{p_t - e_{i^\star}, \ell_t} \\
        &\quad\le \sprt{\frac{\log(K)}{\beta} + 2\beta}\sqrt{\sumT v_t(i^\star)} + \sprt{\frac{5 + \log(K)}{\beta} + 5\beta}\sqrt{\sumT \bar v_t}  + \frac{1}{\beta} \sumK \frac{1}{K} \sqrt{\sumT v_t(i)} \;. \qedhere
    \end{align*}
\end{proof}

We can now show that \Cref{th:ftrl-appendix} suffices to prove one of our main claims from \Cref{th:introadv}.
This is demonstrated by the following result.

\begin{corollary} \label{cor:ftrl-bound-bounded-moments}
    Suppose \Cref{asp:finite-second-moment} holds.
    Then, \Cref{alg:FTRL} with $\beta = \sqrt{\log(K)}$ guarantees
    \begin{equation*}
        R_T = \calO\sprt{\sqrt{\theta T\log(K)}} \;.
    \end{equation*}
\end{corollary}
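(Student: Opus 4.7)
The plan is to follow essentially the same pattern that was used in the proof of Corollary~\ref{cor:omd-bound-bounded-moments}, adapted to accommodate the additional averaged term $\frac{1}{K}\sumK \sqrt{V_T(i)}$ that appears in the stronger FTRL bound of Theorem~\ref{th:ftrl-appendix}. First, I would recall that $R_T = \max_{i^\star \in [K]} \E\bigl[\sumT \inp{p_t - e_{i^\star}, \ell_t}\bigr]$, so the strategy is to take the maximum over $i^\star$ of the per-comparator, almost-sure bound given by Theorem~\ref{th:ftrl-appendix} with $\beta = \sqrt{\log(K)}$, and then take expectations on both sides.

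Next, I would reuse verbatim the variance calculations from the proof of Corollary~\ref{cor:omd-bound-bounded-moments}: since $v_t(i) = \bigl(\ell_t(i) - \sumjK p_t(j)\ell_t(j)\bigr)^2$, Jensen's inequality combined with $(a-b)^2 \le 2a^2 + 2b^2$ and Assumption~\ref{asp:finite-second-moment} yields $\E_t[v_t(i)] \le 4\theta$ for every $i \in [K]$, while the variance bound $\bbV[X] \le \E[X^2]$ gives $\E_t[\bar v_t] \le \theta$ since $p_t \in \simplex$. Applying Jensen's inequality with respect to the square root and the tower rule, the first two terms in the bound of Theorem~\ref{th:ftrl-appendix} can then be controlled by
\begin{equation*}
    \E\sbrk{\sqrt{\log(K) \sprt{\sumT \bar v_t + \sumT v_t(i^\star)}}}
    \le \sqrt{\log(K)\sumT \E\sbrk{\E_t[\bar v_t] + \E_t[v_t(i^\star)]}}
    \le \sqrt{5\theta T \log(K)} \;,
\end{equation*}
which matches the OMD case up to constants.

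The one new ingredient is the extra term $\frac{1}{K}\sumK \sqrt{V_T(i)}$. Handling it is the only step that differs from the OMD corollary, though it is not a real obstacle: another application of Jensen's inequality with respect to the concave square root and the tower rule gives
\begin{equation*}
    \E\sbrk{\frac{1}{K}\sumK \sqrt{\sumT v_t(i)}}
    \le \frac{1}{K}\sumK \sqrt{\sumT \E[v_t(i)]}
    \le 2\sqrt{\theta T} \;,
\end{equation*}
which is absorbed into the dominant $\sqrt{\theta T \log(K)}$ term since $K,T \ge 2$. Summing the three contributions and taking the maximum over $i^\star$ yields the claimed $R_T = \calO\bigl(\sqrt{\theta T \log(K)}\bigr)$ bound, completing the plan. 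The only slightly subtle point is ordering the operations correctly, namely applying Jensen to the square root \emph{before} invoking the tower rule and the per-round moment bounds; the rest is mechanical.
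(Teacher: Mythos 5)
Your proposal is correct and follows essentially the same route as the paper: invoke the per-comparator, almost-sure bound from Theorem~\ref{th:ftrl-appendix}, reuse the per-round moment bounds $\E_t[\bar v_t]\le\theta$ and $\E_t[v_t(i)]\le 4\theta$ from Corollary~\ref{cor:omd-bound-bounded-moments}, then apply Jensen's inequality to the concave square root together with the tower rule to each of the three terms, including the new $\frac{1}{K}\sumK\sqrt{V_T(i)}$ term, which is absorbed into the dominant $\sqrt{\theta T\log(K)}$ contribution. Nothing essential differs from the paper's argument.
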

\begin{proof}
    Recall that, by \Cref{th:ftrl-appendix}, \Cref{alg:FTRL} with $\beta = \sqrt{\log(K)}$ guarantees
    \begin{equation}\label{eq:recall-ftrl-bound}
        \sumT \inp{p_t - e_{i^\star}, \ell_t} = \calO\sprt{\sqrt{\log(K) \sprt{\sumT \bar v_t + \sumT v_t(i^\star)}} + \frac{1}{K}\sumK \sqrt{\sumT v_t(i)}}
    \end{equation}
    for any sequence of losses.
    Additionally, in a similar way as in the proof of \Cref{cor:omd-bound-bounded-moments}, we have that $\E_t[\bar v_t] \le \theta$ and that $\E_t[v_t(i)] \le 4\theta$ for any $i \in [K]$, under \Cref{asp:finite-second-moment}.

    We can analogously observe that $R_T = \max_{i^\star \in [K]} \E[\inp{p_t - e_{i^\star}, \ell_t}]$ or, in other words, that $R_T$ essentially corresponds to the expectation of the left-hand side of \Cref{eq:recall-ftrl-bound}.
    Then, we consider the expectation of its right-hand side and, by applying Jensen's inequality with respect to the square root and the tower rule of expectation, we can finally show that
    \begin{align*}
        &\E\sbrk{\sqrt{\log(K) \sumT \sprt{\bar v_t + v_t(i^\star)}} + \frac{1}{K}\sumK \sqrt{\sumT v_t(i)}} \\
        &\quad \le \sqrt{\log(K) \E\sbrk{\sumT \bigl(\E_t\sbrk{\bar v_t} + \E_t\sbrk{v_t(i^\star)}\bigr)}} + \frac{1}{K} \sumK \sqrt{\E\sbrk{\sumT \E_t[v_t(i)]}} \\
        &\quad \le C\sqrt{\theta T\log(K)}
    \end{align*}
    for some constant $C > 0$.
    This concludes the proof.
\end{proof}

\subsection{Computing the Update in Algorithm~\ref{alg:FTRL}} \label{app:compute-update-ftrl}

We briefly discuss the update defining $p_{t+1}$ in \Cref{alg:FTRL}. For any $\eta \in \bbRpK$, consider an optimization problem of the form
$\inf_{p \in \simplex} \inp{p, L} + D \sprt{p \| q}$, where $L \in \bbR^K$, $q \in \simplex$, and
\[
    D \sprt{p \| q} = \sumK \frac{1}{\eta_i} \sbrk{p(i) \log \sprt{\frac{p \sprt{i}}{q \sprt{i}}} + p \sprt{i} - q \sprt{i}} \,.
\]
As the probability simplex is compact and the mapping $p \mapsto \inp{p, L} + D \sprt{p \| q}$ is continuous, the infimum is attained at some $p^\star \in \simplex$. The Lagrangian $\calL \sprt{p, \lambda}$ of the optimization problem is defined for $p \in \bbRnnK$ and $\lambda \in \bbR$ as
\begin{equation*}
    \calL \sprt{p, \lambda} = \inp{p, L} + \sumK \frac{1}{\eta_i} \sbrk{p \sprt{i} \log \sprt{\frac{p \sprt{i}}{q \sprt{i}}} - p \sprt{i} + q \sprt{i}} + \lambda \sprt{\inp{p, \mathbbm{1}} - 1}\,.
\end{equation*}
For any $j \in \sbrk{K}$, differentiate with respect to $p \sprt{j}$ to get
\begin{equation*}
    \frac{\partial \calL}{\partial p\sprt{j}}\sprt{p, \lambda} = L\sprt{j} + \frac{1}{\eta_j} \sbrk{\log \sprt{\frac{p \sprt{j}}{q \sprt{j}}} + 1 - 1} + \lambda
    = L\sprt{j} + \lambda + \frac{1}{\eta_j} \log \sprt{\frac{p \sprt{j}}{q \sprt{j}}}\,.
\end{equation*}
Setting it to zero, we get
\begin{equation*}
    p^\star \sprt{j} = q \sprt{j} \exp \sprt{- \eta_j \sbrk{L \sprt{j} + \lambda}}\,.
\end{equation*}
One then wants to find the value of $\lambda$ by enforcing the constraint on $p^\star$, namely $\inp{p^\star, \mathbbm{1}} = 1$, which gives the following condition:
\begin{equation*}
    \sumK q \sprt{i} \exp \sprt{- \eta_i \sbrk{L \sprt{i} + \lambda}} = 1\,.
\end{equation*}
If the learning rate did not depend on the coordinate, we could take the term that depends on $\lambda$ out of the sum to get a closed-form solution. Plugging it back into $p^\star$ would give a softmax distribution, but this is not possible here. Instead, one can efficiently compute the normalization constant $\lambda$ with a line-search.

\section{Technical Results for Section~\ref{sec:squared_loss}}\label[appsec]{app:squared_loss}
\label[appsec]{sec:squared-loss}
\begin{algorithm}[t]
    \caption{\OMDalg\, for the Squared Loss}
    \begin{algorithmic}
    \label{alg:squared-alg}
    \STATE {\bfseries Inputs:} Number of experts $K \ge 2$, minimum mass coefficient $\alpha \in (0,1]$, learning rate coefficient $\beta > 0$.
    \STATE {\bfseries Initialize:} $p_1 \sprt{i} \gets 1/K$ for all $i \in \sbrk{K}$.
    \FOR{$t=1,\dots,T$}
    \STATE Receive $\bfyti$ for all $i \in [K]$.
    \STATE Set $\barbfyt \gets \sumK p_t(i) \bfyti$.
    \STATE Observe $y_t$.
    \STATE Set $\ell_t(i) \gets \bigl(\bfyti(\barbfyt - y_t)  + \half (\bfyti - y_t)^2\bigr)$.
    \STATE Set $r_t\sprt{i} \gets \inp{\ell_t, p_t} - \ell_t \sprt{i}$ for all $i \in \sbrk{K}$.
    \STATE Set $v_t(i) \gets r_t(i)^2$ for all $i \in [K]$.
    \STATE Set $\bar v_t \gets \sumK p_t(i) v_t(i)$.
    \STATE Set $\eta_{t, i} \gets \beta \max \scbrk{\sum_{s \leq t} \bar v_s, \sum_{s \leq t} v_s(i)}^{-1/2}$ for all $i \in \sbrk{K}$.
    \STATE Set $\tilde{\ell}_t \sprt{i} \gets -r_t \sprt{i} \mathbbm{1}\bigl(\abs{r_t \sprt{i}} \le 1/\eta_{t,i}\bigr)$ for all $i \in \sbrk{K}$.
    \STATE Set $p_{t+1} \gets \argmin_{p \in \calP_\alpha} \bigl\langle p,  \tilde{\ell}_t\bigr\rangle + D_t \sprt{p \| p_t}$.
    \ENDFOR
    \end{algorithmic}
\end{algorithm}

\subsection{Proof of Theorem~\ref{th:squaredloss}}
\begin{proof}
We start with some useful inequalities. Let $a, b, y_t \in \bbR$ and let $\bar\ell_t = -r_t$ for any $t \in [T]$. By the $2$-strong convexity of the function $x \mapsto \sprt{x - y_t}^2$, we have
\begin{equation}\label{eq:square-strong-convexity}
    \sprt{\frac{a + b}{2} - y_t}^2 \le \frac{1}{2} \sprt{a - y_t}^2 + \frac{1}{2} \sprt{b - y_t}^2 - \frac{\sprt{a - b}^2}{8} \;.
\end{equation}
Furthermore, we also have that $\sprt{a - y_t}^2 - \sprt{b - y_t}^2 = 2 \sprt{y_t - a} \sprt{b - a} - \sprt{a - b}^2$, sometimes referred to as a polarization identity (here, for the Euclidean norm on $\bbR$). Moving to the regret analysis, let us denote $i^\star = \argmin_{i \in \sbrk{K}} \bbE \sbrk{\sumT \sprt{y_t - \bfyti}^2}$. Splitting the sum in halves and rearranging the terms using that $p_t$ is a probability distribution over $[K]$, the loss of the learner at time $t$ can be bounded from above as
\begin{align*}
    \sprt{\barbfyt - y_t}^2 & = \sprt{\inp{p_t, \frac12 \bfyt} + \frac12 \barbfyt - y_t}^2 \\
    & = \sprt{\sumK p_t \sprt{i} \sbrk{\frac12 \bfyti + \frac12 \barbfyt - y_t}}^2 \\
    & \le \sumK p_t \sprt{i} \sbrk{\frac12 \bfyti + \frac12 \barbfyt - y_t}^2 \;,
\end{align*}
where the inequality follows from Jensen's inequality. Applying \Cref{eq:square-strong-convexity} for any $i \in \sbrk{K}$ with $a = \bfyti$ and $b = \barbfyt$, and again using $\inp{p_t, \mathbbm{1}} = 1$, we further get
\begin{align*}
    \sprt{\barbfyt - y_t}^2 & \le \frac12 \sumK p_t \sprt{i} \sprt{\bfyti - y_t}^2 + \frac12 \sprt{\barbfyt - y_t}^2 - \frac18 \sumK p_t \sprt{i} \sprt{\bfyti - \barbfyt}^2 \notag \\
    & = \frac12 \sumK p_t \sprt{i} \sprt{\bfyti - y_t}^2 + \underbrace{\frac12 \sprt{\barbfyt - y_t}^2 - \frac12 \sprt{\bfytis - y_t}^2}_{(\Diamond)} \\
    & \quad - \frac18 \sumK p_t \sprt{i} \sprt{\bfyti - \barbfyt}^2 + \sprt{\bfytis - y_t}^2 - \frac12 \sprt{\bfytis - y_t}^2 \;.
\end{align*}
Using the polarization identity on $(\Diamond)$ with $a = \barbfyt$ and $b = \bfytis$ leads to
\begin{align*}
    \sprt{\barbfyt - y_t}^2 & \le \frac12 \sumK p_t \sprt{i} \sprt{\bfyti - y_t}^2 + \sprt{y_t - \barbfyt} \sprt{\bfytis - \barbfyt} - \frac12 \sprt{\barbfyt - \bfytis}^2 \notag \\
    & \quad - \frac18 \sumK p_t \sprt{i} \sprt{\bfyti - \barbfyt}^2 + \sprt{\bfytis - y_t}^2 - \frac12 \sprt{\bfytis - y_t}^2 \;.
\end{align*}
We recall that \Cref{alg:squared-alg} uses the loss $\ell_t \sprt{i} = \frac12 \sprt{\bfyti - y_t}^2 + \bfyti \sprt{\barbfyt - y_t}$ for any $i \in \sbrk{K}$. Rearranging the first two terms in the upper bound, we have
\begin{align*}
    \sprt{\barbfyt - y_t}^2 & \le \inp{p_t, \ell_t} - \underbrace{\bfytis \sprt{\barbfyt - y_t}}_{(\clubsuit)} - \frac12 \sprt{\barbfyt - \bfytis}^2 \\
    &\quad -\frac18 \sumK p_t \sprt{i} \sprt{\bfyti - \barbfyt}^2 + \sprt{\bfytis - y_t}^2 - \underbrace{\frac12 \sprt{\bfytis - y_t}^2}_{(\spadesuit)} \\
    &= \inp{p_t, \ell_t} - \ell_t(i^\star) - \frac12 \sprt{\barbfyt - \bfytis}^2 \tag{using $(\clubsuit) + (\spadesuit) = \ell_t(i^\star)$} \\
    &\quad -\frac18 \sumK p_t \sprt{i} \sprt{\bfyti - \barbfyt}^2 + \sprt{\bfytis - y_t}^2 \;.
\end{align*}
At this point, we can move the last term in the right-hand side to the left-hand side to obtain
\begin{equation}\label{eq:squaredlossnegative}
    \sprt{\barbfyt - y_t}^2 - \sprt{\bfytis - y_t}^2 \le \bigl(\inp{p_t, \ell_t} - \ell_t(i^\star)\bigr) - \frac12 \sprt{\barbfyt - \bfytis}^2 - \frac18 \sumK p_t \sprt{i} \sprt{\bfyti - \barbfyt}^2 \;.
\end{equation}
Now, from \Cref{thm:omd-adv-bound} with $\alpha = \frac1T$ and $\beta = \sqrt{\log \sprt{KT}}$ we have 
\begin{align}\label{eq:roughboundsquared}
    \sumT \sprt{\inp{p_t, \ell_t} - \ell_t \sprt{i^\star}} \le 11 \sqrt{\log \sprt{KT}} \sprt{\sqrt{\sumT \bar v_t} + \sqrt{\sumT v_t \sprt{i^\star}}}\,.
\end{align}
We continue by bounding $\bar v_t = \sumK p_t \sprt{i} \sprt{\ell_t \sprt{i} - \inp{p_t, \ell_t}}^2$ from above. By definition of the loss $\ell_t$, the square inside the sum can be bounded, for any $i \in \sbrk{K}$, by
\begin{align}\label{eq:bound-vt}
    \sprt{\ell_t \sprt{i} - \inp{p_t, \ell_t}}^2 & = \sprt{\frac12 \sbrk{\sprt{\bfyti - y_t}^2 - \inp{p_t, \sprt{\bfyt - y_t \cdot \mathbbm{1}}^2}} + \sprt{\bfyti - \barbfyt} \sprt{\barbfyt - y_t}}^2 \notag \\
    & \le 2 \sbrk{\sprt{\bfyti - \barbfyt} \sprt{\barbfyt - y_t}}^2 + \frac12 \sbrk{\sprt{\bfyti - y_t}^2 - \inp{p_t, \sprt{\bfyt - y_t \cdot \mathbbm{1}}^2}}^2 \;,
\end{align}
where we used the inequality $\sprt{a + b}^2 \le 2 a^2 + 2 b^2$ for any $a, b \in \bbR$. Plugging it into the definition of $\bar v_t$, this gives
\begin{align*}
    \bar v_t & \le 2 \sprt{\barbfyt - y_t}^2 \sumK p_t \sprt{i} \sbrk{\bfyti - \barbfyt}^2 + \frac12 \sumK p_t \sprt{i} \sprt{\sprt{\bfyti - y_t}^2 - \sumjK p_t \sprt{j} \sprt{\bfytj - y_t}^2}^2 \;.
\end{align*}
We continue by bounding from above the second term on the right-hand side. The difference inside the square can be equivalently rewritten as
\begin{align}\label{eq:bound-vt2}
    & \sprt{\bfyti - y_t}^2 - \sumjK p_t \sprt{j} \sprt{\bfytj - y_t}^2 \notag \\
    & \quad = \sbrk{\sprt{\bfyti - \barbfyt} - \sprt{y_t - \barbfyt}}^2  - \sumjK p_t \sprt{j} \sbrk{\sprt{\bfytj - \barbfyt} - \sprt{y_t- \barbfyt}}^2 \notag \\
    & \quad = \sprt{\bfyti - \barbfyt}^2 - 2 \sprt{y_t - \barbfyt} \sprt{\bfyti - \barbfyt} + \sprt{y_t - \barbfyt}^2 - \sumjK p_t \sprt{j} \sprt{\bfytj - \barbfyt}^2 \notag \\
    & \quad\quad + 2 \sprt{y_t- \barbfyt} \sumjK p_t \sprt{j} \sprt{\bfytj - \barbfyt} - \sprt{y_t - \barbfyt}^2 & \sprt{\inp{p_t, \mathbbm{1}} = 1} \notag \\
    & \quad = \sprt{\bfyti - \barbfyt}^2 - 2 \sprt{y_t - \barbfyt} \sprt{\bfyti - \barbfyt} - \sumjK p_t \sprt{j} \sprt{\bfytj - \barbfyt}^2 \;,
\end{align}
where the last equality is by definition of $\barbfyt$. We plug it back in the sum above, and using again the inequality $\sprt{a + b}^2 \le 2 a^2 + 2 b^2$, we get
\begin{align*}
    & \sumK p_t \sprt{i} \sprt{\sprt{\bfyti - y_t}^2 - \sumjK p_t \sprt{j} \sprt{\bfytj - y_t}^2}^{\!2} \\
    & \quad \le 2 \sumK p_t \sprt{i} \sprt{\bfyti - \barbfyt}^4 + 2 \sumK p_t \sprt{i} \sbrk{2 \sprt{y_t - \barbfyt} \sprt{\bfyti - \barbfyt} + \sumjK p_t \sprt{j} \sprt{\bfytj - \barbfyt}^2}^2 \;.
\end{align*}
Expanding the square in the second term, we get that the cross-product is equal to zero by definition of $\barbfyt$, thus
\begin{align*}
    & \sumK p_t \sprt{i} \sprt{\sprt{\bfyti - y_t}^2 - \sumjK p_t \sprt{j} \sprt{\bfytj - y_t}^2}^{\!2} \\
    & \quad \le 2 \sumK p_t \sprt{i} \sprt{\bfyti - \barbfyt}^4 + 8 \sprt{y_t - \barbfyt}^2 \sumK p_t \sprt{i} \sprt{\bfyti - \barbfyt}^2 + 2 \sbrk{\sumjK p_t \sprt{j} \sprt{\bfytj - \barbfyt}^2}^2 \;,
\end{align*}
where we used $\inp{p_t, \mathbbm{1}} = 1$. Using Jensen's inequality,
\begin{align*}
    &\sumK p_t \sprt{i} \sprt{\sprt{\bfyti - y_t}^2 - \sumjK p_t \sprt{j} \sprt{\bfytj - y_t}^2}^{\!2} \\
    &\quad\le 4 \sumK p_t \sprt{i} \sprt{\bfyti - \barbfyt}^4 + 8 \sprt{y_t - \barbfyt}^2 \sumK p_t \sprt{i} \sprt{\bfyti - \barbfyt}^2 \;.
\end{align*}
Plugging it back into the upper-bound on $\bar v_t$, we obtain
\begin{equation*}
    \bar v_t \le 6 \sprt{\barbfyt - y_t}^2 \sumK p_t \sprt{i} \sbrk{\bfyti - \barbfyt}^2 + 2 \sumK p_t \sprt{i} \sprt{\bfyti - \barbfyt}^4 \;.
\end{equation*}
Using $\bbE_t \sbrk{y_t^2} \le \sigma$ and $\abs{\bfyti} \le Y$, we find that
\begin{equation} \label{eq:squaredloss-bound-barvt}
    \bbE_t \sbrk{\bar v_t} \le \sprt{20 Y^2 + 12 \sigma} \sumK p_t \sprt{i} \sprt{\bfyti - \barbfyt}^2 \;.
\end{equation}
We now bound $v_t \sprt{i}$ for any given $i \in \sbrk{K}$. As with $\bar v_t$, \Cref{eq:bound-vt} gives
\begin{align*}
    & \sprt{\ell_t \sprt{i} - \inp{p_t, \ell_t}}^2 \le 2 \sprt{\barbfyt - y_t}^2 \sprt{\bfyti - \barbfyt}^2 + \frac12 \sbrk{\sprt{\bfyti - y_t}^2 - \sumjK p_t \sprt{j} \sprt{\bfytj - y_t}^2}^2 .
\end{align*}
Reusing \Cref{eq:bound-vt2}, we can write the second term on the right-hand-side as
\begin{align*}
    & \sprt{\sprt{\bfyti - y_t}^2 - \sumjK p_t \sprt{j} \sprt{\bfytj - y_t}^2}^{\!2} \\
    &\quad = \sprt{\sprt{\bfyti - \barbfyt}^2 - 2 \sprt{y_t - \barbfyt} \sprt{\bfyti - \barbfyt} - \sumjK p_t \sprt{j} \sprt{\bfytj - \barbfyt}^2}^{\!2} \\
    &\quad \le \sbrk{3 \sprt{\bfyti - \barbfyt}^2 + 6 \sprt{y_t - \barbfyt}^2} \sprt{\bfyti - \barbfyt}^2 + 3 \sprt{\sumjK p_t \sprt{j} \sprt{\bfytj - \barbfyt}^2}^{\!2} \\
    &\quad \le \sbrk{3 \sprt{\bfyti - \barbfyt}^2 + 6 \sprt{y_t - \barbfyt}^2} \sprt{\bfyti - \barbfyt}^2 + 3 \sprt{\max_{k \in \sbrk{K}} \sprt{\bfytk - \barbfyt}^2} \sumjK p_t \sprt{j} \sprt{\bfytj - \barbfyt}^2 ,
\end{align*}
where the first inequality follows from $\sprt{a + b + c}^2 \le 3 a^2 + 3 b^2 + 3 c^2$ valid for any $a, b, c \in \bbR$ and the second inequality follows from Jensen's inequality. Thus, using $\bbE_t \sbrk{y_t^2} \le \sigma$ and $\abs{\bfyti} \le Y$, we find that
\begin{equation}\label{eq:squaredloss-bound-vti}
    \bbE_t \sbrk{v_t \sprt{i}} \le \sprt{16 Y^2 + 10 \sigma} \sprt{\bfyti - \barbfyt}^2 + 6 Y^2 \sumK p_t \sprt{i} \sprt{\bfyti - \barbfyt}^2\,.
\end{equation}
Consider now any $\gamma > 0$. By using the bounds on $\bbE_t \sbrk{v_t \sprt{i}}$ and $\bbE_t \sbrk{\bar v_t}$, together with \Cref{eq:roughboundsquared,eq:squaredlossnegative}, we finally find that
\begin{align*}
    & \bbE \sbrk{\sumT \sprt{\sprt{\barbfyt - y_t}^2 - \sprt{\bfytis - y_t}^2}} \\
    & \le \bbE \sbrk{11 \sqrt{\log \sprt{KT}} \sprt{\sqrt{\sumT \bar v_t} + \sqrt{\sumT v_t \sprt{i^\star}}}} \\
    & \quad - \bbE \sbrk{\sumT \sprt{\frac12 \sprt{\bfytis - \barbfyt}^2 + \frac18 \sumjK p_t \sprt{j} \sprt{\bfytj - \barbfyt}^2}} \tag{\Cref{eq:roughboundsquared,eq:squaredlossnegative}} \\[.5em]
    & \le \bbE \sbrk{11 \sqrt{2 \log \sprt{KT}} \sprt{\sqrt{\sumT \bar v_t + \sumT v_t \sprt{i^\star}}}} \\
    & \quad - \bbE \sbrk{\sumT \sprt{\frac12 \sprt{\bfytis - \barbfyt}^2 + \frac18 \sumjK p_t \sprt{j} \sprt{\bfytj - \barbfyt}^2}} \\[.5em]
    & \le \frac{121 \log \sprt{KT}}{\gamma} - \bbE \sbrk{\sumT \sprt{\frac12 \sprt{\bfytis - \barbfyt}^2 + \frac18 \sumjK p_t \sprt{j} \sprt{\bfytj - \barbfyt}^2}} \\
    & \quad + \gamma \bbE \sbrk{\sprt{8 Y^2 + 5 \sigma} \sumT \sprt{\bfytis - \barbfyt}^2 + \sprt{13 Y^2 + 6 \sigma} \sumT \sumK p_t \sprt{i} \sprt{\bfyti - \barbfyt}^2} \\
    & \le \frac{121 \log \sprt{KT}}{\gamma} - \bbE \sbrk{\sumT \sprt{\frac12 \sprt{\bfytis - \barbfyt}^2 + \frac18 \sumjK p_t \sprt{j} \sprt{\bfytj - \barbfyt}^2}} \\
    & \quad + \gamma\sprt{104 Y^2 + 48 \sigma} \bbE \sbrk{\frac12 \sumT \sprt{\bfytis - \barbfyt}^2 + \frac18 \sumT \sumK p_t \sprt{i} \sprt{\bfyti - \barbfyt}^2} \\
    & = C \sprt{Y^2 + \sigma} \log \sprt{KT}
\end{align*}
for a sufficiently large constant $C>0$, where the third inequality follows from \Cref{eq:squaredloss-bound-barvt,eq:squaredloss-bound-vti} together with the fact that $\sqrt{a b} = \inf_{\gamma > 0} \frac{1}{2 \gamma} a + \frac{\gamma}{2} b$ for any $a, b \ge 0$, whereas the last inequality follows by the choice of $\gamma = \sprt{104 Y^2 + 48 \sigma}^{-1}$.
This completes the proof. 
\end{proof}

\section{Comparison with \citet{gokcesu2022optimal}} \label[appsec]{app:mistakes}

In this section, we provide further details about the comparison with \citet{gokcesu2022optimal}.
The first regret guarantees we compare against are those provided by their Theorem~IV.7 and Theorem~V.2.
One may immediately observe that those regret bounds present an additive term $E_T$ (multiplied by a logarithmic factor) equivalent to our $M_T = \max_{t,i} |r_t(i)|$, for which we already prove that the $\sqrt{KT}$ lower bound holds even with i.i.d.\ losses.

Hence, the main comparison is mainly with respect to Corollary~IV.8 and Corollary~V.3 in \citet{gokcesu2022optimal}.
The proof of Corollary~IV.8 and Corollary~V.3 rely on their Lemma~IV.1, which requires that, in the notation of \citet{gokcesu2022optimal}, $\eta_t|l_{t,m} - \mu_t| \leq 1$ for all~$t$ and all~$m$ for Equation~(b) in the proof of Lemma~IV.1 to be true.
However, with the learning rates given in Corollary~IV.8 and Corollary~V.3, $\eta_t|l_{t,m} - \mu_t| \leq 1$ does not hold. In Corollary~IV.8 they choose $\eta_t \geq \sqrt{\frac{1}{\sum_{s=1}^t\E_{p_s}[(l_{s,m} - \mu_s)^2]}}$, which with $\mu_s = \min_m l_{s,m}$ for $t = 1$ with $l_{1, m} = 0$ if $m \neq K$ and $l_{1, K} = 1$, can be seen to lead to $\eta_t|l_{t,m} - \mu_t| \geq \sqrt{K} > 1$. With Corollary~V.3, we run into a similar issue. We do not see a way to fix these issues. 

\section{Details on the Experiments}\label[appsec]{app:exp}

\begin{figure}[t]
\centering
\caption{Results of experiments with heavy-tailed losses.}
\label{fig:heavyexp}
\includegraphics[width=0.8\textwidth]{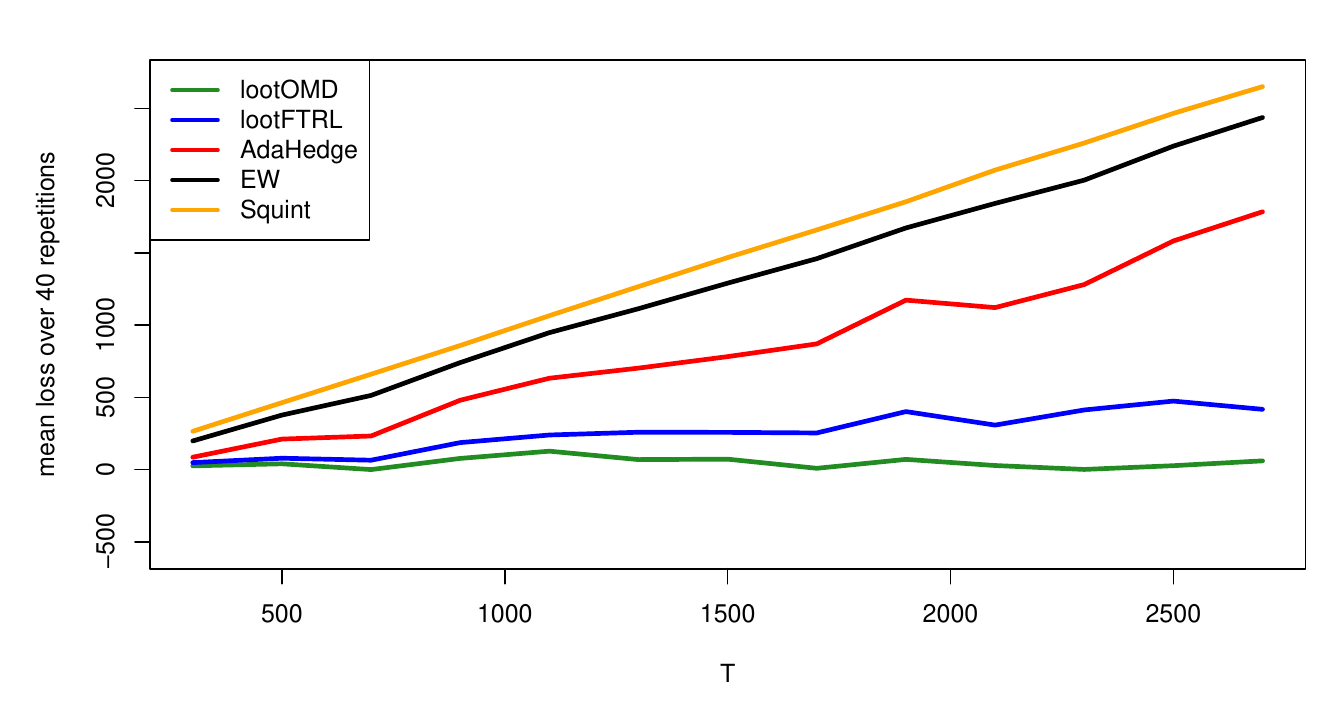}
\end{figure}

\begin{figure}[t]
\centering
\caption{Results of experiments with heavy-tailed losses.  Dotted lines represent mean $\pm$ one standard deviation.}
\label{fig:heavyexpsd}
\includegraphics[width=0.8\textwidth]{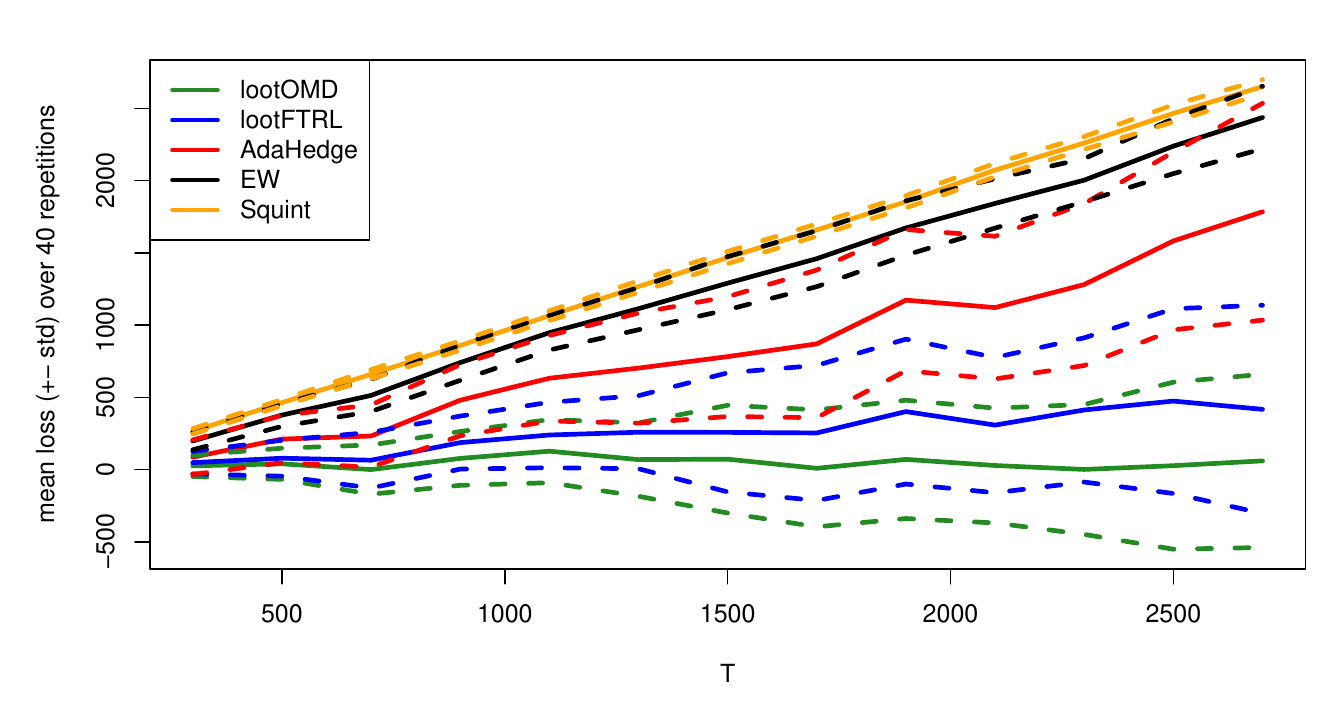}
\end{figure}

In this section we provide details on the experiments.
All experiments were run on a Macbook Air with 8GB of RAM and an Apple M2 processor.
We ran two sets of experiments, and the experiments in each set were run for $K \in \{15, 25, \ldots, 135\}$.
For each instance we set $T = 20K$.

In the first set of experiments the losses mimic the construction we used in \Cref{sec:lowerbounds}.
The expert losses were equal to $\ell_t(i) = \id[i \neq 1] + \varepsilon_{t,i}$, where $\varepsilon_{t,i} = \zeta_{t,i} X_{t,i}$ with $\zeta_{t,i}$ a Rademacher random variable and
\begin{align*}
    X_{t,i} = \begin{cases}
        0 & \text{w.p. } 1 - 1/T \\
        \sqrt{KT} & \text{w.p. } 1/T
    \end{cases}
    \,.
\end{align*}
In the second set of experiments we use a similar construction, where the losses were equal to $\ell_t(i) = \id[i \neq 1] + \varepsilon_{t,i}$, where $\varepsilon_{t,i} = \zeta_{t,i} \tilde X_{t,i}$ with $\zeta_{t,i}$ a Rademacher random variable and
\begin{align*}
    \tilde X_{t,i} = \begin{cases}
        0 & \text{w.p. } 1 - 1/T \\
        2 & \text{w.p. } 1/T
    \end{cases}
    \,.
\end{align*}

\begin{figure}[t]
\centering
\caption{Results of experiments with non-heavy-tailed losses.}
\label{fig:nonheavyexp}
\includegraphics[width=0.8\textwidth]{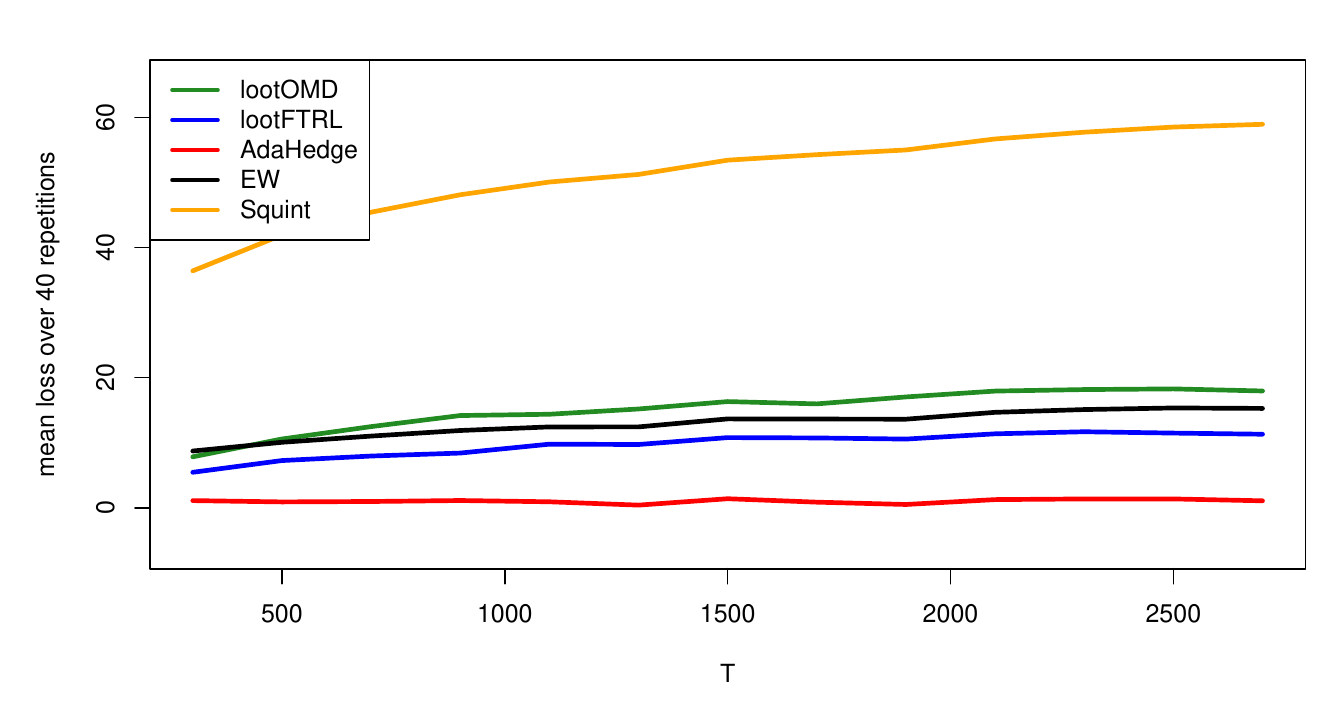}
\end{figure}

\begin{figure}[t]
\centering
\caption{Results of experiments with non-heavy-tailed losses. Dotted lines represent mean $\pm$ one standard deviation.}
\label{fig:nonheavyexpsd}
\includegraphics[width=0.8\textwidth]{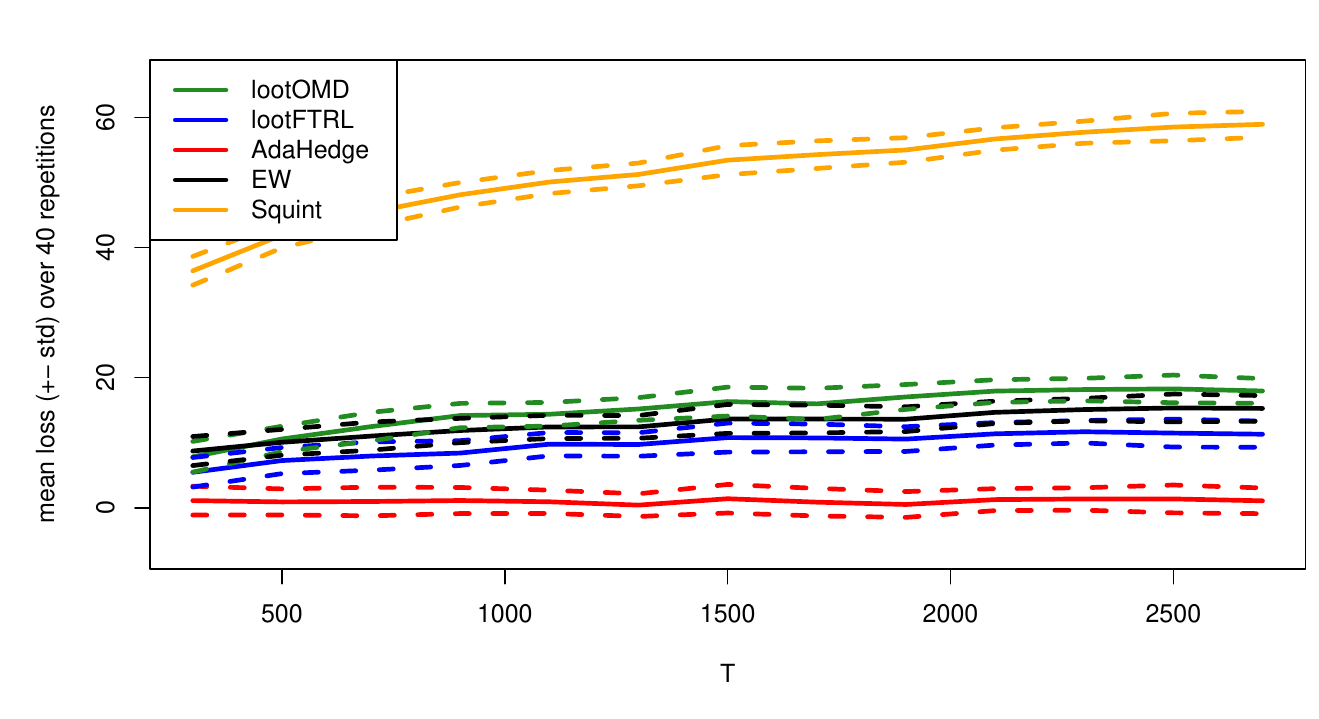}
\end{figure}

The algorithms we implemented were Squint with the improper prior \citep{koolen2015second}, AdaHedge \citep{derooij2014follow}, and an instance of Exponential Weights (EW) algorithm akin to the algorithm of \citep{cesa2007improved}.
Specifically, we ran the FTRL version of exponential weights on the instantaneous regrets with learning rate $\eta_t = \min\Bigl\{\frac{1}{\max_{t,i}|\ell_t(i)|}, \sqrt{\frac{\log(K)}{\sum_{s\leq t}\bar v_t}}\Bigr\}$.
We gave this instance of EW the maximum loss, otherwise it would not provide any guarantees.
We could have also opted for a doubling trick, but this is known to deteriorate performance.
Likewise, we gave Squint the value of $\max_{t,i}|\ell_t(i)|$, as this is a required parameter for Squint.
The algorithm of \citep{mhammedi2019lipschitz} could have instead been used to learn $\max_{t,i}|\ell_t(i)|$ online, but seeing that a similar idea as the doubling trick is part of their algorithm, we suspect this would only deteriorate performance.

In the first set of experiments we expect Squint, AdaHedge, and EW to perform poorly due to the issues described in \Cref{sec:lowerbounds}.
In the second set of experiments we expect similar behaviour from all algorithms.

As can be seen from the results in \Cref{fig:heavyexp}, algorithms not tailored to adapt to $\theta$ fare considerably worse in the heavy-tailed loss setting we consider.
We therefore conclude that the lower-order terms in the regret bounds of these algorithms are not an artefact of the analysis, but rather represent the problematic behaviour of these algorithms in the face of heavy-tailed losses.

The results for the second set of experiments is similarly as expected with one exception: the performance of Squint.
Squint fares considerably worse than the other algorithms.
However, upon inspection, it seems that the performance of Squint is still below what is predicted by theory.
The regret bound of Squint contains a $15 \log(1 + K(2 +  \log(T + 1)))$ term, which for the smallest values of $K, T$ is slightly larger than $71$.

\end{document}